\documentclass[11pt]{article}
\usepackage{dilab_arxiv}
\usepackage{amsmath,amsfonts,bm}

\def\eqref#1{equation~\ref{#1}}
\def\1{\bm{1}}

\DeclareMathAlphabet{\mathsfit}{\encodingdefault}{\sfdefault}{m}{sl}
\SetMathAlphabet{\mathsfit}{bold}{\encodingdefault}{\sfdefault}{bx}{n}

\def\gA{{\mathcal{A}}}

\def\gG{{\mathcal{G}}}

\def\gS{{\mathcal{S}}}

\newcommand{\KL}{D_{\mathrm{KL}}}
\newcommand{\TV}{D_{\rm TV}}

\usepackage{enumitem}
\usepackage{flafter}
\usepackage{wrapfig}
\usepackage{multicol}
\usepackage{subcaption}
\usepackage{placeins}
\usepackage{bookmark}
\usepackage{hyperref}
\usepackage{url}
\usepackage{amssymb}
\usepackage{mathtools}
\usepackage{amsthm}
\usepackage{tcolorbox}
\usepackage{array}
\usepackage{algorithm}
\usepackage{algorithmic}
\usepackage{multirow}
\usepackage{colortbl}
\usepackage{booktabs}
\usepackage[capitalize,noabbrev]{cleveref}
\usepackage[page,header]{appendix}
\apptocmd{\thebibliography}{\raggedright}{}{}

\newtheorem{theorem}{Theorem}[section]
\newtheorem{proposition}[theorem]{Proposition}

\setlist[itemize]{leftmargin=*,itemsep=3pt,topsep=5pt}

\hypersetup{
  pdftitle={FlexLoop: Depth-Elastic Looped Policies for Adaptive Test-Time Computation in Deep RL},
  pdfauthor={Xun Wang, Ruishuo Chen, Yu Chen, Zhuoran Li, and Longbo Huang},
  pdfsubject={FlexLoop: Depth-Elastic Looped Policies for Adaptive Test-Time Computation in Deep RL},
  pdfkeywords={reinforcement learning, looped architecture, test-time adaptation},
  bookmarksnumbered=true
}

\title{FlexLoop: Depth-Elastic Looped Policies for Adaptive Test-Time Computation in Deep RL}
\runningtitle{Depth-Elastic Looped Policies for Adaptive Test-Time Computation in Deep RL}
\date{arXiv preprint, September 2026}
\paperlogo{\includegraphics[height=1.5cm]{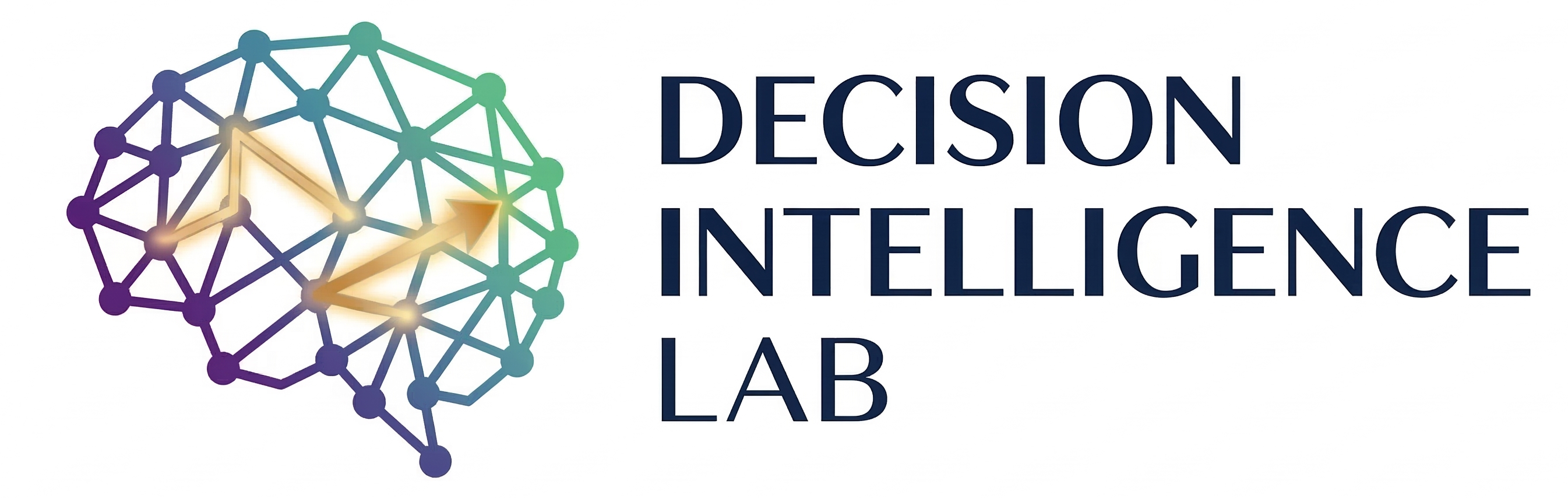}}
\author{
  Xun Wang$^{1}$, Ruishuo Chen$^{1}$, Yu Chen$^{1}$, Zhuoran Li$^{1}$, and
  Longbo Huang$^{1\,\text{\faEnvelope}}$
  \\[0.3em]\normalfont
  $^1$Institute for Interdisciplinary Information Sciences, Tsinghua University
  \\
  \text{\faEnvelope}\ Correspondence: \href{mailto:longbohuang@tsinghua.edu.cn}{\texttt{longbohuang@tsinghua.edu.cn}}
}

\begin{document}
\maketitle
\thispagestyle{fancy}

\begin{abstract}
Looped architectures scale computation by reusing the same parameters across recurrent steps, and recent work shows that they substantially improve deep reinforcement learning policies on long-horizon tasks. Since recurrent depth directly controls computation, one may expect looped policies to naturally support elastic inference across recurrent depths. Surprisingly, we find that pretrained looped policies exhibit severe recurrent-depth specialization: reliable decisions are concentrated near the full trained depth, tying deployment computation to this depth even when less computation may suffice. Achieving depth elasticity, i.e., reliable decisions across recurrent depths with adaptive computation at deployment, therefore remains a key challenge. To address this, we propose FlexLoop, a novel post-training framework that converts pretrained fixed-depth looped policies into depth-elastic policies. FlexLoop keeps training on the original RL objective to preserve full-depth capability while performing adjacent-depth policy distillation to progressively transfer decision quality from deeper to shallower recurrent steps. The resulting policy supports reliable inference across recurrent depths and enables state-wise adaptive inference through recurrent-depth consistency. 
Experiments on $30$ online and offline long-horizon goal-conditioned environments show that FlexLoop preserves full-depth performance while making shallower depths effective. Keeping competitive performance, FlexLoop reduces average recurrent depth by up to $\bf{43}$\textbf{\%} and achieves up to $\bf{1.34\times}$ wall-clock speedup in a stress test.
\end{abstract}

\section{Introduction}
\label{sec:intro}
\begin{wrapfigure}{r}{0.39\textwidth}
  \vspace{-12pt}
  \centering
  \includegraphics[width=0.4\textwidth]{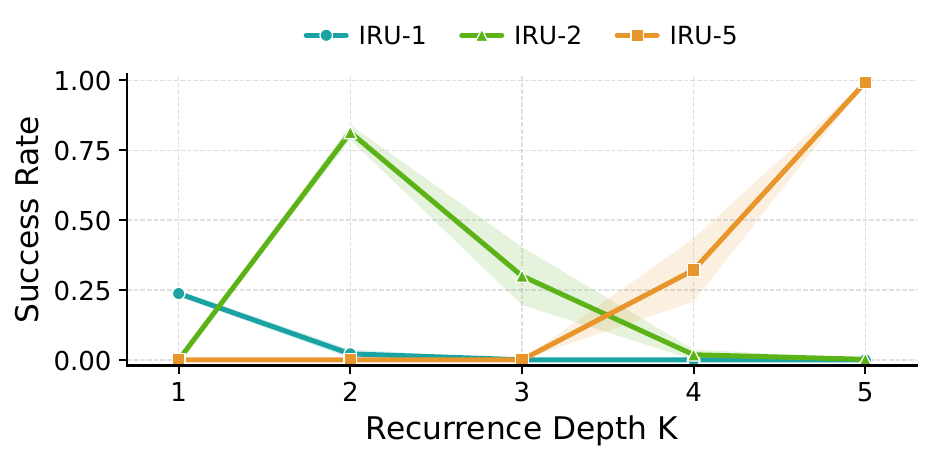}
  \caption{Fixed-depth training induces recurrent-depth specialization. Each IRU-$K$ is trained at the $K$-th recurrent depth. Reliable decisions are concentrated near the full trained depth. 
  % \ruishuo{The figure also shows failure in the other direction: IRU-1/IRU-2 collapse when run deeper than their training depth, but the caption only mentions shallower depths. Suggest: ``Performance peaks at the training depth and degrades sharply at both shallower and deeper depths.''}
  }
  \label{fig:moti}
  \vspace{-12pt}
\end{wrapfigure}
Scaling model capacity has been a central driver of progress in modern machine learning, with larger models enabling increasingly capable behavior across language modeling \citep{kaplan2020scalinglawsneurallanguage, wei2022emergent}, robotics \citep{zitkovich2023rt, kim2024openvla}, and deep reinforcement learning (DRL) \citep{wang2025thousand}. Beyond increasing parameter count, a complementary line of work has explored looped architectures \citep{dehghani2019universal, yang2024looped,goyal2026elt}, which repeatedly apply a shared computation block to increase effective computational depth while keeping the number of trainable parameters fixed. By scaling computation rather than model size, such architectures provide a parameter-efficient way to expand computational capacity.

Recent work on looped recurrent policies, such as IRU \citep{ghugare2026on}, has shown that recurrent computation provides an effective scaling dimension for DRL, with additional recurrent steps substantially improving planning and decision making on long-horizon tasks without increasing policy parameter count. This naturally suggests recurrent depth as a deployment-time computation budget, analogous to recent looped language models that support budget-aware or adaptive inference \citep{jeddi2026loopformer, popescu2026adaptive, schwethelm2026depthadaptive}.
Ideally, such policies shall exhibit \textit{depth elasticity}, namely, the ability of a single policy to maintain reliable decision quality across multiple recurrent depths and flexibly adjust its inference computation at
deployment.
% allowing a single policy to maintain decision quality across recurrent depths by allocating less computation to simple states and more computation to challenging decisions.
% \rsedit{Ideally, such policies should exhibit \textit{depth elasticity}: a single policy maintains decision quality across recurrent depths, so that it can spend less computation on simple states and more on challenging ones.}

Surprisingly, as shown in Figure~\ref{fig:moti}, standard RL training does not yield such elasticity. Instead, looped policies trained with a fixed recurrent depth exhibit severe recurrent-depth specialization, with reliable decision-making concentrated at the full training depth and performance deteriorating sharply at shallower depths.
This specialization ties deployment to the full recurrent depth. Although the full-depth policy itself is strong, always executing the maximum recurrent depth may use more computation than necessary for some decisions and limit adaptation to deployment resource constraints.
% Although the full-depth policy already provides strong decision-making capability, always executing the maximum recurrent depth may allocate more computation than necessary for some decisions and limit adaptation to deployment resource constraints.
% \rsedit{The full-depth policy is already strong, but always running it at full depth may spend more computation than needed on easy decisions and cannot adapt to resource constraints at deployment.}
This motivates a key question:

% Thus, although recurrent depth is adjustable by architecture, it remains tightly coupled to the training depth once trained, limiting deployment-time computation--performance trade-offs. This motivates a key question:
% \ruishuo{Readers of Figure~\ref{fig:moti} will also notice that extrapolation fails (IRU-1/IRU-2 collapse beyond their training depth), yet the story only discusses shallower depths and FlexLoop only targets $k<K$; I did not find this addressed later in the paper. Suggest acknowledging it and scoping explicitly, e.g., ``Running beyond the training depth does not help either, so the trained depth effectively fixes the usable computation. Since a strong full-depth policy is the natural starting point for deployment, we focus on making shallower depths reliable, so that computation can be reduced without retraining.'' Extrapolation ($k>K$) could then go to limitations/future work.} 

\textbf{\textit{Can we preserve the full-depth capability of a looped RL
policy while making its shallower recurrent depths reliable for decision making?}}

Addressing this question is nontrivial. Empirically, we find that looped architectures have sufficient capacity to support useful policies across multiple recurrent depths, yet naively training these depths jointly suppresses full-depth performance, while relying on distillation alone can degrade generalization to unseen tasks.
% \ruishuo{Empirically, we find that looped architectures have sufficient capacity to support useful policies across multiple recurrent depths, yet naively training these depths jointly suppresses full-depth policy performance, while directly distilling the full-depth policy to shallower depths fits the training tasks but fails to generalize to unseen ones, even at full depth.}

To resolve this dilemma, we propose FlexLoop, a novel post-training framework that converts a pretrained fixed-depth looped policy, which is reliable primarily at its training depth, into an elastic-depth policy that remains effective across recurrent depths.
% \ruishuo{To resolve this dilemma, we propose FlexLoop, a novel post-training framework that converts a pretrained IRU policy that is reliable only at its training depth into an elastic-depth policy that remains reliable across recurrent depths.} 
Starting from the pretrained policy, FlexLoop keeps training on the original RL objective to preserve full-depth performance and out-of-distribution generalization, while introducing adjacent-depth policy distillation to progressively propagate decision quality toward shallower recurrent steps. The resulting consistency across successive depths further provides a natural signal for state-wise adaptive inference, allowing computation to terminate once the policy has sufficiently stabilized.

Across 30 online and offline long-horizon goal-conditioned environments with
discrete and continuous action spaces, FlexLoop preserves full-depth performance while making shallower recurrent depths reliable, improving the deployment-time computation--performance trade-off.
With state-wise adaptive inference, FlexLoop automatically allocates recurrent computation according to state difficulty. It reduces the average recurrent depth by up to 43\% and achieves up to $1.34\times$ wall-clock speedup in a large-scale stress test, while maintaining competitive or better task performance.

In summary, our main contributions are:
\begin{itemize}[itemsep=2pt, topsep=2pt]
    \item We identify recurrent-depth specialization in existing looped RL policies: despite their capacity to operate across recurrent depths, fixed-depth training fails to induce depth elasticity, concentrating reliable decisions at the training depth and limiting adaptation to deployment computation constraints.
    % \ruishuo{We identify recurrent-depth specialization in existing looped RL policies: although recurrent depth is architecturally adjustable, policies trained at a fixed depth act reliably only at that depth and become unreliable at other depths, limiting deployment-time computation--performance trade-offs.}

    % \item We propose FlexLoop, a novel post-training framework that induces depth elasticity in pretrained looped RL policies by continuing the original RL objective to preserve full-depth capability and performing adjacent-depth policy distillation to transfer decision quality from deeper to shallower recurrent depths, enabling state-wise adaptive-depth inference at deployment without retraining.
    \item We propose FlexLoop, a novel post-training framework that induces depth elasticity in pretrained looped RL policies. It keeps training on the original RL objective to preserve full-depth capability and performs adjacent-depth policy distillation to transfer decision quality from deeper to shallower recurrent depths, enabling state-wise adaptive inference at deployment without retraining.

    \item Across 30 online and offline long-horizon goal-conditioned environments, FlexLoop preserves full-depth performance while enabling reliable decisions at shallower recurrent depths. State-wise adaptive inference reduces average recurrent computation by up to 43\% and achieves up to $1.34\times$ wall-clock speedup, while maintaining competitive or better task performance.
\end{itemize}
\section{Related Work}%add more recent works.
\label{sec:related}
\paragraph{Computation Scaling in DRL.}
Prior work has incorporated additional computation into DRL through planning and iterative value estimation
\citep{tamar2016value, silver2017predictron, farquhar2018treeqn}, while recent studies show that increasing network depth can substantially improve long-horizon decision making \citep{wang2025thousand}. More recently, \citet{ghugare2026on} showed that recurrent computation can scale RL capability without increasing policy parameter count. We build on this direction by investigating whether recurrent policies can remain effective across inference depths. We find that standard fixed-depth training instead leads to recurrent-depth specialization, highlighting the need for
elastic recurrent policies.

\paragraph{Looped Architectures and Elastic Depth.}
Looped architectures, which reuse the same parameters across multiple computation steps, have been studied for iterative reasoning and parameter-efficient computation \citep{dehghani2019universal, yang2024looped,zhu2026scalinglatentreasoninglooped,du2026loop}. Recent work extends looped models to variable inference budgets. For example, LoopFormer trains language models across different recurrent depths \citep{jeddi2026loopformer}, while ELT uses intra-loop self-distillation to transfer behavior from high-compute to intermediate depths in visual generation \citep{goyal2026elt}. Others study adaptive stopping and efficient depth-adaptive inference in looped language models \citep{popescu2026adaptive, schwethelm2026depthadaptive}. However, directly extending such depth elasticity to RL is challenging. Unlike settings with explicit targets, RL provides no ground-truth intermediate decisions, as policy quality is ultimately determined by long-horizon return rather than local output agreement. Achieving reliable depth elasticity for looped RL policies therefore remains underexplored.

\section{Preliminary}
\label{sec:preliminary}
\paragraph{Goal-Conditioned RL.}
We consider a goal-conditioned Markov decision process $\mathcal{M}$ $=$ $(\mathcal{S},\mathcal{A},\mathcal{G},P,r,$ $p_0,\gamma)$, where $\mathcal{S}$, $\mathcal{A}$, and $\mathcal{G}$ denote the state, action, and goal spaces, respectively; $P(\cdot\mid s,a)$ is the transition kernel; $r(s,a,g)$ is the goal-conditioned reward; $p_0(s,g)$ is a joint distribution over initial states and goals; and $\gamma\in[0,1)$ is the discount factor. At the beginning of each episode, $(s_0,g)\sim p_0$, and a goal-conditioned policy $\pi(a\mid s,g)$ induces trajectories according to $a_t\sim\pi(\cdot\mid s_t,g)$ and $s_{t+1}\sim P(\cdot\mid s_t,a_t)$. The objective is to maximize
$$J(\pi)=\mathbb{E}\left[\sum_{t=0}^{H-1}\gamma^t r(s_t,a_t,g)\right].$$
We consider both online and offline settings. In the latter, the agent learns from a fixed dataset \(\mathcal{D}\) without further environment interaction.
\paragraph{Looped Policies and IRU.}
Looped architecture \citep{jeddi2026loopformer,zhu2026scalinglatentreasoninglooped,du2026loop} increase computational depth by repeatedly applying a
shared computation block, without introducing depth-specific parameters.
Let $x$ denote the network input and $h_k$ the representation after $k$
recurrent steps. We write a looped computation abstractly as
\[
    h_{k+1}=F_\theta(h_k,x),
\]where $F_\theta$ is the computation block whose parameters
$\theta$ are shared across recurrent depth.

Recent work \citep{ghugare2026on} introduced the Interpolation Recurrent
Unit (IRU) as a minimal instantiation of this architecture and demonstrated
that scaling its recurrent depth can substantially improve long-horizon
decision making in DRL. We therefore focus primarily on IRU-based policies.
Given recurrent state $c_k$, IRU computes
\[
\begin{aligned}
    f_k &=
    \sigma\!\left(
        \mathrm{FORGET}_\theta([x,\tanh(c_k)])
    \right), \\
    I_k &=
    \tanh\!\left(
        \mathrm{INPUT}_\theta([x,\tanh(c_k)])
    \right), \\
    c_{k+1}
    &=
    f_k\odot c_k+(1-f_k)\odot I_k .
\end{aligned}
\]
At each depth $k$, an output head produces $z_k=G_\theta(c_k)$, which may
correspond to action values $Q_k(s,g,\cdot)$, a policy
$\pi_k(\cdot\mid s,g)$, or a value estimate $V_k(s,g)$ depending on the
RL algorithm. We denote the full recurrent depth by $K$. While standard IRU policies are
optimized through $z_K$, we aim to make $z_k$ reliable for $k<K$ while
preserving full-depth capability.

\section{Method: FlexLoop}
\label{sec:method}
\begin{figure}[!t]
    \centering
    \includegraphics[width=\linewidth]{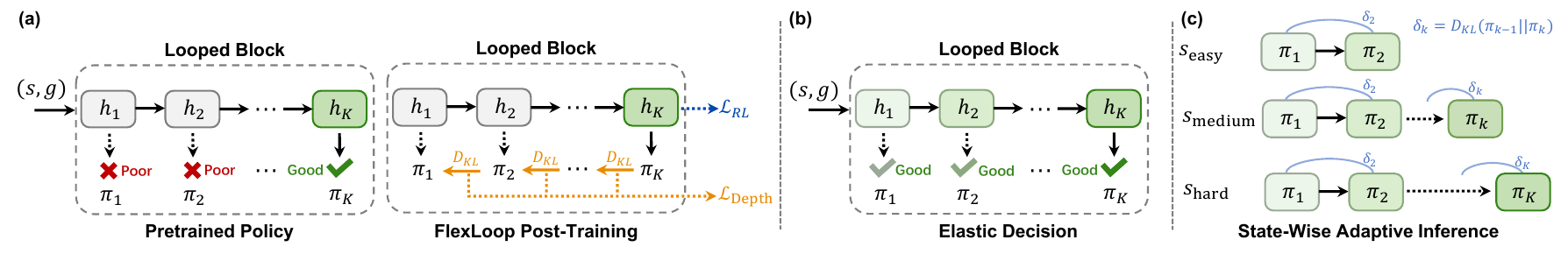}
    \caption{Framework of FlexLoop.\textbf{(a)} Starting from a depth-specialized looped policy, FlexLoop preserves full-depth capability with the original RL objective while propagating decision quality to shallower depths via adjacent-depth policy distillation. \textbf{(b)} The resulting policy supports reliable decisions across recurrent depths. \textbf{(c)} Adjacent-depth policy discrepancy serves as a stopping signal for state-wise adaptive inference, allocating more computation to harder states.}
    \label{fig:algo}
\end{figure}
As shown in Figure \ref{fig:algo}, we propose \textbf{FlexLoop}, a novel post-training framework for converting a depth-specialized looped RL policy into an elastic one. Starting from a pretrained loop policy, FlexLoop keeps training on the original RL training while introducing adjacent-depth policy self-distillation. The RL objective preserves the task capability acquired at full depth, while depth-wise distillation progressively makes shallower recurrent depths reliable for decision making, and induces a natural policy-consistency signal for state-wise adaptive-depth inference.

% We now introduce \textbf{FlexLoop}, a post-training framework for converting
% a depth-specialized looped RL policy into an elastic one. FlexLoop starts
% from a pretrained fixed-depth teacher and trains a new recurrent student
% whose decisions remain reliable across multiple recurrent depths.
% The central idea is to separate two roles during post-training:
% teacher-guided reinforcement learning preserves high-compute task capability,
% while depth-wise distillation transfers decision quality from deeper to
% shallower recurrent steps.

\subsection{Elastic Post-Training}
\label{sec:flexloop_general}

FlexLoop starts from a pretrained looped policy with full recurrent
depth $K$, inheriting all pretrained model parameters. During post-training,
it continues optimization with the original algorithm-specific RL objective
$\mathcal{L}_{\mathrm{RL}}$. We denote the policy induced after $k$
recurrent steps by $\pi_k(\cdot\mid s,g)$.

\paragraph{Full-Depth RL Objective.}
FlexLoop inherits all parameters from the pretrained depth-specialized model
and continues to optimize its full-depth value functions and policy under
the same algorithm-specific RL objective $\mathcal{L}_{\mathrm{RL}}$ as in
pretraining. This retains the capability acquired at depth $K$ and
preserves the out-of-distribution generalization of the pretrained policy
while shallower recurrent depths are elasticized.

\paragraph{Progressive Depth-Wise Policy Distillation.}
To make shallower recurrent depths directly executable, FlexLoop
progressively transfers decision quality from deeper to shallower recurrent
policies. Let $\pi_k(\cdot\mid s,g)$ denote the policy induced after $k$
recurrent steps. We optimize
\begin{equation}
    \mathcal{L}_{\mathrm{depth}}
    =
    \frac{1}{K-1}
    \sum_{k=1}^{K-1}
    D\!\left(
        \operatorname{sg}[\pi_{k+1}],
        \pi_k
    \right),
    \label{eq:depth_distill}
\end{equation}
where $D$ is an algorithm-specific policy discrepancy and $\operatorname{sg}[\cdot]$ denotes stop-gradient. This progressive distillation propagates full-depth decision quality toward shallower recurrent steps, while explicitly controlling policy variation across adjacent depths. The resulting discrepancy between successive policies provides a natural signal for state-wise adaptive inference, allowing computation to stop once the policy has sufficiently stabilized. %ablation for proof?

The full FlexLoop objective is
\begin{equation}
    \mathcal{L}_{\mathrm{FlexLoop}}
    =
    \mathcal{L}_{\mathrm{RL}}
    +
    \lambda_{\mathrm{depth}}
    \mathcal{L}_{\mathrm{depth}}
    % +
    % \lambda_{\mathrm{aux}}
    % \mathcal{L}_{\mathrm{aux}},
    \label{eq:flexloop}
\end{equation}
% where $\mathcal{L}_{\mathrm{aux}}$ denotes optional algorithm-specific
% auxiliary objectives. The first term preserves full-depth task capability,
% while the second explicitly controls how decisions evolve across recurrent
% depth.

\subsection{Instantiating FlexLoop across RL Algorithms}
\label{sec:flexloop_instantiations}

Following the setting of \citet{ghugare2026on}, we instantiate FlexLoop on three representative DRL algorithms for goal-conditioned long-horizon tasks, spanning online and offline settings as well as discrete and continuous action spaces.

\paragraph{Goal-Conditioned DQN (GC-DQN).}
We first instantiate FlexLoop with a goal-conditioned variant of DQN
\citep{mnih2015human}.
% Let $Q^T$ denote the frozen pretrained teacher.
% Following the teacher-guided objective, we replace the student's
% bootstrapping value with the teacher estimate,
% \begin{equation}
%     y^T
%     =
%     r+\gamma\max_{a'}Q^T(s',g,a'),
% \end{equation}
% and optimize the student's full-depth value by
% \begin{equation}
%     \mathcal{L}_{\mathrm{TG}}^{\mathrm{DQN}}
%     =
%     \mathbb{E}_{\mathcal D}
%     \left[
%         \left(
%             Q_K^\theta(s,g,a)
%             -
%             \operatorname{sg}[y^T]
%         \right)^2
%     \right]
% \end{equation}
Since a discrete-action policy is induced directly by its action values,
we convert each depth-$k$ value output into a soft policy
\begin{equation}\label{eq:softpolicy}
    \pi_k^Q(a\mid s,g)
    =
    \frac{
        \exp\!\left(Q_k(s,g,a)/\tau\right)
    }{
        \sum_{a'}\exp\!\left(Q_k(s,g,a')/\tau\right)
    },
\end{equation}
where $\tau$ is a temperature parameter.
We then apply progressive policy distillation between adjacent depths,
\begin{equation}
    \mathcal{L}_{\mathrm{depth}}^{\mathrm{DQN}}
    =
    \frac{1}{K-1}
    \sum_{k=1}^{K-1}
    \mathbb{E}_{(s,g)\sim\mathcal D}
    \left[
        D_{\mathrm{KL}}
        \left(
            \operatorname{sg}[\pi_{k+1}^Q]
            \,\Vert\,
            \pi_k^Q
        \right)
    \right],
\end{equation}
where $\mathcal{D}$ denotes the sampled transition batch from the replay buffer.

\paragraph{Goal-Conditioned PPO (GC-PPO).}
We next instantiate FlexLoop with a goal-conditioned variant of PPO
\citep{schulman2017proximal}.
% For teacher-guided critic learning, we directly regress the student's
% full-depth value estimate $V_K^\theta$ toward the frozen teacher critic $V^T$:
% \begin{equation}
%     \mathcal{L}_{\mathrm{TG}}^{V}
%     =
%     \mathbb{E}_{(s,g)\sim\mathcal{B}}
%     \left[
%         \left(
%             V_K^\theta(s,g)
%             -
%             \operatorname{sg}[V^T(s,g)]
%         \right)^2
%     \right],
% \end{equation}
% where $\mathcal{B}$ denotes the on-policy rollout batch.
% The actor remains optimized with the standard PPO objective, with GAE
% computed from the student's own value estimates.
We transfer decision quality across recurrent depth by distilling adjacent policies:
\begin{equation}
    \mathcal{L}_{\mathrm{depth}}^{\mathrm{PPO}}
    =
    \frac{1}{K-1}
    \sum_{k=1}^{K-1}
    \mathbb{E}_{(s,g)\sim\mathcal{B}}
    \left[
        \KL
        \left(
            \operatorname{sg}[\pi_{k+1}(\cdot\mid s,g)]
            \,\Vert\,
            \pi_k(\cdot\mid s,g)
        \right)
    \right],
    \label{eq:ppo_policy_chain}
\end{equation}
where $\mathcal{B}$ denotes the on-policy rollout batch.

% We additionally distill the critic across adjacent recurrent depths, which we empirically find further improves performance:
% \begin{equation}
%     \mathcal{L}_{\mathrm{aux}}^{\mathrm{PPO}}
%     =
%     \frac{1}{K-1}
%     \sum_{k=1}^{K-1}
%     \mathbb{E}_{(s,g)\sim\mathcal{B}}
%     \left[
%         \left(
%             V_k^\theta(s,g)
%             -
%             \operatorname{sg}[V_{k+1}^\theta(s,g)]
%         \right)^2
%     \right].
% \end{equation}
% Teacher guidance anchors the full-depth critic, while student-based PPO and adjacent-depth distillation preserve on-policy learning and promote depth-consistent actor--critic behavior.

\paragraph{Goal-Conditioned IQL (GC-IQL).}
We finally instantiate FlexLoop with a goal-conditioned variant of IQL
\citep{kostrikov2022offline}, where learning is restricted to a fixed
offline dataset $\mathcal{D}$.
% Following our teacher-guided formulation, we replace the value estimates
% used to construct the IQL targets with those of the frozen teacher.
% Specifically, the full-depth value function is trained by expectile
% regression toward the teacher critic,
% \begin{equation}
%     \mathcal{L}_{\mathrm{TG}}^{V}
%     =
%     \mathbb{E}_{\mathcal D}
%     \left[
%         L^{\tau}
%         \left(
%             Q^{T}(s,g,a)-V_K^\theta(s,g)
%         \right)
%     \right],
% \end{equation}
% where $Q^{T}=\min(Q_1^{T},Q_2^{T})$ and
% $L^{\tau}$ denotes the expectile loss with the parameter $\tau$.
% The full-depth critic is similarly trained with the teacher value bootstrap,
% \begin{equation}
%     \mathcal{L}_{\mathrm{TG}}^{Q}
%     =
%     \mathbb{E}_{\mathcal D}
%     \left[
%         \left(
%             Q_K^\theta(s,g,a)
%             -
%             \operatorname{sg}[y^{T}]
%         \right)^2
%     \right],
%     \qquad
%     y^{T}=r+\gamma V^{T}(s',g).
% \end{equation}
We distill the Gaussian policy across adjacent recurrent steps,
\begin{equation}
    \mathcal{L}_{\mathrm{depth}}^{\mathrm{IQL}}
    =
    \frac{1}{K-1}
    \sum_{k=1}^{K-1}
    \mathbb{E}_{(s,g)\sim\mathcal D}
    \left[
        \widetilde{D}_{\mathrm{KL}}
        \left(
            \operatorname{sg}[\pi_{k+1}(\cdot\mid s,g)]
            \,\Vert\,
            \pi_k(\cdot\mid s,g)
        \right)
    \right],
\end{equation}
where $\widetilde{D}_{\mathrm{KL}}$ denotes the normalized policy KL used
for continuous-action distillation, avoiding the scale-inflation degeneracy
of the raw KL objective.

% Unlike GC-PPO, critic-chain distillation provides no consistent benefit in GC-IQL in our experiments, so we do not include an additional critic-side auxiliary loss.
\subsection{State-Wise Adaptive Inference}
\label{sec:adaptive}

Once a policy produces meaningful decisions at multiple recurrent depths,
the remaining question is when additional computation is useful.
We first observe that good endpoint performance alone does not, in general, guarantee good shallower decisions.

\begin{proposition}[No Shallow-Depth Guarantee]
\label{prop:no_intermediate}
For any MDP $\mathcal M$ with bounded returns, recurrent depth $K\geq2$,
and $\epsilon>0$, there exists a shared-weight recurrent policy $\Pi$ such
that
\[
    J(\Pi_K) \geq \sup_{\pi}J(\pi)-\epsilon,
    \qquad
    J(\Pi_k) \leq \inf_{\pi}J(\pi)+\epsilon,
    \quad \forall 1\le k<K,
\]
where $\sup_\pi$ and $\inf_\pi$ are taken over all stationary goal-conditioned Markov policies.
\end{proposition}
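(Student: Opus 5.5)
The plan is to build $\Pi$ explicitly, using the fact that a shared-weight recurrent block can carry a depth counter in its hidden state and route its output according to that counter. First fix two reference policies. Because returns are bounded, the supremum and infimum of $J$ over stationary Markov policies are finite. So for the given $\epsilon>0$ we can pick stationary goal-conditioned Markov policies $\pi^{+}$ and $\pi^{-}$ with
\[
J(\pi^{+})\ge \sup_\pi J(\pi)-\epsilon,\qquad J(\pi^{-})\le \inf_\pi J(\pi)+\epsilon.
\]
Neither extremum needs to be attained; near-optimal and near-pessimal policies are enough. Everything then reduces to showing that some shared-weight recurrent map $F$ together with an output head $G$ gives $\Pi_K=\pi^{+}$ and $\Pi_k=\pi^{-}$ for every $1\le k<K$. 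Once that holds, both inequalities in Proposition~\ref{prop:no_intermediate} follow immediately. Here $\Pi_k$ is itself a stationary Markov policy, since its output depends only on $(s,g)$.

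For the construction, I would use the abstract looped form $h_{k+1}=F_\theta(h_k,x)$ with $x=(s,g)$ and $z_k=G_\theta(h_k)$, as in the Preliminary section. Take the hidden state to be $h=(n,u)$, where $n$ is a counter. Initialize $h_0=(0,\cdot)$ and set
\[
F(h,x)=(n+1,\,x),\qquad G(n,x)=\begin{cases}\pi^{+}(\cdot\mid x), & n\ge K,\\ \pi^{-}(\cdot\mid x), & n<K.\end{cases}
\]
The weights are identical at every step, so this is a legitimate shared-weight recurrent policy. After $k$ steps the counter equals $k$, which gives $\Pi_K=\pi^{+}$ and $\Pi_k=\pi^{-}$ for $k<K$.

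The main obstacle is whether the proposition should hold for a specific architecture such as IRU, with its bounded $\tanh$/sigmoid gated state, rather than for an arbitrary recurrent map. To cover that case, I would replace the integer counter with a monotone scalar coordinate. For example, with the forget gate fixed at $f\equiv1/2$ and input $I\equiv1$ on that coordinate, starting from $c_0=0$ gives $c_k=1-2^{-k}$, which strictly separates $k=K$ from every $k<K$. The head then needs a threshold between $1-2^{-(K-1)}$ and $1-2^{-K}$. Getting the exact mixture $\pi^{+}$ versus $\pi^{-}$ from a continuous head requires either an idealized (discontinuous) head or an approximation argument. In the approximation route, a steep sigmoid switch reproduces the target policies up to total variation $\delta$ uniformly over states. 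Since returns are bounded by some $R_{\max}$, a per-step policy perturbation of size $\delta$ changes $J$ by at most $O\bigl(\delta R_{\max}/(1-\gamma)^2\bigr)$, or $O(H^2\delta R_{\max})$ in the finite-horizon case. Choosing $\delta$ small then absorbs this error into $\epsilon$, using $\epsilon/2$ in place of $\epsilon$ when selecting $\pi^{\pm}$.

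If the head must represent $\pi^{\pm}$ themselves, I would appeal to universal approximation of $G_\theta$ on the relevant inputs. Alternatively, since the statement only quantifies over the existence of some shared-weight recurrent policy, the idealized construction above already suffices, and the IRU refinement is only a remark.
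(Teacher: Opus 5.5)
Your proposal is correct and follows essentially the same proof as the paper: choose $\epsilon$-near-optimal and $\epsilon$-near-pessimal stationary policies $\pi^{+},\pi^{-}$, carry a depth counter $h_k=k$ in the shared recurrent state, and let the head output $\pi^{-}$ for $h<K$ and $\pi^{+}$ for $h\ge K$. The differences are cosmetic: you copy $x$ into the hidden state so the head depends on $h$ alone, whereas the paper lets $G$ read $(s,g)$ directly, and your IRU/continuous-head refinement goes beyond the paper's idealized argument without being needed for the statement.
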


Proposition~\ref{prop:no_intermediate} shows that strong full-depth
performance alone does not imply useful intermediate decisions. The
recurrent states may instead serve purely as internal computation.
The proof is given in Appendix~\ref{app:proof_no_intermediate}.

Through $\mathcal{L}_{\mathrm{depth}}$, FlexLoop explicitly controls how the policy evolves across recurrent depth.
Define the adjacent-depth discrepancy
\begin{equation}
    \epsilon_j
    =
    \sup_{s,g}
    D_{\mathrm{TV}}
    \left(
        \pi_j(\cdot\mid s,g),
        \pi_{j+1}(\cdot\mid s,g)
    \right),
\end{equation}
then we have the following proposition.
\begin{proposition}[Shallow-Depth Performance Bound]
\label{prop:depth_bound}
Assume $\forall(s,a,g)\in\gS\times\gA\times\gG, |r(s,a,g)|\leq R_{\max}$ and $\gamma<1$. Define $\delta_{\max}=\max_{1\leq j<K}
    \sup_{(s,g)}\KL\left(\pi_{j+1}(\cdot\mid s,g)\,\Vert\,\pi_j(\cdot\mid s,g)\right)$. Then, for any $1\leq k<K$,
\[
    \left|J(\pi_k)-J(\pi_K)\right|
    \leq
    \frac{2R_{\max}}{(1-\gamma)^2}
    (K-k)
    \sqrt{\frac{\delta_{\max}}{2}}.
\]
\end{proposition}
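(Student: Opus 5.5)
The plan is a telescoping argument over adjacent depths. Each adjacent pair is controlled by a standard simulation (performance-difference) bound in total variation, and Pinsker's inequality converts that total variation into the KL quantity $\delta_{\max}$. The triangle inequality first gives
\[
\left|J(\pi_k)-J(\pi_K)\right| \le \sum_{j=k}^{K-1} \left|J(\pi_j)-J(\pi_{j+1})\right|,
\]
so it suffices to show, for every $1\le j<K$,
\[
\left|J(\pi_j)-J(\pi_{j+1})\right| \le \frac{2R_{\max}}{(1-\gamma)^2}\,\epsilon_j .
\]
Here $\epsilon_j$ is the adjacent-depth total-variation discrepancy defined just before Proposition~\ref{prop:depth_bound}. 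Pinsker's inequality then gives $\epsilon_j \le \sup_{s,g}\sqrt{\tfrac12 \KL(\pi_{j+1}\Vert\pi_j)} \le \sqrt{\delta_{\max}/2}$. Summing the $K-k$ terms yields the claimed bound.

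For the pairwise bound I would use the performance difference lemma. Treat each goal $g$ as part of the state, which the goal cannot change, so $p_0$ simply fixes $(s_0,g)$. Take the standard infinite-horizon discounted return; the finite-$H$ truncation only makes the estimates smaller, since every quantity used is bounded by its infinite-horizon counterpart. Then
\[
J(\pi)-J(\pi') = \frac{1}{1-\gamma}\,\mathbb{E}_{(s,g)\sim d^{\pi}}\!\left[\sum_a \bigl(\pi(a\mid s,g)-\pi'(a\mid s,g)\bigr)\,Q^{\pi'}(s,a,g)\right].
\]
Since $|Q^{\pi'}|\le R_{\max}/(1-\gamma)$, the inner sum is at most $\tfrac{R_{\max}}{1-\gamma}\,\|\pi-\pi'\|_1 = \tfrac{2R_{\max}}{1-\gamma}\,\TV(\pi,\pi')$. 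This gives
\[
|J(\pi_j)-J(\pi_{j+1})| \le \frac{2R_{\max}}{(1-\gamma)^2}\,\epsilon_j .
\]
For continuous actions the same computation goes through with integrals in place of sums. A coupling argument is an alternative: the two policies disagree at each step with probability at most $\epsilon_j$, and the first disagreement costs at most $2R_{\max}/(1-\gamma)$ in return.

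I expect the main obstacle to be bookkeeping rather than any deep step. The constant must come out to exactly $2R_{\max}/(1-\gamma)^2$, which depends on the convention $\TV=\tfrac12\|\cdot\|_1$ and on pairing $\|\pi-\pi'\|_1$ with the bound $R_{\max}/(1-\gamma)$ on $Q$. Centering $Q$ would save a factor, so the stated bound is slightly loose, which is acceptable. Pinsker's inequality is symmetric in which argument of $\TV$ appears, so the KL direction $\KL(\pi_{j+1}\Vert\pi_j)$ in $\delta_{\max}$, matching the distillation loss, raises no issue. The bound holds for any stochastic policies, so nothing specific to the recurrent structure is needed beyond defining $\pi_k$ as the depth-$k$ output.
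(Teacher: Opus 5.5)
Your argument is correct and gives exactly the stated constant. It differs from the paper's proof in where the telescoping happens.

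\textbf{The two decompositions.} You apply the triangle inequality to returns, $|J(\pi_k)-J(\pi_K)|\le\sum_{j=k}^{K-1}|J(\pi_j)-J(\pi_{j+1})|$, and invoke the performance difference lemma $K-k$ times, once per adjacent pair. The paper instead telescopes in policy space. It first bounds $D_{\mathrm{TV}}(\pi_k(\cdot\mid x),\pi_K(\cdot\mid x))\le\sum_{j=k}^{K-1}\alpha_j$ pointwise in $x$. It then applies the performance difference lemma once, to the pair $(\pi_k,\pi_K)$. This gives the intermediate bound $\frac{2R_{\max}}{(1-\gamma)^2}\sup_x D_{\mathrm{TV}}(\pi_k(\cdot\mid x),\pi_K(\cdot\mid x))$, which is only afterwards relaxed to $(K-k)\sqrt{\delta_{\max}/2}$.

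\textbf{What each ordering buys.} Before that last relaxation, the paper's ordering is the sharper one. Its intermediate quantity is at most $\sup_x\sum_{j} D_{\mathrm{TV}}(\pi_j(\cdot\mid x),\pi_{j+1}(\cdot\mid x))\le\sum_j\epsilon_j$. So it benefits when the worst-case states of different adjacent pairs differ, or when depth-wise changes partially cancel. It also never exceeds $1$. Your $\sum_j\epsilon_j$, by contrast, accumulates over all $K-k$ pairs even when $\pi_k$ and $\pi_K$ are close. In exchange, your ordering writes the result as a sum of per-depth Lipschitz bounds $|J(\pi_j)-J(\pi_{j+1})|\le\frac{2R_{\max}}{(1-\gamma)^2}\epsilon_j$. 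These line up term by term with the adjacent-depth distillation loss. Once $\delta_{\max}$ is substituted, the two routes give the same inequality.

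\textbf{Two side remarks to adjust.} First, centering $Q$ does not save a factor here. With signed rewards $|r|\le R_{\max}$, the action-values at one state can fill the whole interval $[-R_{\max}/(1-\gamma),R_{\max}/(1-\gamma)]$. Take the two-state chain where one action earns $R_{\max}$ and stays put, while the other earns $-R_{\max}$ and moves to an absorbing state paying $-R_{\max}$ forever. Deviating with probability $\epsilon$ then costs exactly $\frac{2\epsilon R_{\max}}{(1-\gamma)(1-\gamma+\gamma\epsilon)}$ in discounted return. So the pairwise constant $\frac{2R_{\max}}{(1-\gamma)^2}$ is asymptotically tight as $\epsilon\to0$. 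Centering only helps when rewards use less than the full range, e.g.\ nonnegative rewards.

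Second, the passage from the horizon-$H$ objective is cleanest done as in the paper: augment the state with time and add a zero-reward absorbing state. The finite-horizon $Q$-functions are time-dependent, but each is still bounded by $R_{\max}/(1-\gamma)$, and the discount weights sum to at most $1/(1-\gamma)$. Your bound is therefore unaffected.
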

Proposition~\ref{prop:depth_bound} shows that controlling the largest adjacent-depth policy discrepancy directly bounds the performance gap between any shallow policy and the full-depth policy. Thus, when FlexLoop keeps neighboring recurrent policies close, shallow-depth performance cannot drift arbitrarily far from the full-depth endpoint. The proof is given in Appendix \ref{app:proof_shallow}.

\paragraph{State-Wise Adaptive Inference.}

\begin{figure}[!ht]
  \centering
  \begin{subfigure}[t]{.51\linewidth}
      \centering\includegraphics[width=\linewidth]{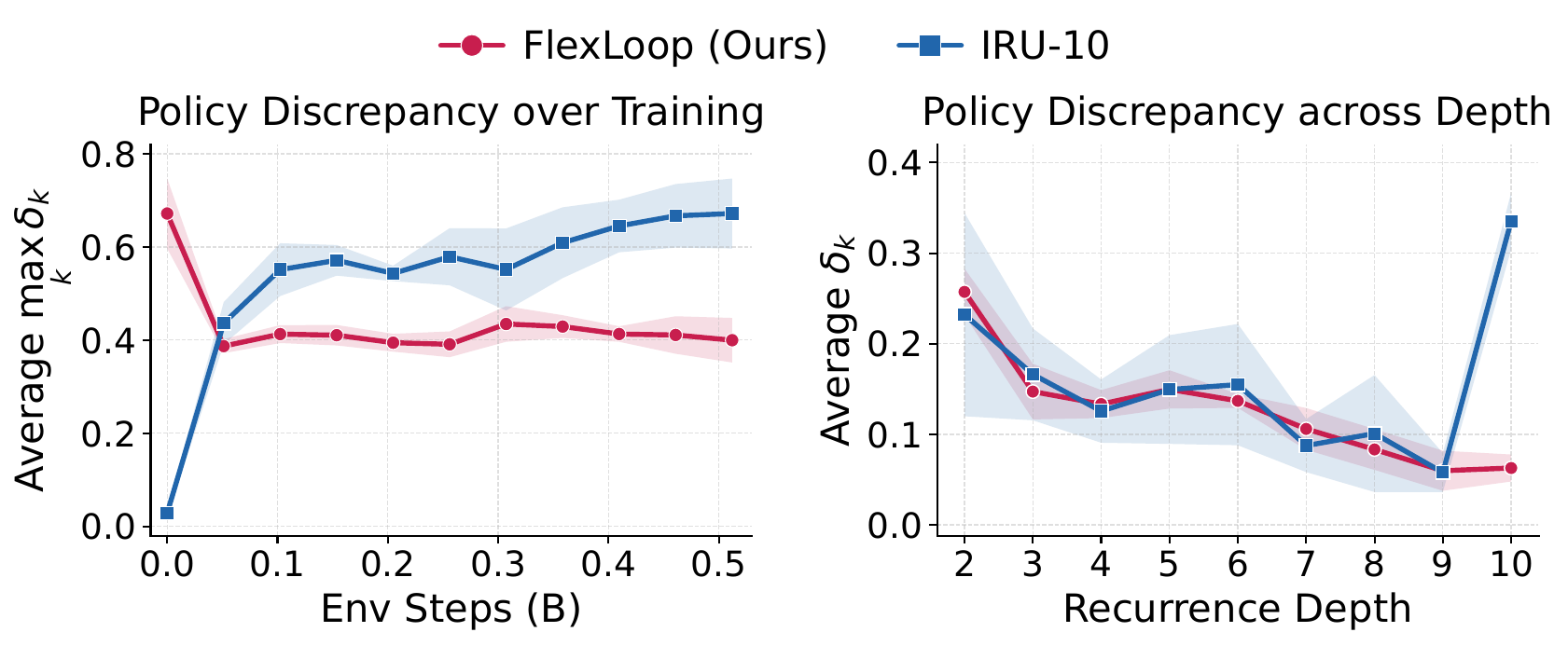}
    \caption{Reduction of adjacent-depth policy discrepancy.}
    \label{fig:delta}
  \end{subfigure}
  \begin{subfigure}[t]{.48\linewidth}
      \centering
      \includegraphics[width=\linewidth]{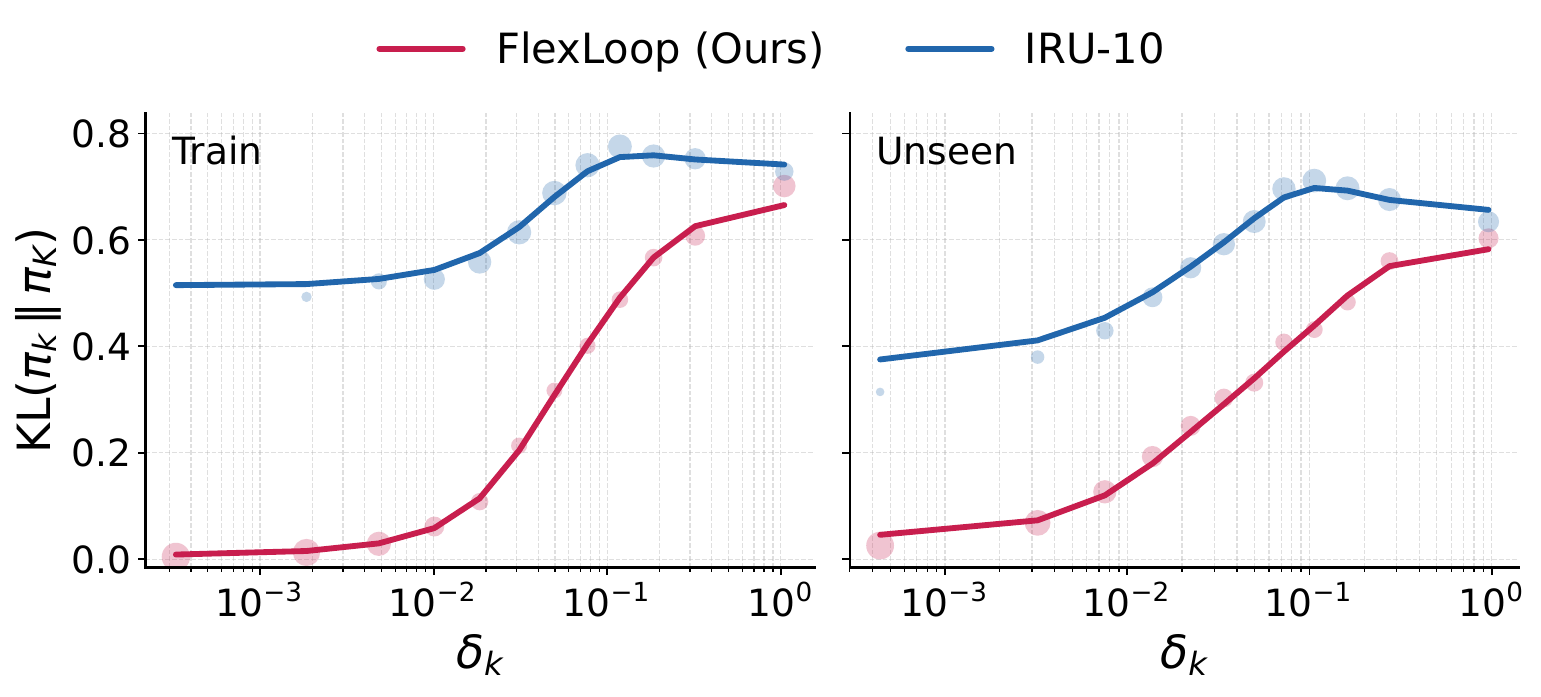}
    \caption{Local-to-endpoint policy consistency.}
    \label{fig:kl}
  \end{subfigure}
    \caption{(a) FlexLoop reduces adjacent-depth policy discrepancy during post-training and across depths. (b) Smaller \(\delta_k\) corresponds to smaller $\KL(\pi_k\Vert\pi_K)$ on both training and unseen tasks. Marker size denotes the sample fraction in each pooled quantile bin.} \label{fig:ada_evidence}
\end{figure}

Motivated by Proposition~\ref{prop:depth_bound}, we monitor the depth-wise change in the policy output,
\begin{equation}
\delta_k(s,g)
=
D\left(
\pi_{k-1}(\cdot\mid s,g),
\pi_{k}(\cdot\mid s,g)
\right),\quad k > 1
\label{eq}
\end{equation}
where $D$ is a policy-space divergence.

Figure~\ref{fig:delta} shows that FlexLoop reduces adjacent-depth policy divergence during training and across recurrent depths, while Figure~\ref{fig:kl} shows that smaller local discrepancies are consistently associated with smaller divergence to the full-depth policy on both training and unseen tasks. This suggests that $\delta_k$ is informative of endpoint proximity, motivating its use as a practical stopping signal for state-wise adaptive inference.
Accordingly, given a threshold $\varepsilon$, minimum depth $k_{\min}$, inference terminates at
\begin{equation}\label{eq:adaptive}
k^\star(s,g)
=
\min\left(
\left\{
k > k_{\min}:
\delta_k(s,g)<\varepsilon
\right\}
\cup \{K\}
\right).
\end{equation}
Varying $\varepsilon$ yields different computation--performance trade-offs from a single trained policy.

\section{Experiments}
\label{sec:experiments}
\begin{figure}[!t]
  \centering
  \begin{subfigure}[t]{\linewidth}
    \centering
    \includegraphics[width=0.32\linewidth]{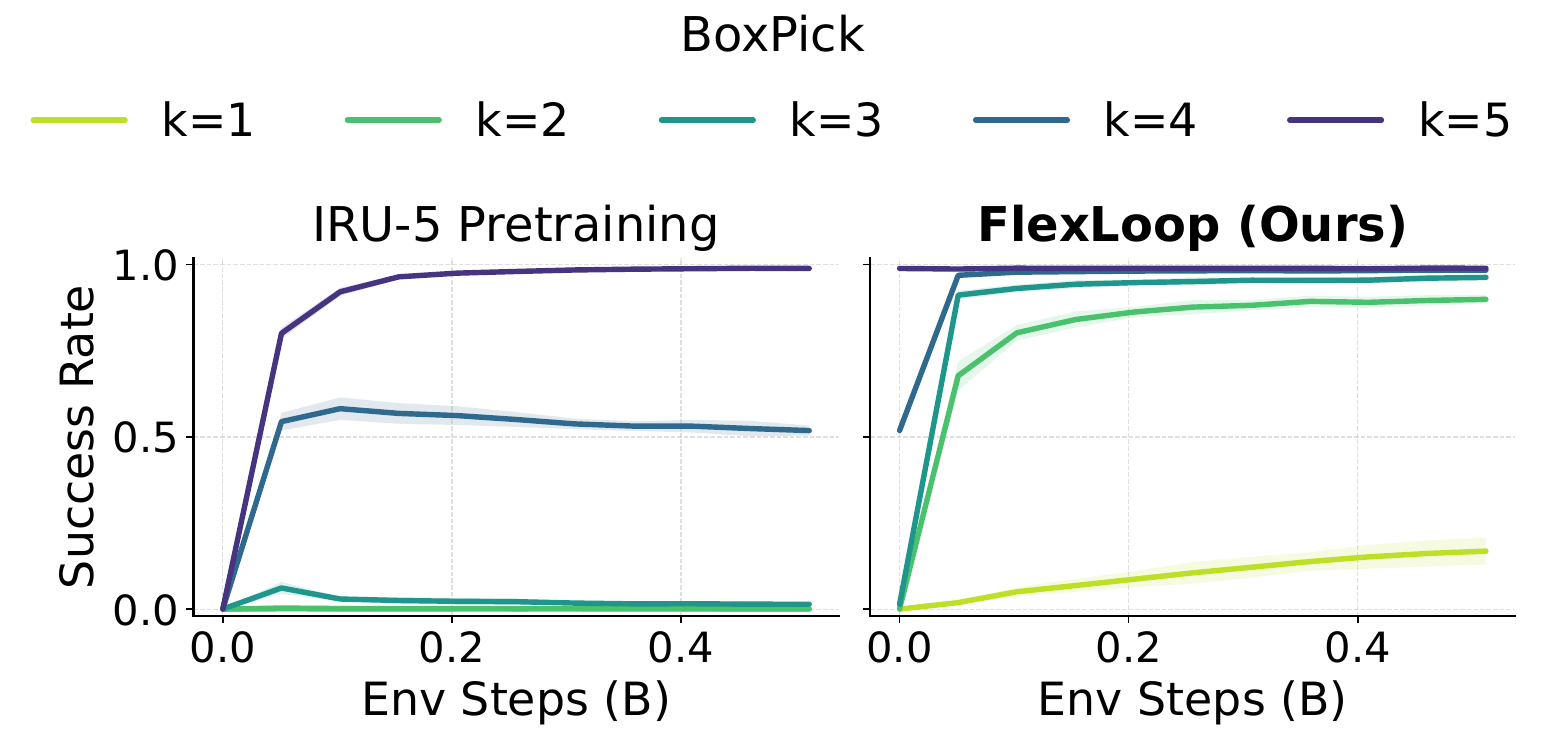}
  \hfill
    \includegraphics[width=0.32\linewidth]{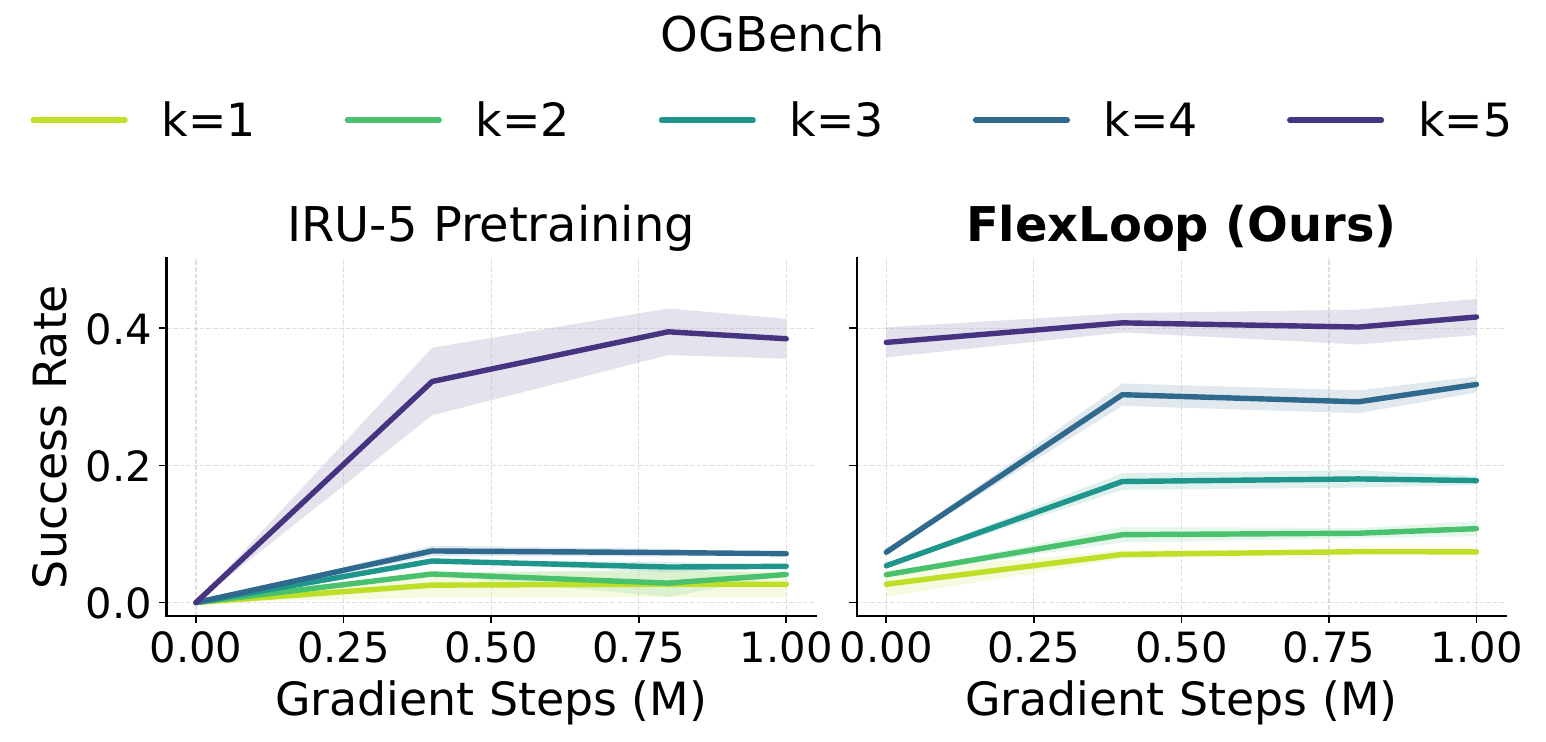}
  \hfill
    \includegraphics[width=0.33\linewidth]{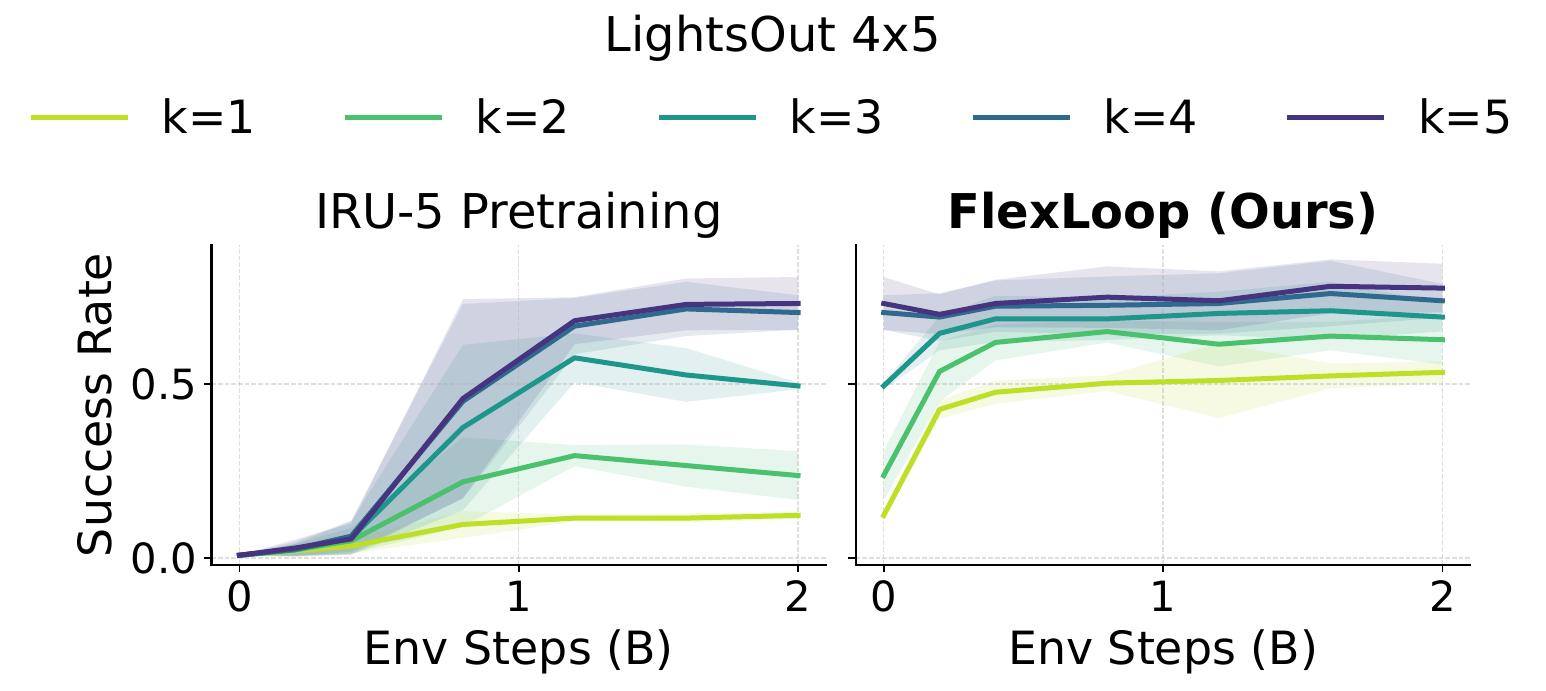}
  \caption{Depth-wise decision performance on training tasks.}\label{fig:res_main}
  \end{subfigure}
  \\
  \vspace{6pt}
  \begin{subfigure}[t]{\linewidth}
    \centering
    \includegraphics[width=0.32\linewidth]{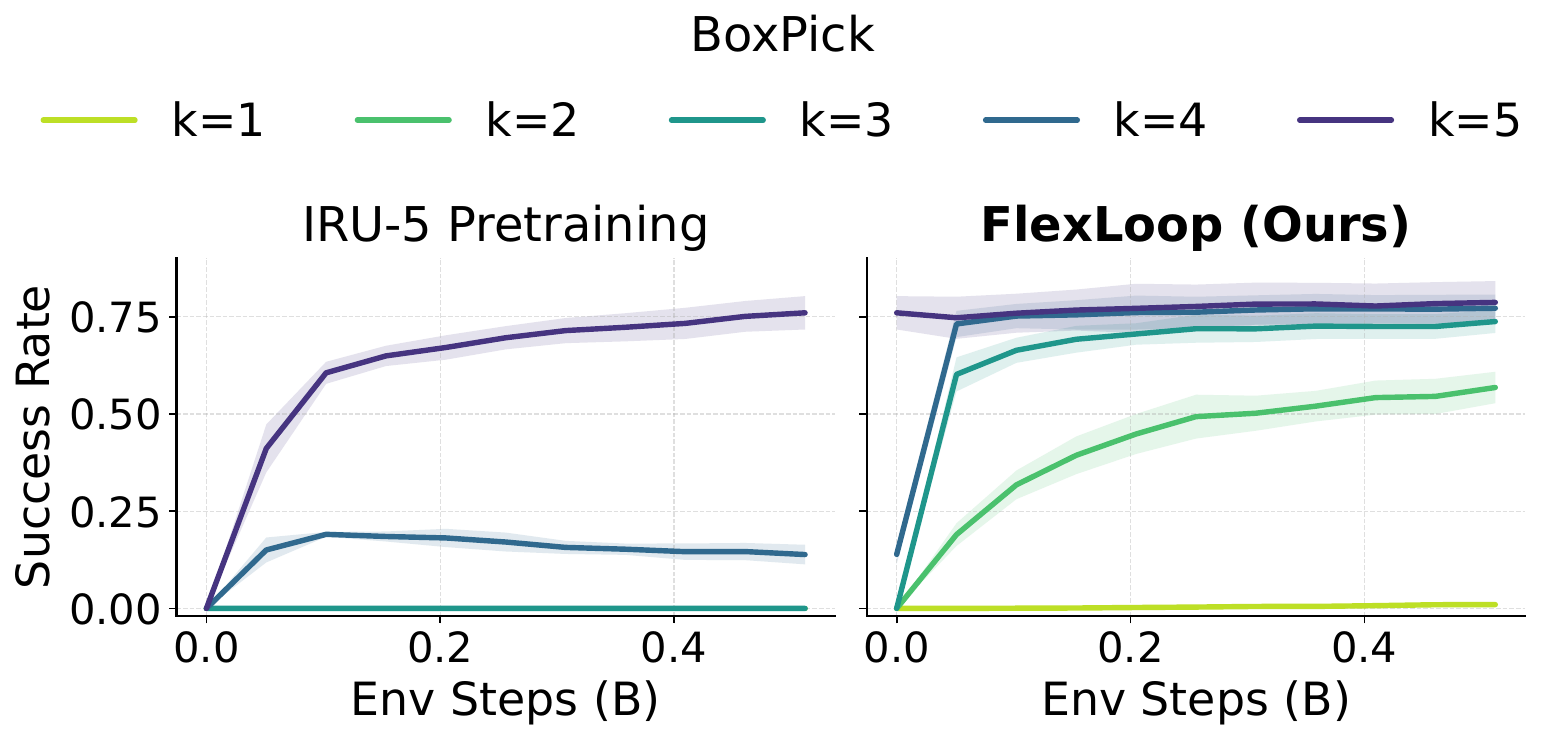}
  \hspace{2cm}
    \includegraphics[width=0.32\linewidth]{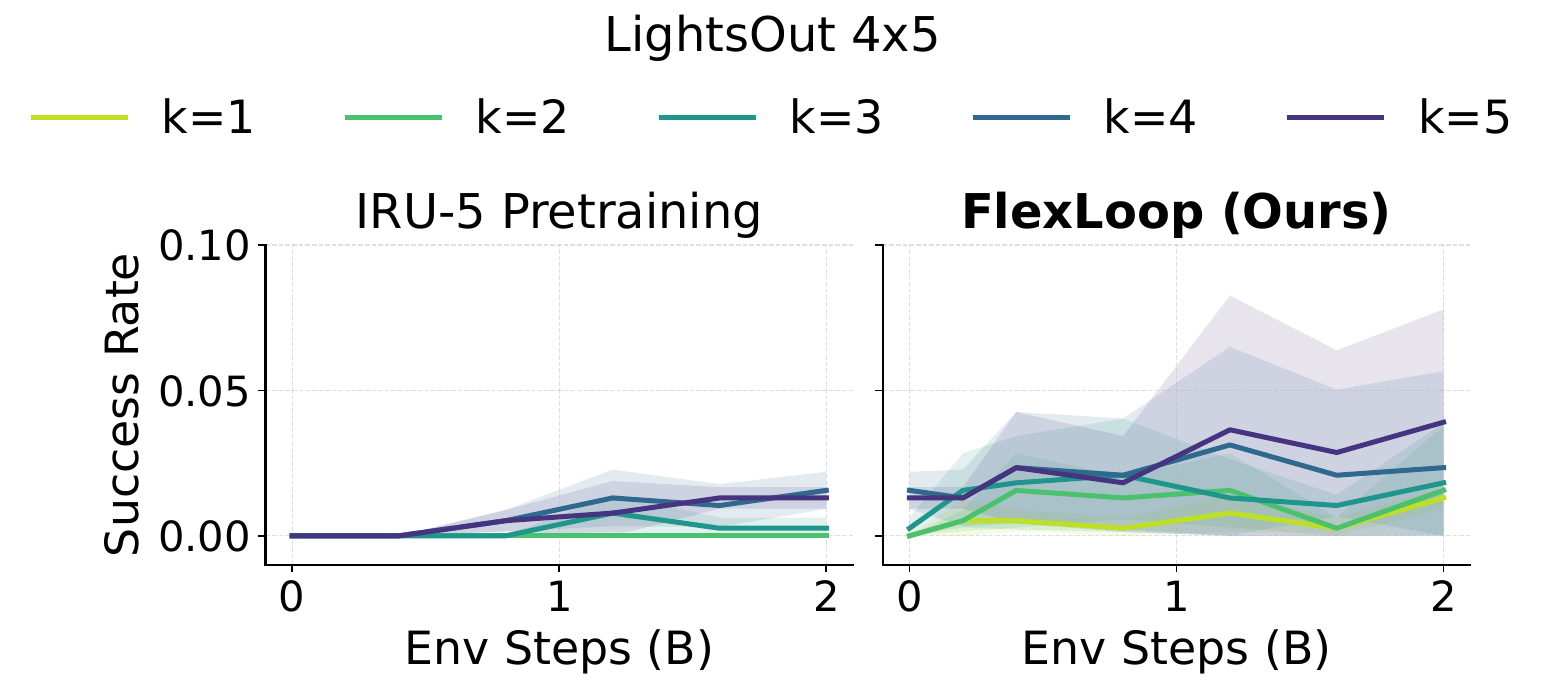}
\caption{Depth-wise decision performance on unseen tasks.}\label{fig:res_main_ood}
  \end{subfigure}
  
  \caption{
    Depth-wise decision performance of pretrained IRU-$5$ and FlexLoop on training and unseen tasks. Solid lines denote mean success rates and shading indicates one standard deviation.}
  \label{fig:res_kepoch}
\end{figure}

In this section, we evaluate FlexLoop across diverse online and offline goal-conditioned RL settings to answer three questions:
(i) can it preserve full-depth capability while enabling reliable shallow-depth decisions?
(ii) how does it trade off performance and inference compute? and
(iii) how do training design choices and recurrent architectures affect depth elasticity?
\subsection{Experimental Setup}
\paragraph{Benchmarks.}
Following \citet{ghugare2026on}, we evaluate FlexLoop on $30$ long-horizon goal-conditioned environments spanning online and offline RL, discrete and continuous action spaces, as well as in-distribution learning and out-of-distribution generalization. Specifically, we consider four environments from the BoxPick stitching benchmark \citep{bortkiewicz2025temporaldifferencelearninggold}, where an agent moves boxes on 2-D maps to designated goal locations; one LightsOut environment \citep{anderson1998turning}, formulated as a $4\times5$ binary switch puzzle; and 25 environments from OGBench \citep{ICLR2025_ecd92623}, covering three manipulation scenarios and two navigation scenarios. For BoxPick and LightsOut, we additionally evaluate generalization to unseen instances. Detailed environment descriptions are provided in Appendix~\ref{app:env_details}. We use goal-conditioned DQN for BoxPick, PPO for LightsOut, and goal-conditioned IQL for OGBench as the underlying RL algorithms.
\paragraph{Baselines and Implementation.}
We use IRU as the common looped-policy backbone and denote a pretrained depth-$K$ IRU policy as IRU-$K$. Following the results reported by \citet{ghugare2026on}, we choose $K=5$ as the default setting, as it provides strong performance while maintaining a practical computational cost. FlexLoop is initialized from the corresponding IRU-$5$ checkpoint and post-trained for a fixed number of optimization steps to acquire depth elasticity. Our implementation builds on the codebase of \citet{ghugare2026on}, retaining the same environment configurations, RL algorithms, and evaluation protocols unless otherwise stated. We report results over three random seeds. Additional implementation details and hyperparameters are provided in Appendix~\ref{app:implementation}.

\subsection{Analysis of Learned Depth Elasticity}

We evaluate FlexLoop on $30$ long-horizon goal-conditioned environments using pretrained IRU-$5$ policies. Figure~\ref{fig:res_kepoch} tracks the evolution of the recurrent-depth performance during IRU pretraining and FlexLoop post-training, showing the emergence of reliable shallow-depth policies and their generalization to unseen tasks in the BoxPick and LightsOut environments. Full environment-wise curves are provided in Appendix~\ref{app:res_kepoch}. We further compare the depth-wise performance curves of the final pretrained and post-trained policies in Appendix~\ref{app:elasticity}, providing a direct characterization of their depth elasticity. These results reveal several key observations:

% We evaluate FlexLoop on 30 long-horizon goal-conditioned environments using pretrained IRU-$5$ policies. Figure~\ref{fig:res_kepoch} tracks the evolution of recurrent-depth performance during IRU pretraining and FlexLoop post-training, showing the emergence of reliable shallow-depth policies and their generalization to unseen tasks in the BoxPick and LightsOut environments. Full environment-wise curves are provided in Appendix \ref{app:res_kepoch}

\textbf{FlexLoop improves shallow-depth decisions while preserving full-depth capability.}
Across almost all tasks in Figure~\ref{fig:res_main}, pretrained IRU policies concentrate reliable decision performance at the full depth, whereas shallower recurrent depths remain ineffective. After post-training, FlexLoop maintains or further improves the full-depth performance while rapidly increasing the performance of shallow recurrent depths. This shows that FlexLoop successfully converts a depth-specialized policy into an elastic policy without sacrificing the capability acquired by the pretrained model.

\textbf{FlexLoop generalizes depth elasticity beyond the training distribution.}
We evaluate the post-trained policies on unseen tasks to examine whether the learned depth elasticity generalizes beyond the training environments. As shown in Figure~\ref{fig:res_main_ood}, FlexLoop improves shallow-depth performance on unseen tasks while preserving the full-depth capability of the pretrained policy. This suggests that the improved shallow policies retain the general decision capability of the pretrained policy rather than simply overfitting to the training environments.

\textbf{Shallower recurrent depths are more challenging to elasticize.}
In Figure~\ref{fig:res_kepoch}, the improvement speed during post-training decreases monotonically with shallower recurrent depths. A possible explanation is that a reliable elastic policy needs to simultaneously support effective standalone decision making at shallow depths, maintain consistency across recurrent depths, and preserve robustness under distribution shifts. With fewer recurrent steps available, shallow policies have less computation capacity to refine intermediate representations and satisfy these requirements, making the shallowest depths harder to elasticize.

\textbf{The underlying RL formulation influences how FlexLoop acquires depth elasticity.}
FlexLoop achieves the largest improvements on the GC-DQN-based BoxPick benchmark, while gains are smaller on PPO and offline IQL tasks. We hypothesize that this difference arises from the coupling between value learning and policy execution. In value-based methods, improved value estimation directly determines action selection across recurrent depths. In contrast, methods with separately parameterized policies, such as PPO and IQL, rely on indirect policy updates from value learning, making depth-wise adaptation more challenging.

% full-depth preservation
% shallow-depth performance
% online OOD
% aggregate + main table

\subsection{State-wise Adaptive Inference}
FlexLoop's learned depth elasticity enables adaptive recurrent computation during deployment. We first evaluate whether adaptive inference can reduce computation while preserving performance, and then analyze the computation-performance trade-offs induced by different stopping thresholds.

\paragraph{Adaptive Inference Performance.}
\begin{wraptable}{r}{0.52\textwidth}
% \vspace{-6pt}
\centering
\caption{State-wise adaptive inference results across environments. $\pm$ indicates one standard deviation.}
\label{tab:adaptive_results}
\small
\resizebox{\linewidth}{!}{
\small
\begin{tabular}{lcccc}
\toprule
\multirow[c]{2}{*}{\textbf{Environment}} & \multicolumn{2}{c}{\textbf{FlexLoop}} & \textbf{IRU-5}  \\
\cmidrule(lr){2-3}\cmidrule(lr){4-4}
 & Avg. Depth & Succ. & Succ. \\
\midrule
OGBench & 4.19 $\pm$ 0.08 & \textbf{0.410} $\pm$ 0.020 & 0.385 $\pm$ 0.029 \\
BoxPick (Train) & 4.50 $\pm$ 0.02 & \textbf{0.989} $\pm$ 0.002 & \textbf{0.989} $\pm$ 0.003 \\
BoxPick (Unseen) & 4.46 $\pm$ 0.02 & \textbf{0.778} $\pm$ 0.055 & 0.760 $\pm$ 0.043 \\
LightsOut-4x5 (Train) & 2.84 $\pm$ 0.08 & \textbf{0.747} $\pm$ 0.081 & 0.732 $\pm$ 0.076 \\
LightsOut-4x5 (Unseen) & 2.93 $\pm$ 0.21 & \textbf{0.053} $\pm$ 0.031 & 0.013 $\pm$ 0.004 \\
\bottomrule
\end{tabular}
}
\vspace{-6pt}
\end{wraptable}

By tuning the stopping threshold $\varepsilon$ in \cref{eq:adaptive}, we select adaptive inference policies for each environment. Table~\ref{tab:adaptive_results} summarizes the resulting performance and average recurrent depth. FlexLoop maintains the performance of the full-depth policy while reducing the average recurrent depth by up to 43\%, demonstrating that learned depth elasticity enables effective inference computation reduction without sacrificing performance. Full environment-wise results are provided in Appendix \ref{app:adaptive}.
% \begin{table}[t]
% \centering
% \caption{State-wise adaptive inference results across environments. $\pm$ indicates one standard deviation.}
% \label{tab:adaptive_results}
% \small
% \begin{tabular}{lcccc}
% \toprule
% \multirow[c]{2}{*}{\textbf{Environment}} & \multicolumn{2}{c}{\textbf{FlexLoop}} & \textbf{IRU-5}  \\
% \cmidrule(lr){2-3}\cmidrule(lr){4-4}
%  & Avg. Depth & Succ. & Succ. \\
% \midrule
% OGBench & 4.19 $\pm$ 0.08 & \textbf{0.410} $\pm$ 0.020 & 0.385 $\pm$ 0.029 \\
% BoxPick (Train) & 4.50 $\pm$ 0.02 & \textbf{0.989} $\pm$ 0.002 & \textbf{0.989} $\pm$ 0.003 \\
% BoxPick (Unseen) & 4.46 $\pm$ 0.02 & \textbf{0.778} $\pm$ 0.055 & 0.760 $\pm$ 0.043 \\
% LightsOut-4x5 (Train) & 2.84 $\pm$ 0.08 & \textbf{0.747} $\pm$ 0.081 & 0.732 $\pm$ 0.076 \\
% LightsOut-4x5 (Unseen) & 2.93 $\pm$ 0.21 & \textbf{0.053} $\pm$ 0.031 & 0.013 $\pm$ 0.004 \\
% \bottomrule
% \end{tabular}
% \end{table}

\paragraph{Computation-Performance Trade-off.}
We further analyze how the stopping threshold controls the trade-off between computation and performance. By varying $\varepsilon$, we obtain the Pareto curves ( See Figure~\ref{fig:res_pareto} in Appendix \ref{app:pareto}). In general, smaller thresholds allocate deeper recurrent computation and tend to achieve higher performance, while larger thresholds reduce the average inference depth with controllable performance trade-offs. The resulting Pareto curves show that FlexLoop often matches 
\begin{wraptable}{r}{0.52\textwidth}
\vspace{-6pt}
\centering
\caption{Wall-clock comparison on BoxPick-Exact-4. $\pm$ indicates one standard deviation. Bold numbers indicate the shortest wall-clock latency.}\label{tab:wc_results}
\small
\resizebox{\linewidth}{!}{
\begin{tabular}{llccc}
\toprule
 & \textbf{Model} & \textbf{Success} & \textbf{Avg. Depth} & \textbf{Wall-clock (ms)} \\
\midrule
\multirow{4}{*}{Train} & IRU-5 & $0.995 \pm 0.001$ & $5$ & $170.1 \pm 17.2$ \\
 & FlexLoop & $0.993 \pm 0.001$ & $3.85 \pm 0.04$ & $\textbf{133.0} \pm 7.7$ \\
 \cmidrule{2-5}
 & IRU-10 & $0.997 \pm 0.000$ & $10$ & $333.2 \pm 36.5$ \\
 & FlexLoop & $0.996 \pm 0.001$ & $9.35 \pm 0.04$ & $\textbf{319.9} \pm 22.2$ \\
\midrule
\multirow{4}{*}{Unseen} & IRU-5 & $0.511 \pm 0.085$ & $5$ & $170.1 \pm 17.2$ \\
 & FlexLoop & $0.543 \pm 0.093$ & $3.80 \pm 0.05$ & $\textbf{127.4} \pm 6.3$ \\
 \cmidrule{2-5}
 & IRU-10 & $0.664 \pm 0.080$ & $10$ & $333.2 \pm 36.5$ \\
 & FlexLoop & $0.675 \pm 0.066$ & $9.37 \pm 0.04$ & $\textbf{321.1} \pm 22.7$ \\
\bottomrule
\end{tabular}
}
\vspace{-24pt}
\end{wraptable}
the pretrained IRU policy performance with reduced average recurrent depth. In addition, this favorable trade-off is preserved on unseen tasks, demonstrating that FlexLoop's learned depth elasticity enables generalizable adaptive computation beyond the training distribution.

\paragraph{Wall-Clock Time Saving.} 

To evaluate the practical inference speedup enabled by adaptive recurrent depth, we additionally train an IRU-$10$ policy and apply FlexLoop post-training to obtain the corresponding elastic variant. We evaluate both IRU-$5$ and IRU-$10$ fixed-depth policies and their FlexLoop variants on BoxPick-Exact-4 with a batch of $32,768$ parallel environments. As shown in Table~\ref{tab:wc_results}, FlexLoop maintains competitive performance relative to the pretrained policies while using fewer recurrent steps, achieving up to $1.34\times$ wall-clock speedup compared with fixed-depth inference.
\subsection{Ablation Study}
\begin{figure}[!ht]
  \centering
  \begin{subfigure}[t]{0.325\textwidth}
    \centering
    \includegraphics[width=\linewidth]{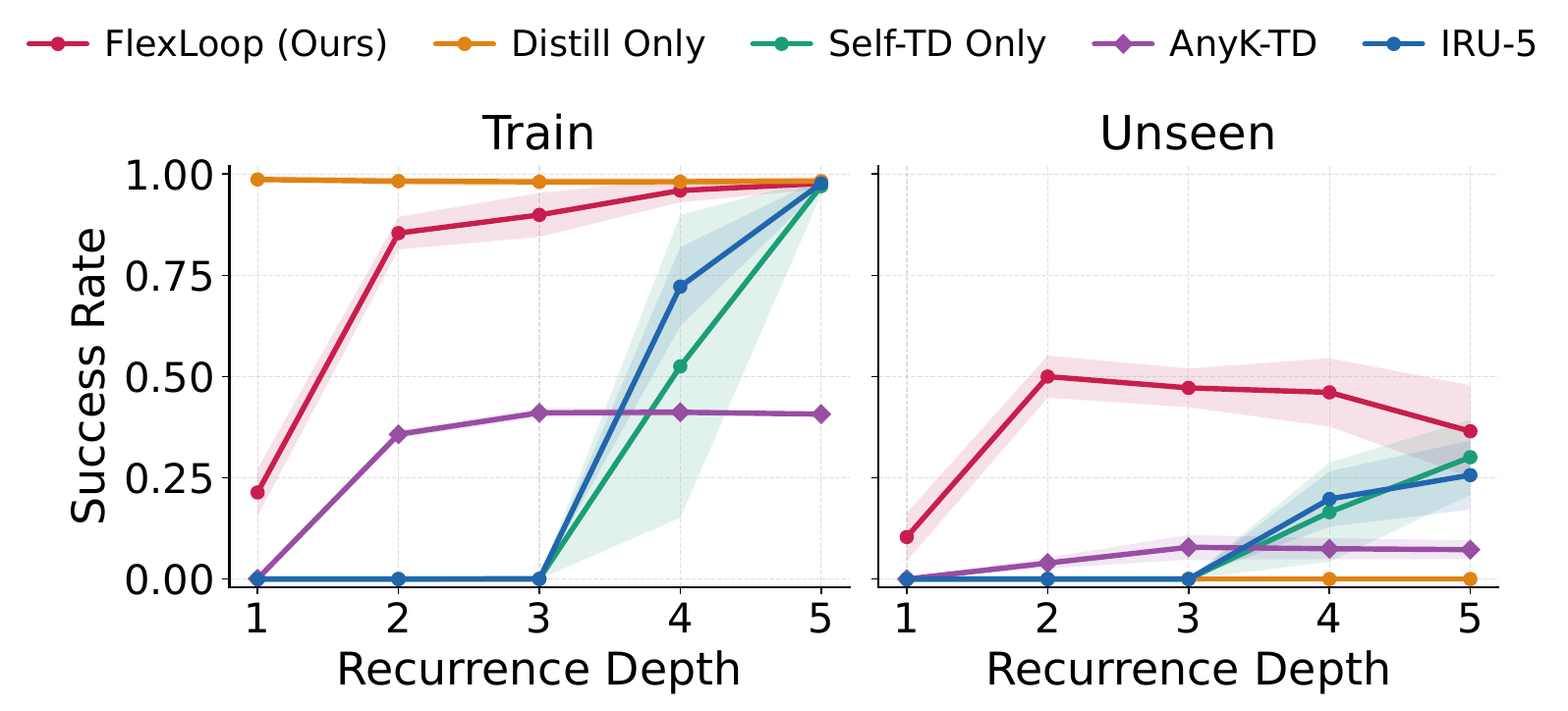}
    \caption{Component Ablation.}\label{fig:ablation_a}
  \end{subfigure}
  % \hfill
  \begin{subfigure}[t]{0.325\textwidth}
    \centering
    \includegraphics[width=\linewidth]{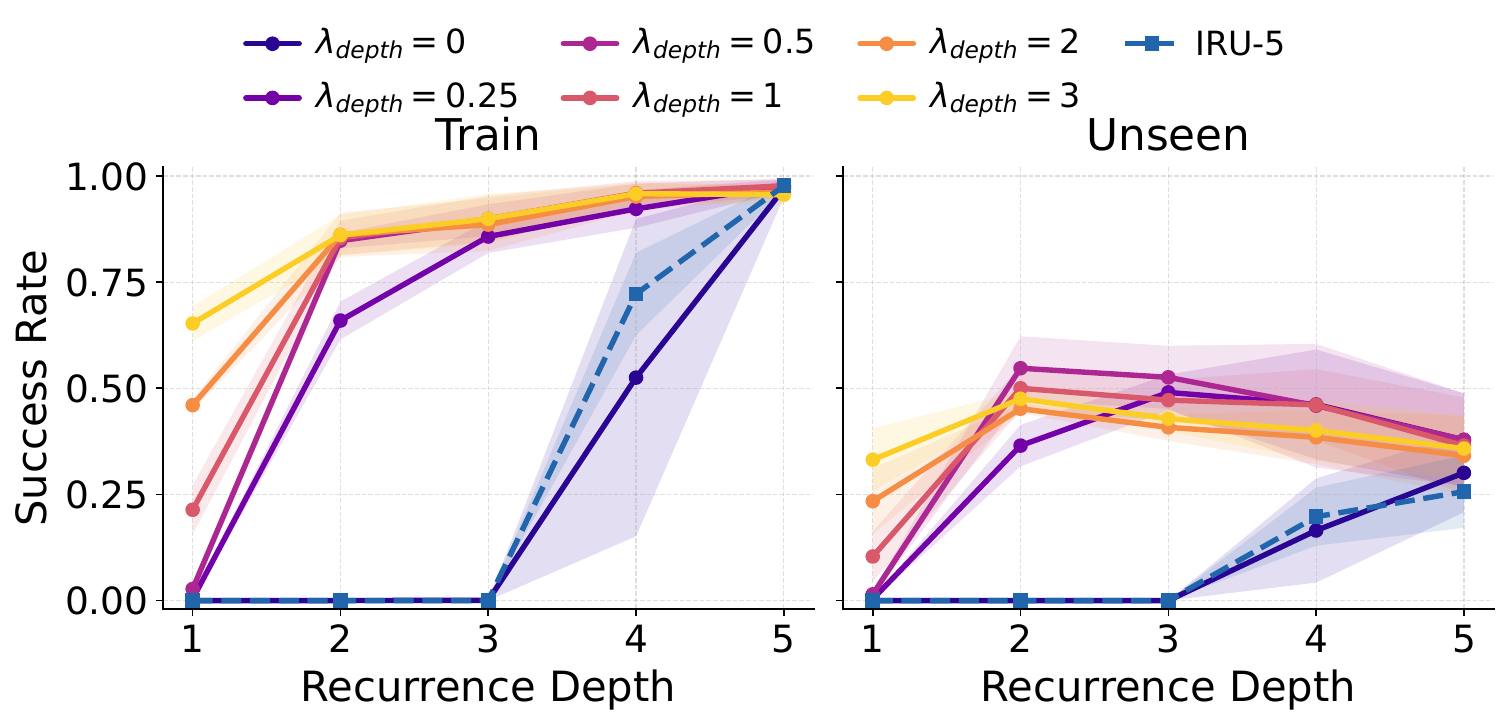}
    \caption{Sensitivity to $\lambda_{\rm depth}$.}\label{fig:ablation_b}
  \end{subfigure}
  % \hfill
  \begin{subfigure}[t]{0.325\textwidth}
    \centering
    \includegraphics[width=\linewidth]{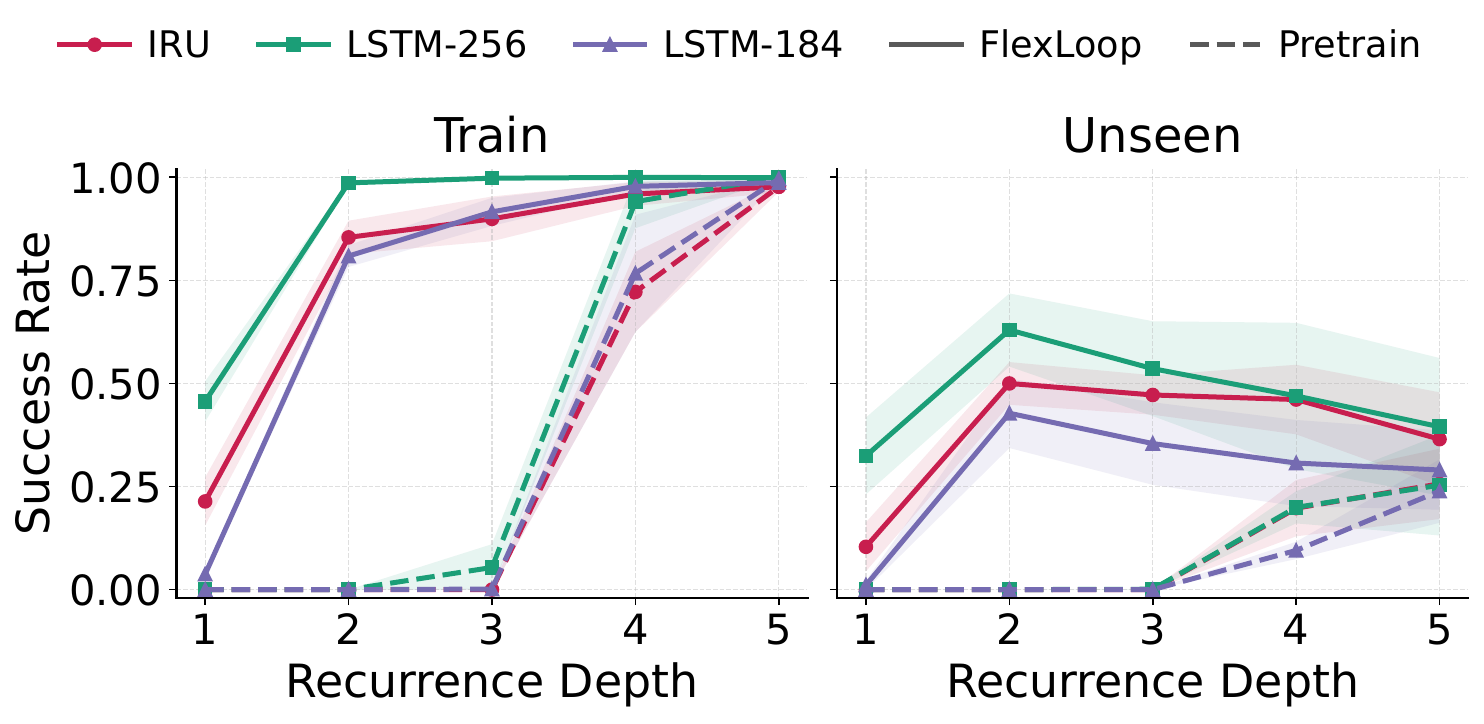}
    \caption{Architectural Generality.}\label{fig:ablation_c}
  \end{subfigure}
  \caption{Ablation studies of FlexLoop. Solid lines denote mean performance, while shaded regions indicate one standard deviation.}
  \label{fig:ablation}
\end{figure}
Based on BoxPick-Exact-4, we study FlexLoop's key components, $\lambda_{\rm depth}$ sensitivity, and architectural generality. Additional results on from-scratch training and pretraining recurrent depth are provided in
Appendix~\ref{app:ex_ablation}.
\paragraph{Contribution of Key Components.}
We ablate the key components of FlexLoop, including \textit{Distill Only}, which performs depth-wise distillation from the pretrained full-depth policy, \textit{Self-TD Only}, which solely keeps the original RL training, and \textit{AnyK-TD}, which jointly optimizes all recurrent depths with TD learning. As shown in Figure~\ref{fig:ablation_a}, \textit{Self-TD Only} preserves the pretrained policy's generalization but provides limited shallow-depth improvement, showing that depth elasticity does not emerge solely from the original RL objective. Meanwhile, \textit{AnyK-TD} and \textit{Distill Only} each suffer from limitations in full-depth capability or unseen-task generalization. These highlight the distinct roles of the two objectives. Continued RL optimization preserves full-depth capability and generalization, while depth-wise policy distillation transfers decision quality to shallower depths.
% FlexLoop overcomes these by continuing the original RL objective to preserve full-depth capability and generalization, while performing adjacent-depth policy distillation to transfer decision quality to shallower depths.

\paragraph{Sensitivity to $\lambda_{\rm depth}$.}
We further study the effect of the depth distillation weight $\lambda_{\rm depth}$. As shown in Figure~\ref{fig:ablation_b}, FlexLoop remains effective across a broad range of $\lambda_{\rm depth}$ values. Increasing $\lambda_{\rm depth}$ strengthens depth-wise distillation and consistently improves shallow-depth performance on training tasks. However, excessive distillation can hurt generalization to unseen tasks, where a moderate $\lambda_{\rm depth}$ achieves a better balance between shallow-depth adaptation and generalization.

\paragraph{Generality across Architectures.}
We further apply FlexLoop to LSTM-based policies \citep{lstm} to evaluate its architectural generality. LSTM-256 and LSTM-184 denote backbones with hidden sizes of 256 and 184, respectively, where LSTM-184 has a comparable parameter count to IRU. As shown in Figure~\ref{fig:ablation_c}, both LSTM variants exhibit depth specialization during pretraining, while FlexLoop consistently improves shallow-depth performance and preserves
full-depth capability across architectures. Moreover, LSTM-256 benefits more from FlexLoop than LSTM-184, suggesting that larger capacity allows a policy
to acquire stronger depth elasticity.

\subsection{Visualization of Adaptive Inference}
\begin{figure}[!t]
  \centering
  \begin{subfigure}[t]{0.45\textwidth}
    \centering
    \includegraphics[width=\linewidth]{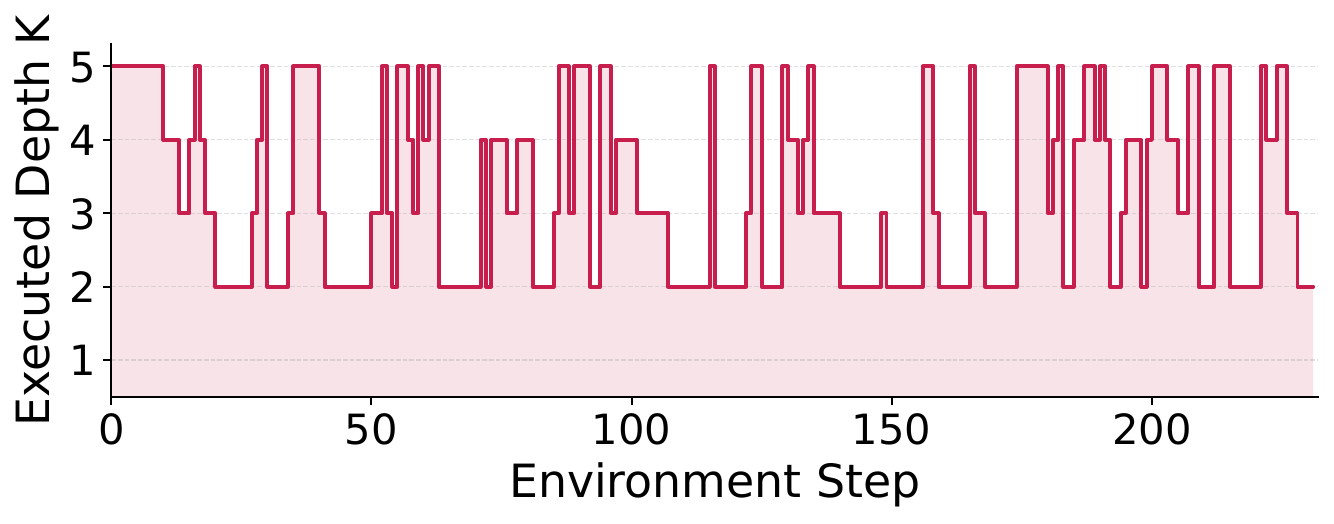}
    \caption{Scene.}
  \end{subfigure}
  \hfill
  \begin{subfigure}[t]{0.24\textwidth}
    \centering
    \includegraphics[width=.9\linewidth]{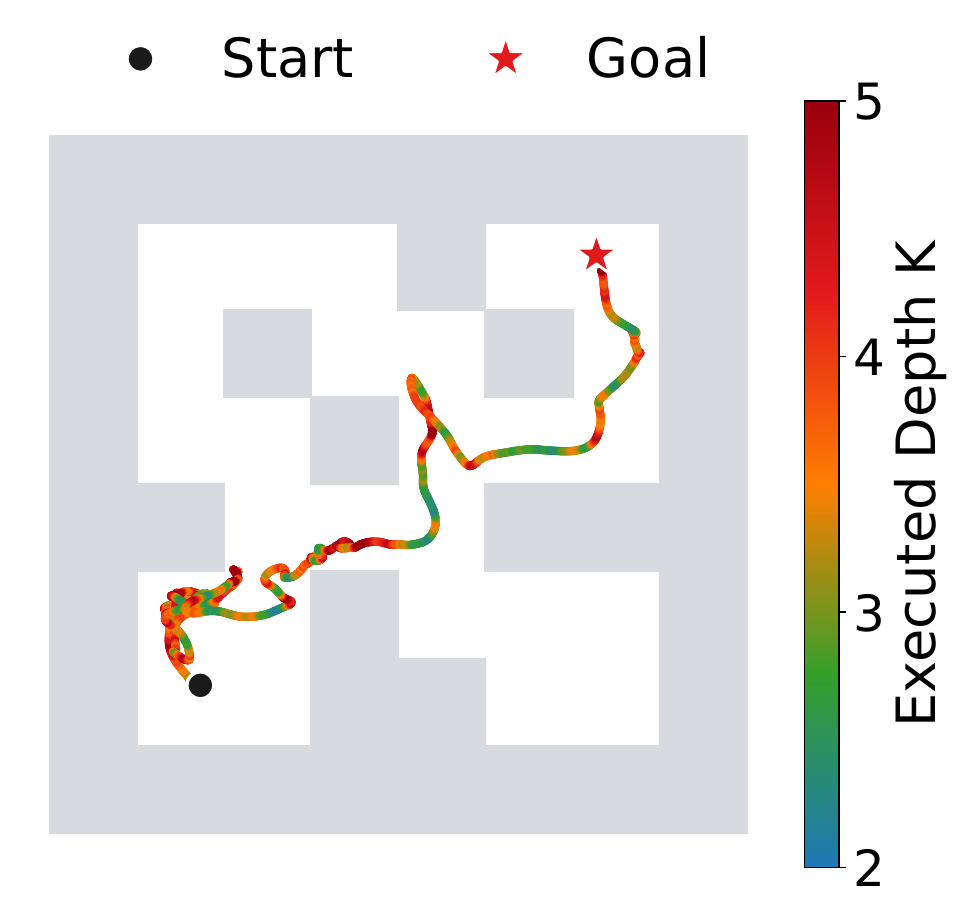}
    \caption{AntMaze.}
  \end{subfigure}
  \hfill
  \begin{subfigure}[t]{0.28\textwidth}
    \centering
    \includegraphics[width=\linewidth]{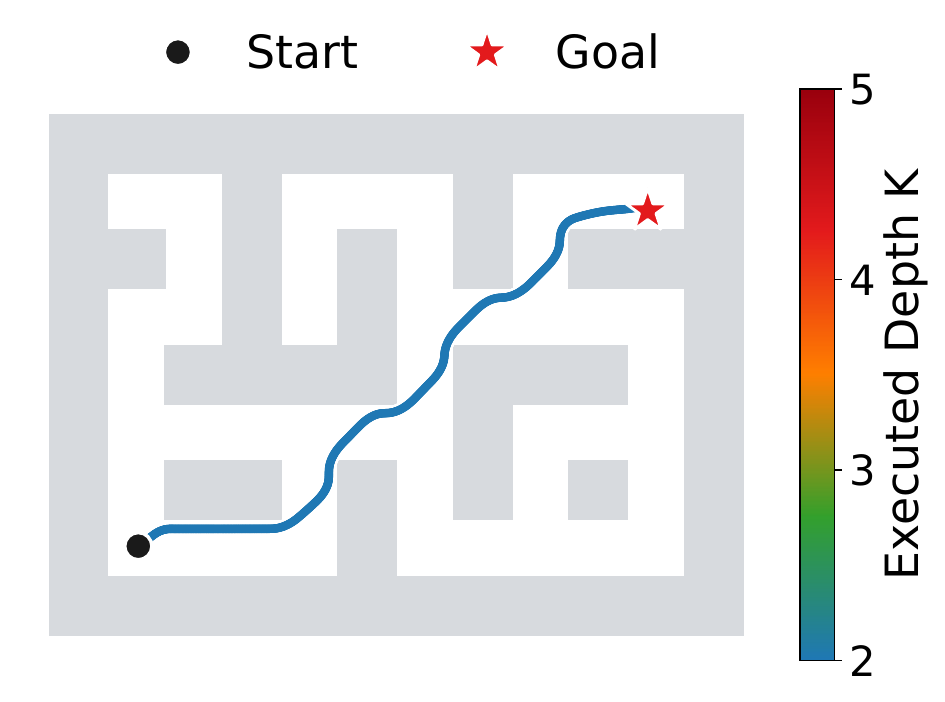}
    \caption{PointMaze.}
  \end{subfigure}
  \caption{
    State-wise adaptive recurrent-depth allocation across OGBench tasks. FlexLoop allocates more recurrent computation to states requiring more complex decisions.
    }
  \label{fig:adaptive_vis}
\end{figure}
To qualitatively understand how FlexLoop utilizes its learned depth elasticity, we visualize the recurrent computation allocated during successful trajectories on three OGBench tasks. Figure~\ref{fig:adaptive_vis} shows the recurrent depth selected by FlexLoop at each environment step, with representative trajectory segments visualized frame by frame in
Appendix~\ref{app:adaptive_vis}. The allocated computation appears to track state-level decision difficulty. In Scene, FlexLoop increases recurrent depth at skill boundaries, such as goal switching, while using only three or two recurrent steps during simpler execution phases such as button pressing. Similarly, deeper computation is concentrated near turns and goals in AntMaze, whereas straight navigation uses shallower inference. On the simpler PointMaze task, FlexLoop consistently terminates after two recurrent steps. These trajectories suggest that FlexLoop reallocates recurrent computation toward challenging decisions and terminates earlier once adjacent-depth policies have stabilized, rather than uniformly reducing computation.
% performance vs avg K
% fixed-K points on same plot
% wall-clock stress test

% \clearpage
\section{Conclusion and Future Directions}
\label{sec:conclusion}

In this work, we identify recurrent-depth specialization in pretrained looped RL policies and propose FlexLoop, a post-training framework that induces depth elasticity through continued RL optimization and adjacent-depth policy distillation. Across 30 online and offline long-horizon goal-conditioned environments, FlexLoop preserves full-depth capability while enabling reliable shallower-depth inference, reducing average recurrent computation by up to 43\% and achieving up to $1.34\times$ wall-clock speedup, while maintaining competitive or better task performance. Future work includes extending depth elasticity beyond the trained depth, developing learning-based adaptive inference mechanisms beyond the current depth-consistency criterion, and studying depth elasticity in larger-scale looped architectures such as transformer-based models.

\FloatBarrier

\begingroup
\small
\urlstyle{same}
\bibliographystyle{dilab_ref}
\bibliography{references}
\endgroup

\makeappendixtoc
\appendix
\section{Proofs of the Propositions}\label{app:proof}
\subsection{Proof of Proposition~\ref{prop:no_intermediate}}
\label{app:proof_no_intermediate}
\begin{proof}
Since the return is bounded, $\sup_{\pi} J(\pi)$ and $\inf_{\pi} J(\pi)$ are both finite, where $\sup_\pi$ and $\inf_\pi$ are taken over all stationary goal-conditioned Markov policies. Hence, for any $\epsilon>0$ there exist policies $\pi^{+}$ and $\pi^{-}$ such that
\begin{equation}
    J(\pi^{+})
    \geq
    \sup_{\pi}J(\pi)-\epsilon,
    \qquad
    J(\pi^{-})
    \leq
    \inf_{\pi}J(\pi)+\epsilon.
    \label{eq:extreme_policies}
\end{equation}

We construct a shared-weight recurrent policy $\Pi=(F,G)$, where $F$
denotes the shared recurrent block and $G$ the policy output head.
Let the recurrent state contain a scalar $h_k$, initialized as $h_0=0$,
and define
\begin{equation}
    h_{k+1}=F(h_k,s,g)=h_k+1,
\end{equation}
where $F$ simply ignores $(s,g)$, yielding $h_k=k$ at recurrent depth $k$.

Define the shared policy output head as
\begin{equation}
    G(\cdot\mid s,g,h)
    =
    \begin{cases}
        \pi^{-}(\cdot\mid s,g), & h<K,\\
        \pi^{+}(\cdot\mid s,g), & h\geq K.
    \end{cases}
\end{equation}
The depth-$k$ policy induced by $\Pi$ is
\begin{equation}
    \Pi_k(\cdot\mid s,g)
    :=G(\cdot\mid s,g,h_k).
\end{equation}
Therefore, the policy obtained after $k$ recurrent steps is
\[
    \Pi_k=\pi^{-},
    \qquad \forall k<K,
\]
while at the full recurrent depth,
\[
    \Pi_K=\pi^{+}.
\]

Using \cref{eq:extreme_policies},
\[
    J(\Pi_K)
    =
    J(\pi^{+})
    \geq
    \sup_{\pi}J(\pi)-\epsilon,
\]
and, for every $k<K$,
\[
    J(\Pi_k)
    =
    J(\pi^{-})
    \leq
    \inf_{\pi}J(\pi)+\epsilon.
\]
Thus, a shared-weight recurrent policy can be arbitrarily close to optimal
at its full depth while all of its shallower policies are arbitrarily close
to the minimum achievable return.
\end{proof}
\subsection{Proof of Proposition~\ref{prop:depth_bound}}
\label{app:proof_shallow}
\begin{proof}
%凝练第一段，然后把k范围加到里面去让说法更完整
For analysis, we use the equivalent discounted infinite-horizon formulation
with a zero-reward absorbing terminal state, augmenting the state with time
when needed, and write $x=(s,g)$ for compactness. Define the worst-case total-variation discrepancy between adjacent depths as
\[\alpha_j=\sup_x
    D_{\mathrm{TV}}
    \left(
        \pi_j(\cdot\mid x),
        \pi_{j+1}(\cdot\mid x)
    \right),\quad 1\le j<K.
\]
By Pinsker's inequality,
\[
    \alpha_j
    \leq
    \sup_x
    \sqrt{
        \frac{1}{2}
        D_{\mathrm{KL}}
        \left(
            \pi_{j+1}(\cdot\mid x)
            \,\Vert\,
            \pi_j(\cdot\mid x)
        \right)
    }
    \leq
    \sqrt{\frac{\delta_{\max}}{2}}.
\]
For any $1\leq k<K$, the triangle inequality across recurrent depths gives
\begin{equation}
    D_{\mathrm{TV}}
    \left(
        \pi_k(\cdot\mid x),
        \pi_K(\cdot\mid x)
    \right)
    \leq
    \sum_{j=k}^{K-1}
    \alpha_j \leq
    (K-k)\sqrt{\frac{\delta_{\max}}{2}}.
    \label{eq:depth_tv_bound}
\end{equation}
We next relate this policy discrepancy to return. By the performance
difference lemma \citep{kakade2002approximately}, for any two policies
$\pi$ and $\pi'$,
\begin{equation}
    J(\pi)-J(\pi')
    =
    \frac{1}{1-\gamma}
    \mathbb{E}_{x\sim d_{\pi}}
    \left[
        \mathbb{E}_{a\sim\pi(\cdot\mid x)}
        A^{\pi'}(x,a)
    \right],
    \label{eq:pdl}
\end{equation}
where $d_{\pi}$ denotes the normalized discounted state--goal occupancy
measure under $\pi$, and $A^{\pi'}$ is the advantage of $\pi'$.

Taking $\pi=\pi_k$ and $\pi'=\pi_K$, and using
$\mathbb{E}_{a\sim\pi_K(\cdot\mid x)}
A^{\pi_K}(x,a)=0$, we have
\begin{align}
    \left|
    \mathbb{E}_{a\sim\pi_k(\cdot\mid x)}
    A^{\pi_K}(x,a)
    \right|&=
    \left|
    \mathbb{E}_{a\sim\pi_k(\cdot\mid x)}
    A^{\pi_K}(x,a)-\mathbb{E}_{a\sim\pi_K(\cdot\mid x)}
A^{\pi_K}(x,a)
    \right|\nonumber\\
    &=
    \left|
    \mathbb{E}_{a\sim\pi_k(\cdot\mid x)}
    Q^{\pi_K}(x,a)
    -
    \mathbb{E}_{a\sim\pi_K(\cdot\mid x)}
    Q^{\pi_K}(x,a)
    \right|\nonumber.
\end{align}

Since $|r|\leq R_{\max}$, $\gamma<1$,
\[
    |Q^{\pi_K}(x,a)|
    \leq
    \frac{R_{\max}}{1-\gamma},
\]
and hence
\[
    \left|
    \mathbb{E}_{a\sim\pi_k(\cdot\mid x)}
    Q^{\pi_K}(x,a)
    -
    \mathbb{E}_{a\sim\pi_K(\cdot\mid x)}
    Q^{\pi_K}(x,a)
    \right|\leq
    \frac{2R_{\max}}{1-\gamma}\TV\left(
        \pi_k(\cdot\mid x),
        \pi_K(\cdot\mid x)
    \right).
\]
Therefore,
\begin{equation}
    \left|
    \mathbb{E}_{a\sim\pi_k(\cdot\mid x)}
    A^{\pi_K}(x,a)
    \right|
    \leq
    \frac{2R_{\max}}{1-\gamma}
    D_{\mathrm{TV}}
    \left(
        \pi_k(\cdot\mid x),
        \pi_K(\cdot\mid x)
    \right).
\end{equation}
Substituting this into \cref{eq:pdl} yields
\begin{equation}
    \left|J(\pi_k)-J(\pi_K)\right|
    \leq
    \frac{2R_{\max}}{(1-\gamma)^2}
    \sup_x
    D_{\mathrm{TV}}
    \left(
        \pi_k(\cdot\mid x),
        \pi_K(\cdot\mid x)
    \right).
\end{equation}
Combining with \cref{eq:depth_tv_bound} gives
\[
    \left|J(\pi_k)-J(\pi_K)\right|
    \leq
    \frac{2R_{\max}}{(1-\gamma)^2}
    (K-k)
    \sqrt{\frac{\delta_{\max}}{2}},
\]
which proves the claim.
\end{proof}

% \newpage
\section{Experiment Details}\label{app:exp_details}
\subsection{Environments}\label{app:env_details}
Following \citet{ghugare2026on}, we evaluate FlexLoop on $30$ long-horizon goal-conditioned environments from BoxPick \citep{bortkiewicz2025temporaldifferencelearninggold}, LightsOut \citep{anderson1998turning}, and OGBench \citep{ICLR2025_ecd92623}. 

\paragraph{BoxPick.} BoxPick is a discrete-action benchmark, where an agent navigates in a $6\times6$ grid, picks up boxes, and places them at designated goal locations. An episode is successful only when all boxes are correctly placed. We consider two types of configurations. (i) \textit{BoxPick-Exact-$n$} requires the agent to transport $n$ boxes. The training tasks involve moving boxes between two adjacent quadrants of the map, while the unseen tasks require transportation between diagonal quadrants. We evaluate $n\in\{3,4\}$. (ii) \textit{BoxPick-Generalize-$n$-$m$} requires transporting $n$ boxes in total, where training tasks only require moving $m$ boxes and unseen tasks require moving all $n$ boxes to their target locations. We evaluate $(n,m)\in\{(4,1),(4,2)\}$. These four environments provide a challenging test of long-horizon planning and generalization. We use goal-conditioned DQN as the underlying RL algorithm for this environment.

\paragraph{LightsOut.}\textit{LightsOut-4x5} is a large discrete-action decision-making benchmark with a $4\times5$ grid of 20 binary switches. At each step, the agent selects one switch, which toggles its own state and the states of all adjacent switches. An episode is successful when all switches are turned on. The training tasks consist of initial configurations that can be solved with relatively fewer steps, while the unseen tasks contain configurations requiring longer solution sequences. We use goal-conditioned PPO as the underlying RL algorithm for this environment.

\begin{wraptable}{r}{0.38\textwidth}
    \centering
    \vspace{-37pt}
\caption{Adjacent-depth policy distillation weight used for each environment.}\label{tab:lambda_depth}
\small
\begin{tabular}{lc}
\toprule
\textbf{Environment} & $\bf{\lambda_{\rm depth}}$ \\
\midrule
BoxPick-Exact-3 & 1.0 \\
BoxPick-Exact-4 & 1.0 \\
BoxPick-Generalize-4-1 & 1.0 \\
BoxPick-Generalize-4-2 & 1.0 \\
LightsOut& 0.05 \\
Scene & 0.05 \\
Cube & 0.02 \\
Puzzle & 0.2 \\
AntMaze & 0.03 \\
PointMaze & 0.3 \\
\bottomrule
\end{tabular}
\vspace{-12pt}
\end{wraptable}

\paragraph{OGBench.} 
OGBench is a continuous-action offline RL benchmark, where agents learn to achieve diverse test goals solely from static offline datasets. We evaluate five scenarios, including two navigation scenarios, \textit{AntMaze-Medium-Stitch} and \textit{PointMaze-Large-Stitch}, which test the ability to compose experiences through trajectory stitching, and three manipulation scenarios, \textit{Scene-Play}, \textit{Cube-Double-Play}, and \textit{Puzzle-4x4-Play}, which evaluate fine-grained control and compositional generalization. Following the evaluation protocol of \citet{ghugare2026on}, we report the average success rate over five task environments for each scenario. We use goal-conditioned IQL as the underlying RL algorithm for these environments.

\subsection{Implementation Details}\label{app:implementation}
We build our implementation upon the official codebase by \citet{ghugare2026on} and reproduce the IRU baseline before implementing FlexLoop. Unless otherwise stated, we use the same training steps, environment configurations, and hyperparameters as reported in the IRU paper. For BoxPick, the temperature $\tau$ used in \cref{eq:softpolicy} is set to $0.25$. For OGBench, we use a hidden size of 256 instead of the hidden size of 64 reported in the original paper, as the former produces results closer to those reported for the benchmark. For on-policy GC-PPO, we additionally replace the GAE-based value targets used in the value loss with value estimates from the pretrained IRU model to stabilize FlexLoop training. The policy optimization objective remains unchanged and continues to use standard on-policy PPO updates.

The key hyperparameter of FlexLoop is the adjacent-depth policy distillation weight $\lambda_{\rm depth}$. The selected values for each environment are reported in Table~\ref{tab:lambda_depth}.

\section{Extended Results}\label{app:ex_res}
In this section, we provide detailed environment-wise evaluation results, complementing the aggregated results reported in the main paper.
% \newpage
\subsection{Full training curves for Figure \ref{fig:res_kepoch}}\label{app:res_kepoch}
\begin{figure}[!ht]
  \centering
    \begin{subfigure}[t]{0.49\textwidth}
    \centering
    \includegraphics[width=\linewidth]{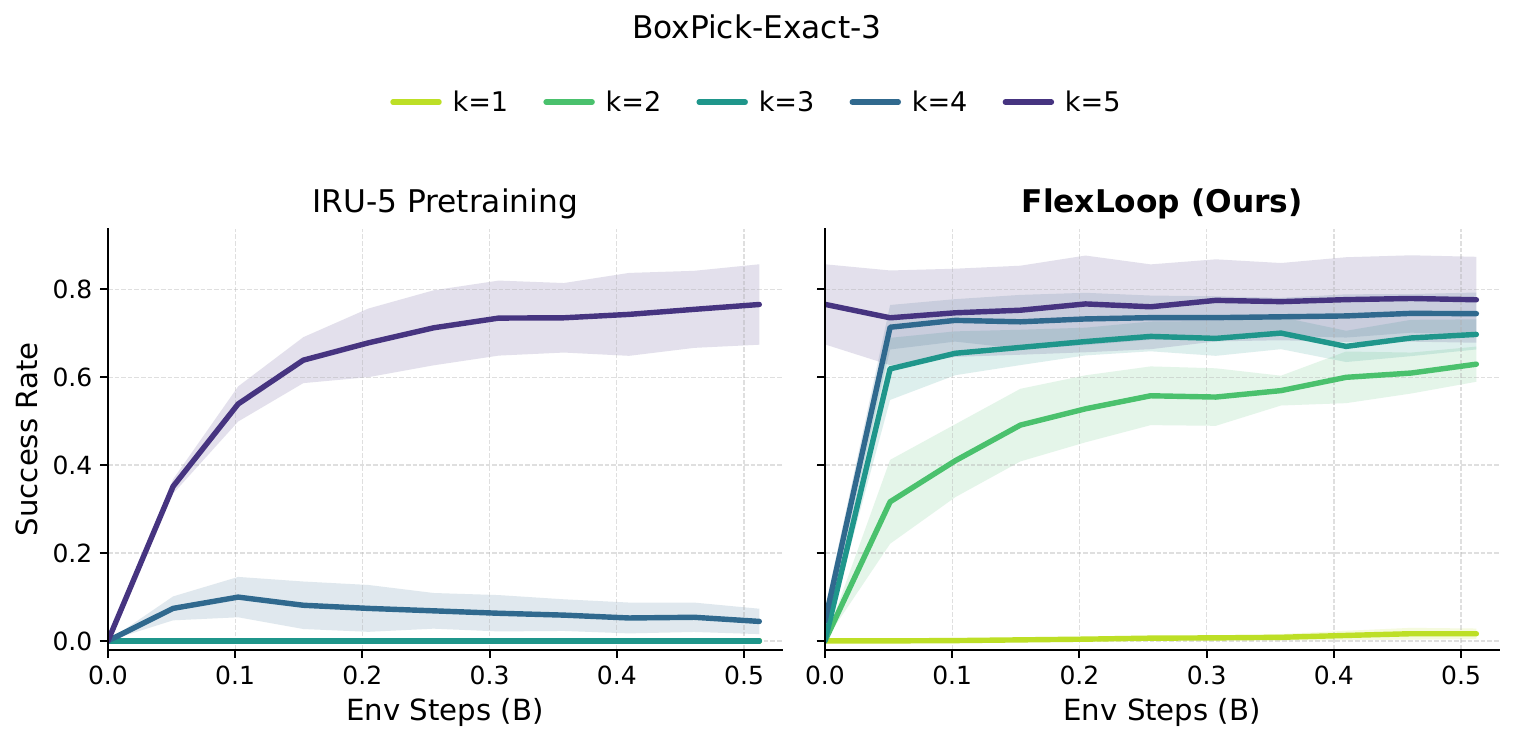}
  \end{subfigure}
  \hfill
  \begin{subfigure}[t]{0.49\textwidth}
    \centering
    \includegraphics[width=\linewidth]{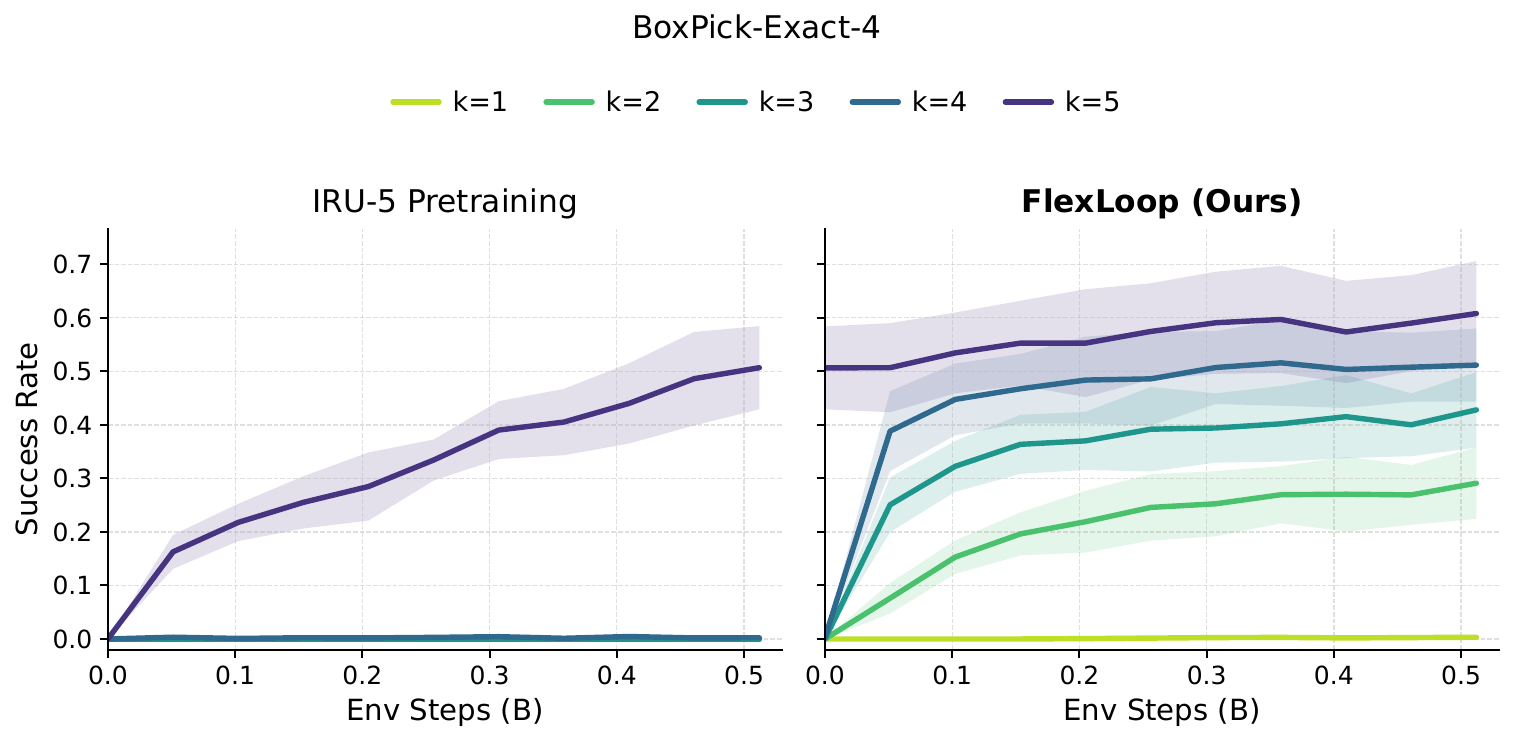}
  \end{subfigure}\\
  \begin{subfigure}[t]{0.49\textwidth}
    \centering
    \includegraphics[width=\linewidth]{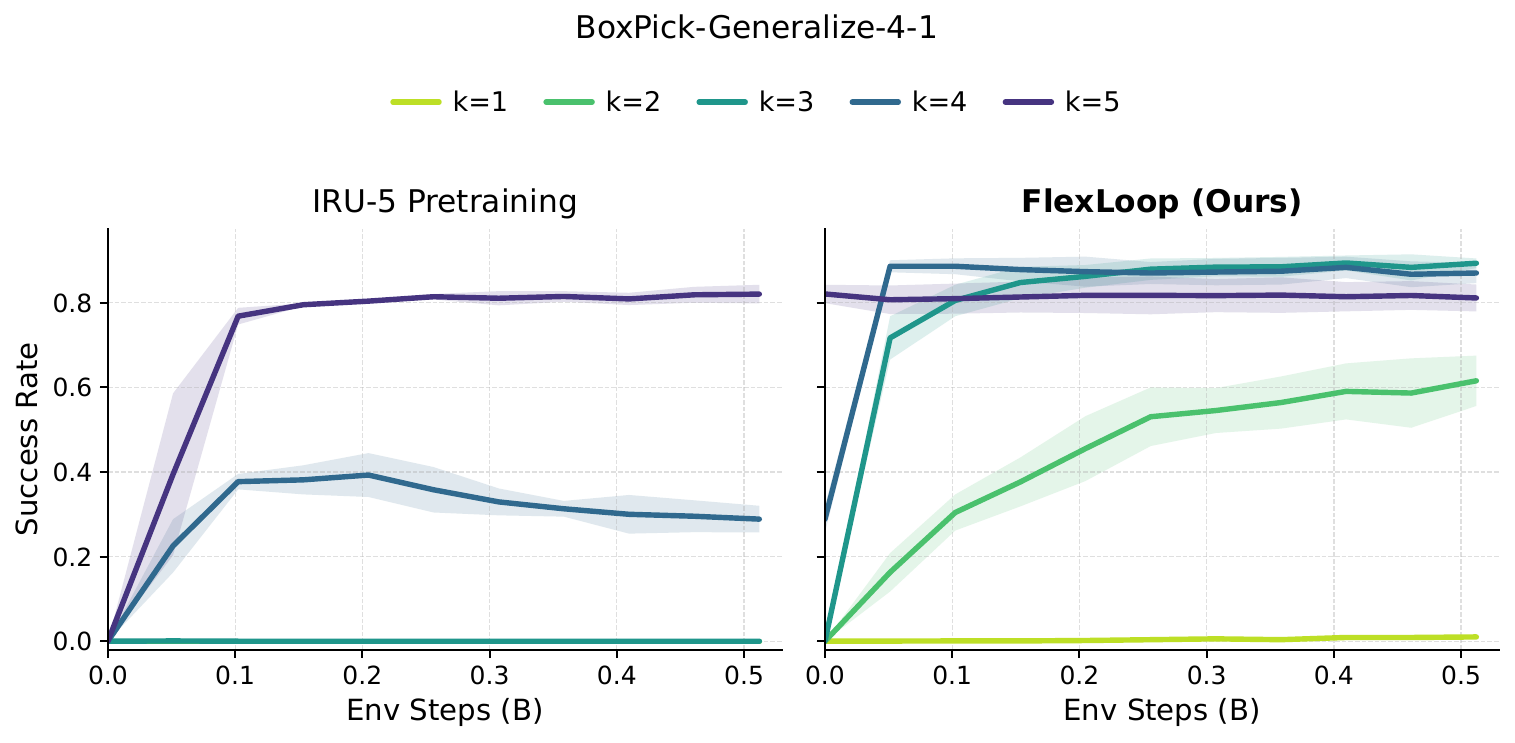}
  \end{subfigure}
  \hfill
  \begin{subfigure}[t]{0.49\textwidth}
    \centering
    \includegraphics[width=\linewidth]{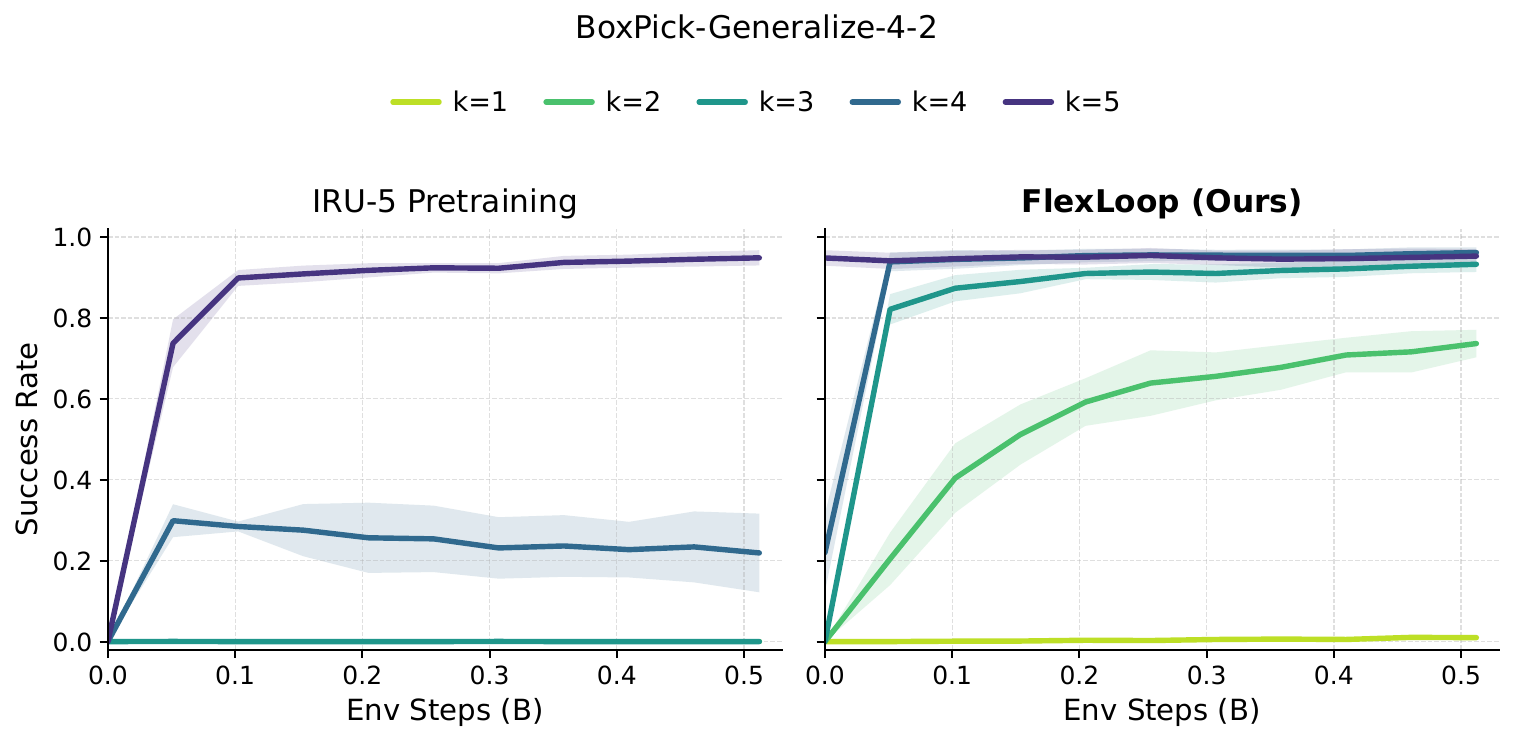}
  \end{subfigure}\\
  \begin{subfigure}[t]{0.49\textwidth}
    \centering
    \includegraphics[width=\linewidth]{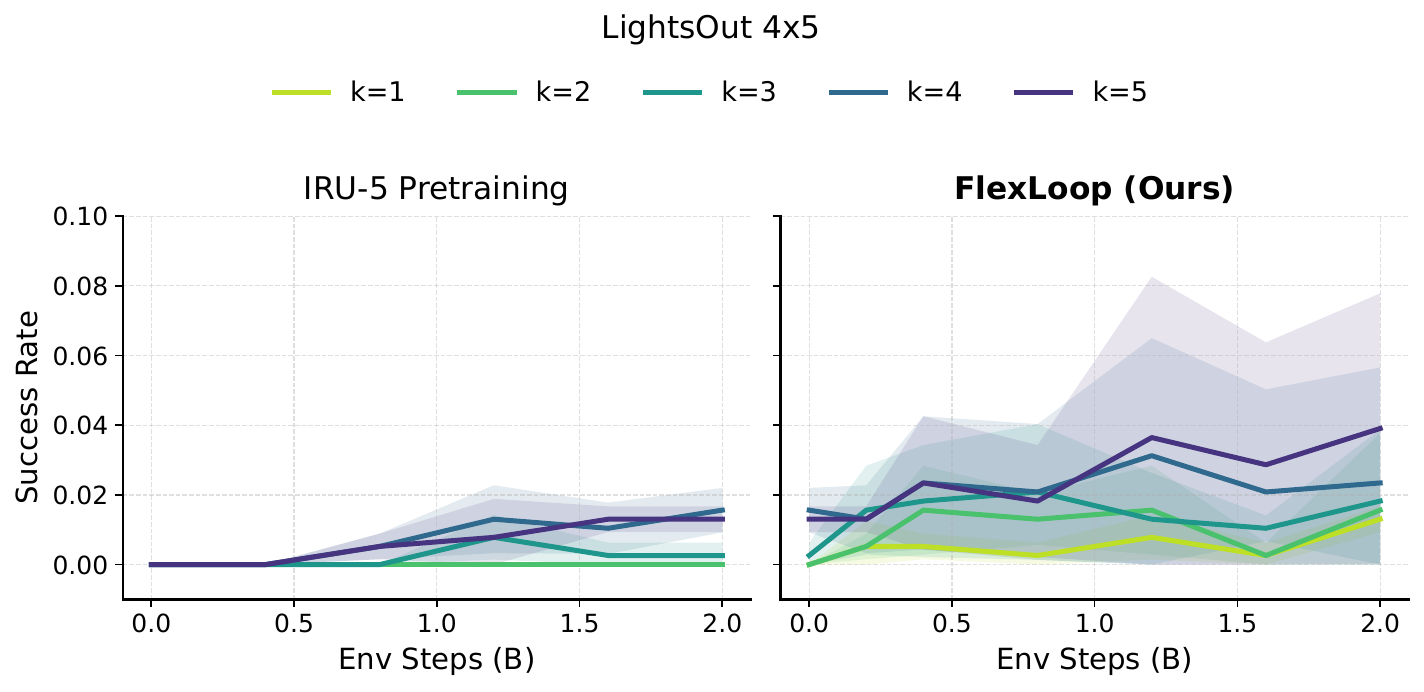}
  \end{subfigure}
  \hfill
  \caption{Depth-wise decision performance of pretrained IRU-$5$ and FlexLoop on unseen tasks. Solid lines denote mean success rates and shading indicates one standard deviation. FlexLoop improves shallow-depth decision elasticity while maintaining out-of-distribution generalization.}
  \label{fig:res_main_ood_full}
\end{figure}

\begin{figure}[!ht]
  \centering
  \begin{subfigure}[t]{0.49\textwidth}
    \centering
    \includegraphics[width=\linewidth]{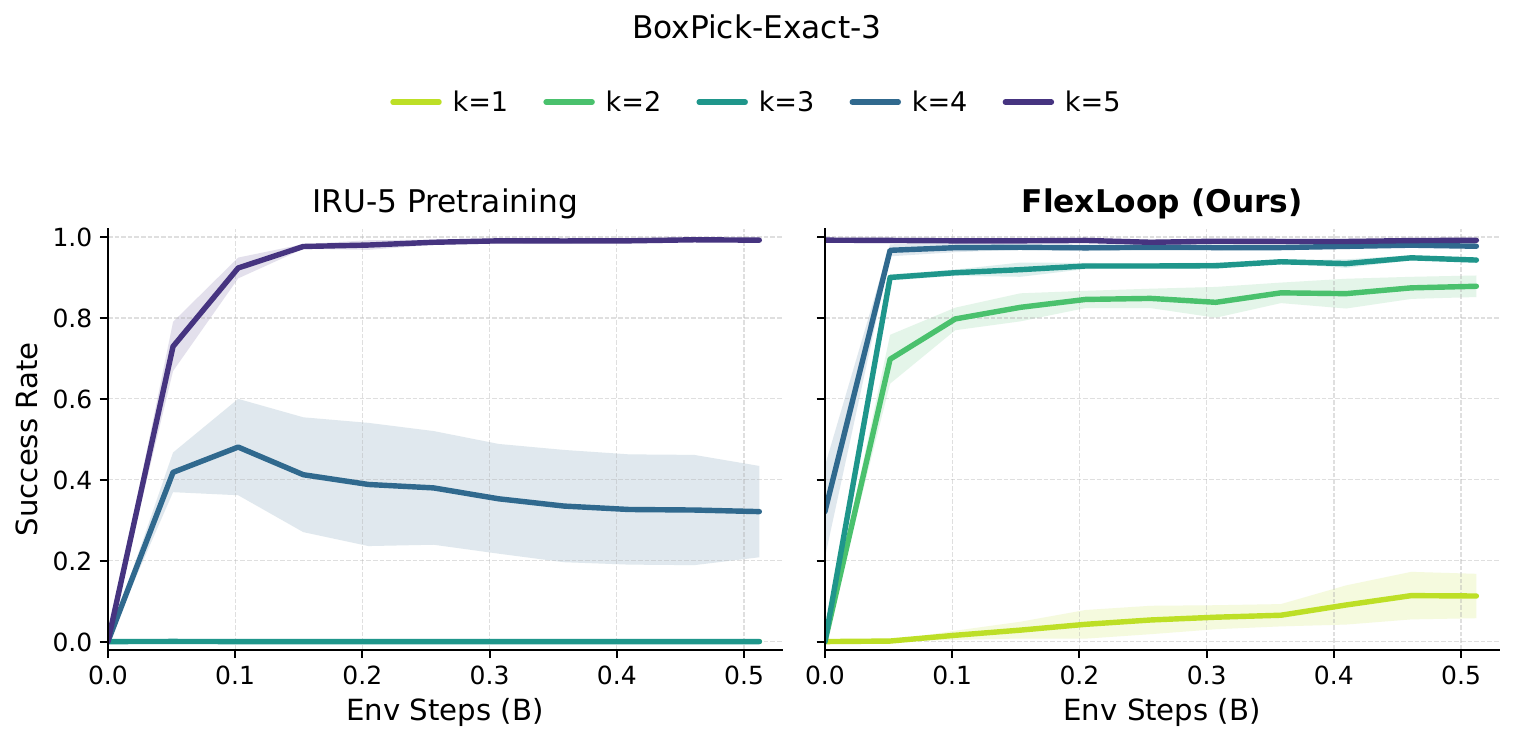}
  \end{subfigure}
  \hfill
  \begin{subfigure}[t]{0.49\textwidth}
    \centering
    \includegraphics[width=\linewidth]{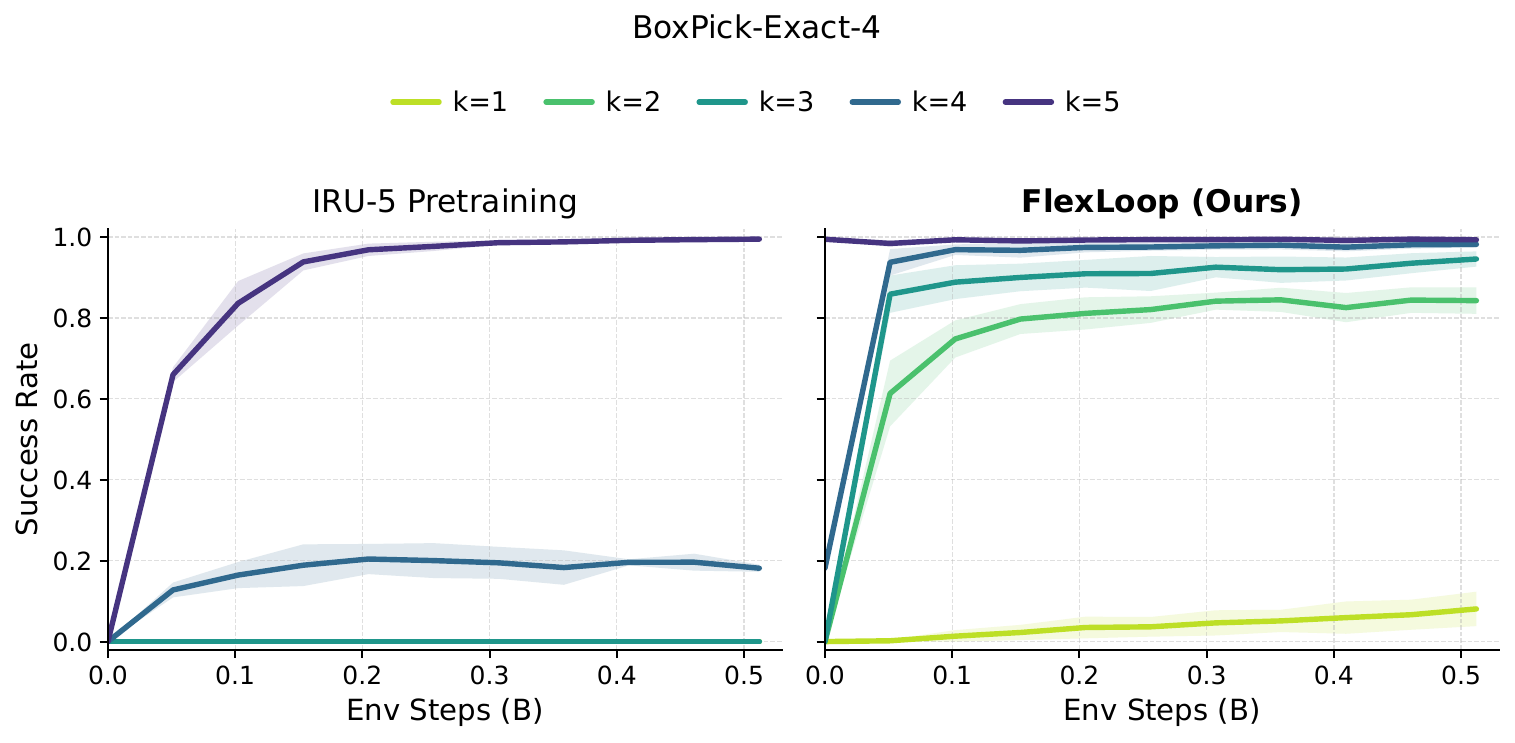}
  \end{subfigure}\\
  \begin{subfigure}[t]{0.49\textwidth}
    \centering
    \includegraphics[width=\linewidth]{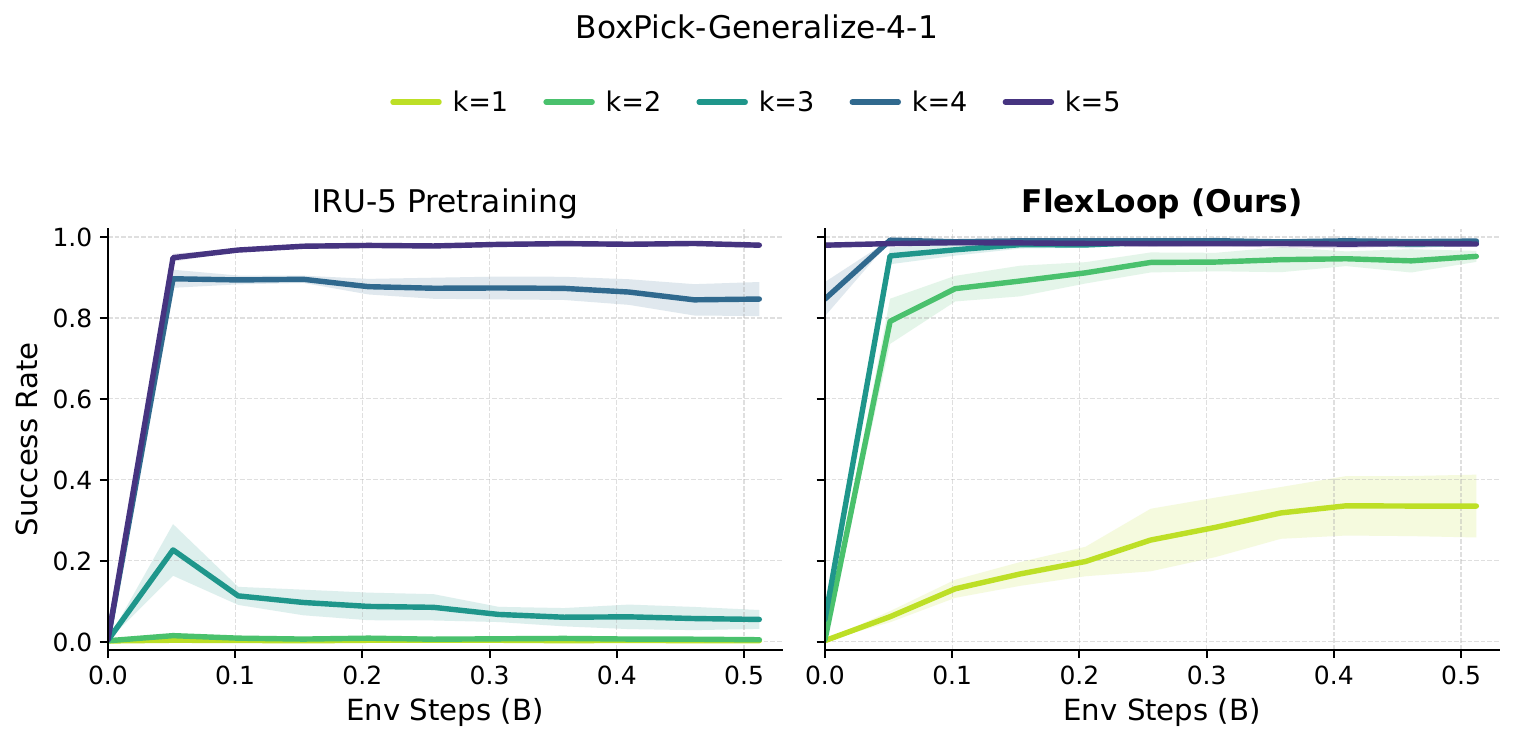}
  \end{subfigure}
  \hfill
  \begin{subfigure}[t]{0.49\textwidth}
    \centering
    \includegraphics[width=\linewidth]{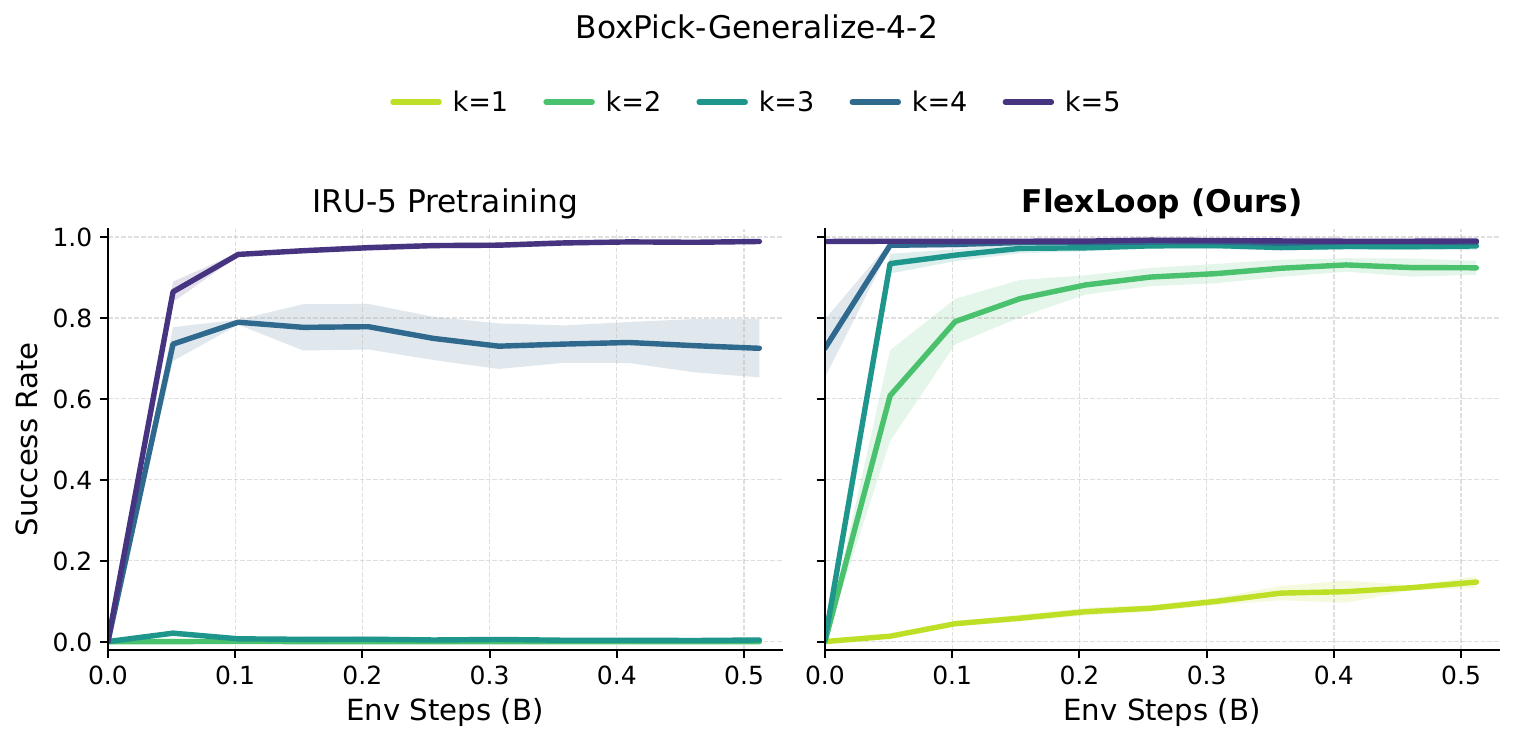}
  \end{subfigure}\\
  \begin{subfigure}[t]{0.49\textwidth}
    \centering
    \includegraphics[width=\linewidth]{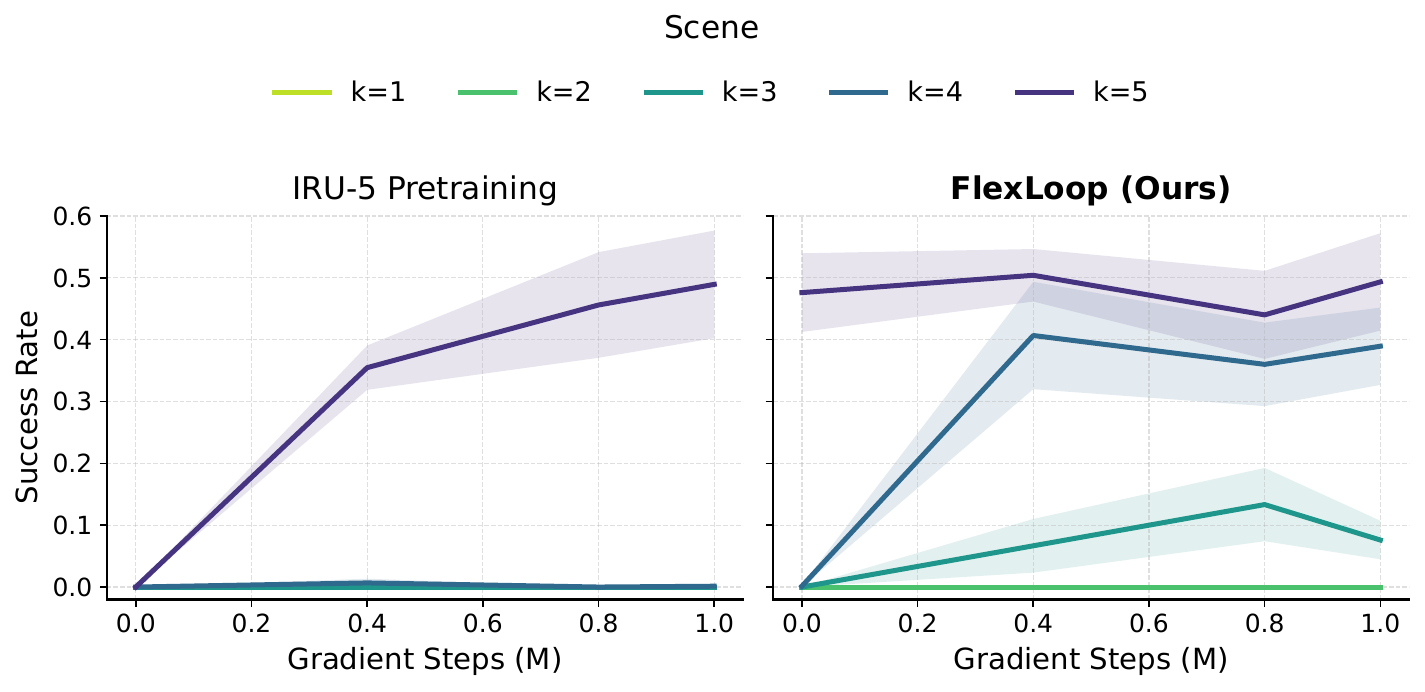}
  \end{subfigure}
  \hfill
  \begin{subfigure}[t]{0.49\textwidth}
    \centering
    \includegraphics[width=\linewidth]{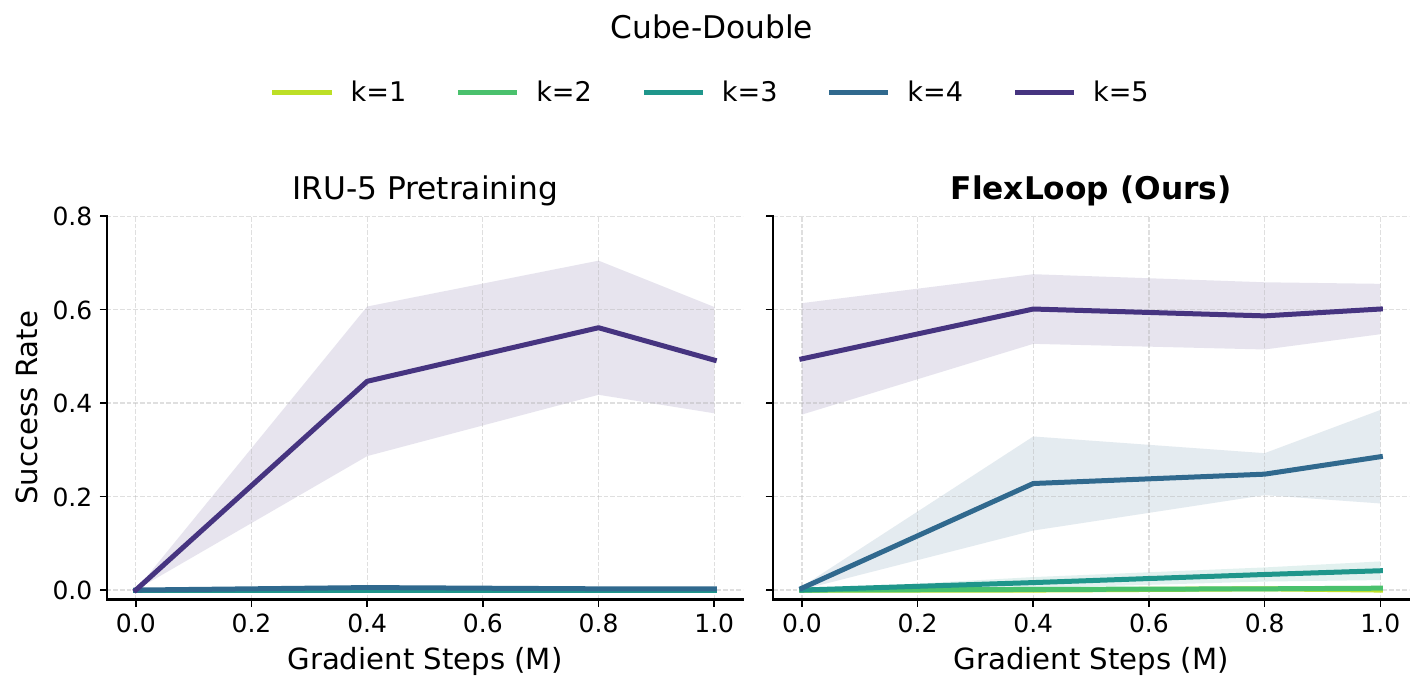}
  \end{subfigure}\\
  \begin{subfigure}[t]{0.49\textwidth}
    \centering
    \includegraphics[width=\linewidth]{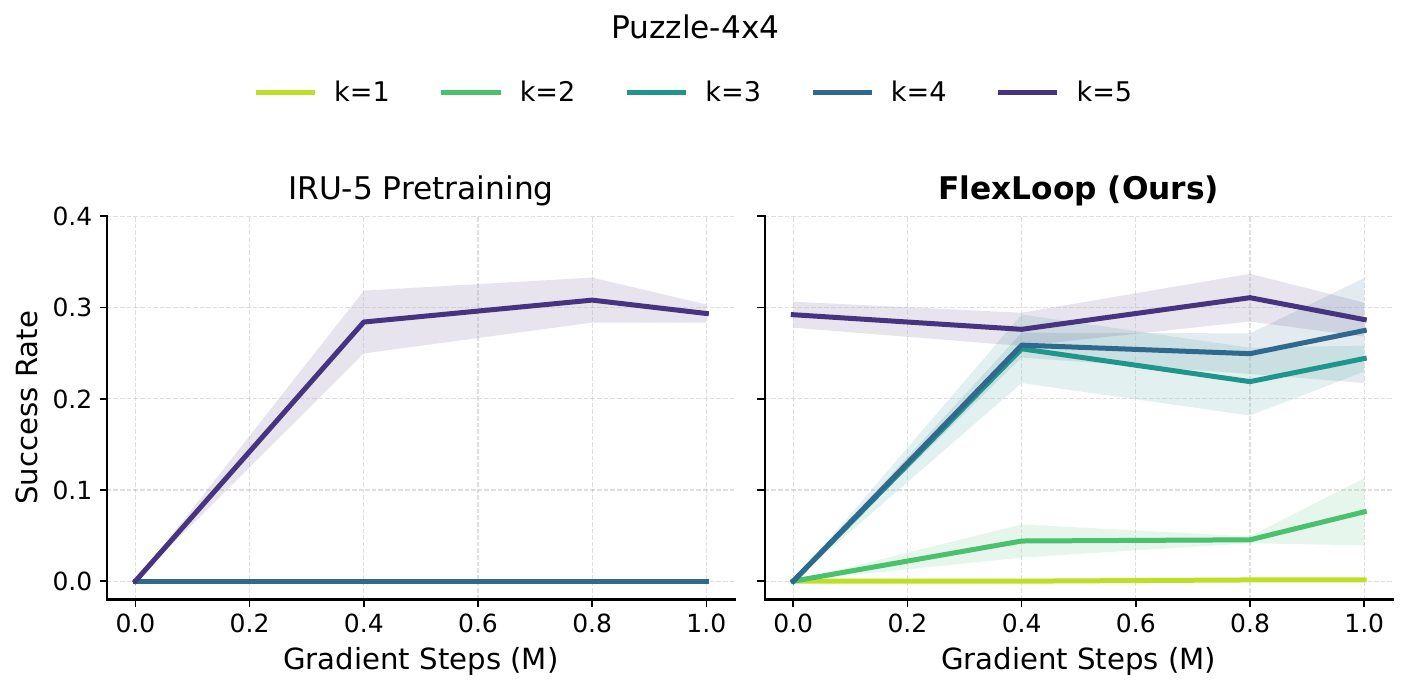}
  \end{subfigure}
  \hfill
  \begin{subfigure}[t]{0.49\textwidth}
    \centering
    \includegraphics[width=\linewidth]{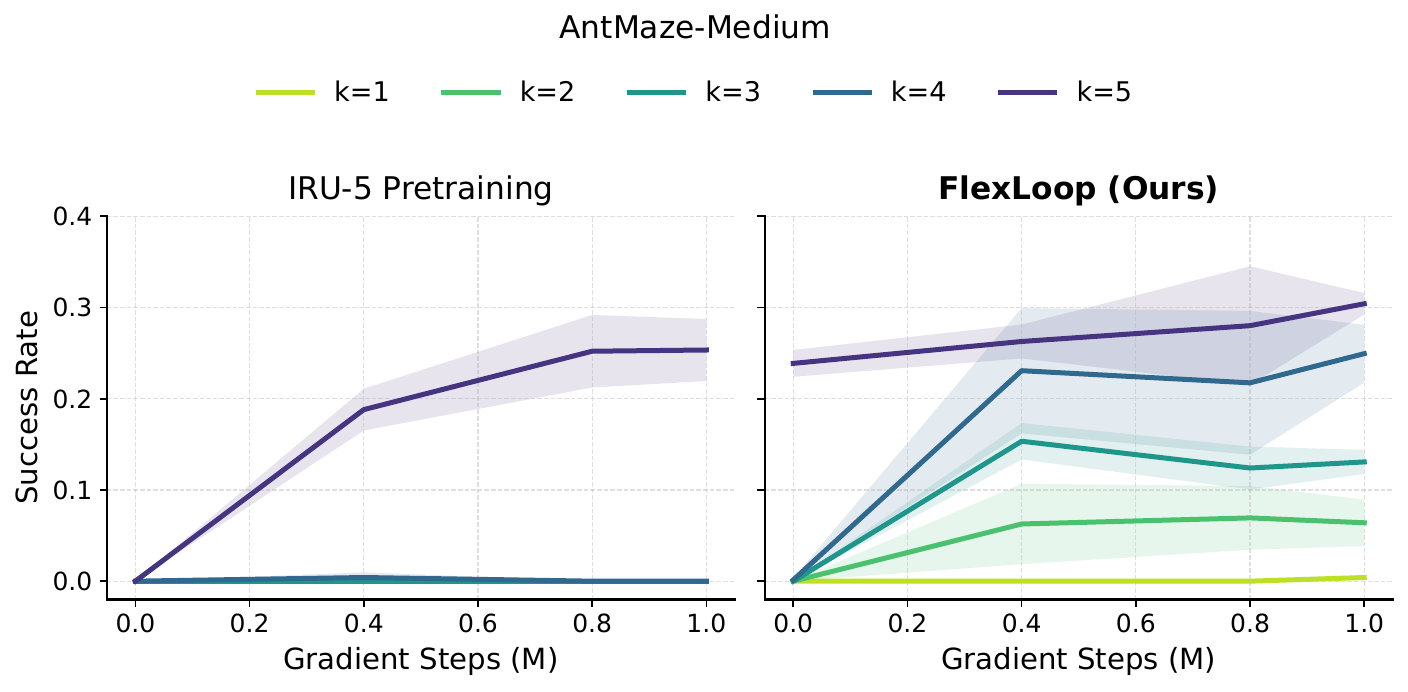}
  \end{subfigure}\\
  \begin{subfigure}[t]{0.49\textwidth}
    \centering
    \includegraphics[width=\linewidth]{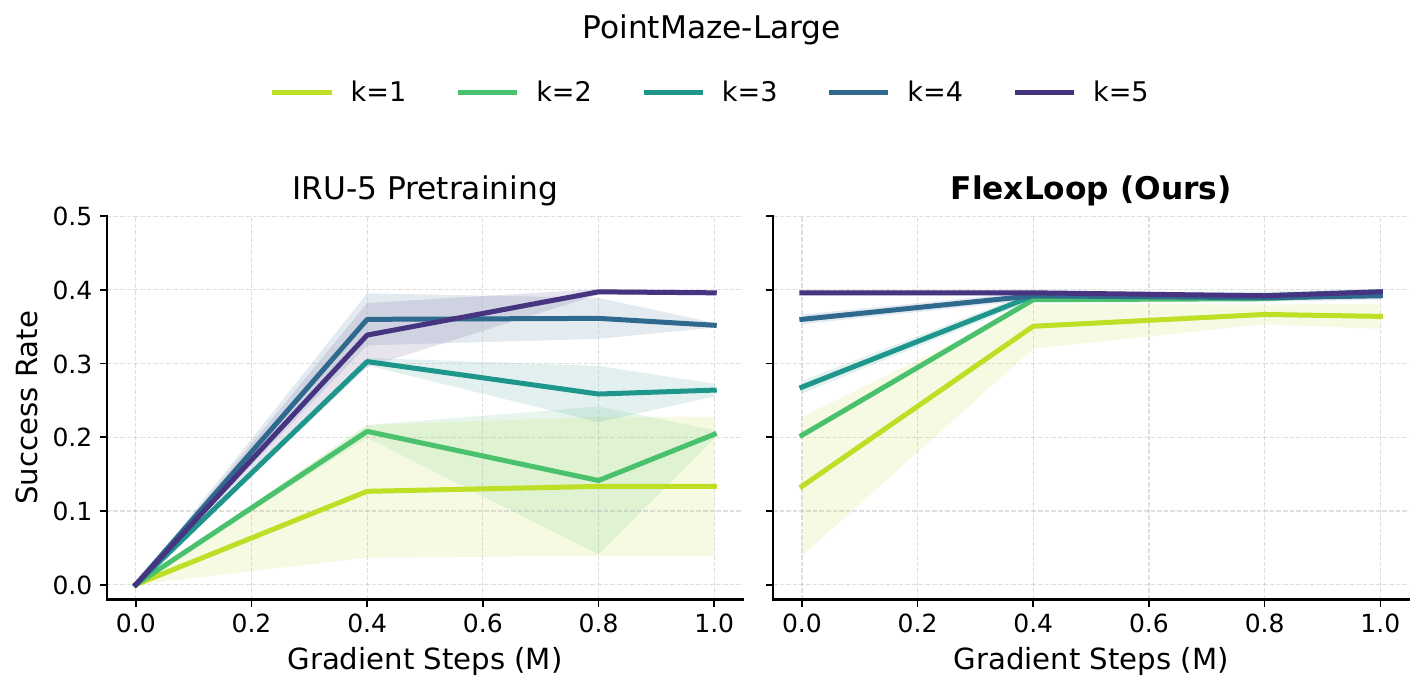}
  \end{subfigure}
  \hfill
  \begin{subfigure}[t]{0.49\textwidth}
    \centering
    \includegraphics[width=\linewidth]{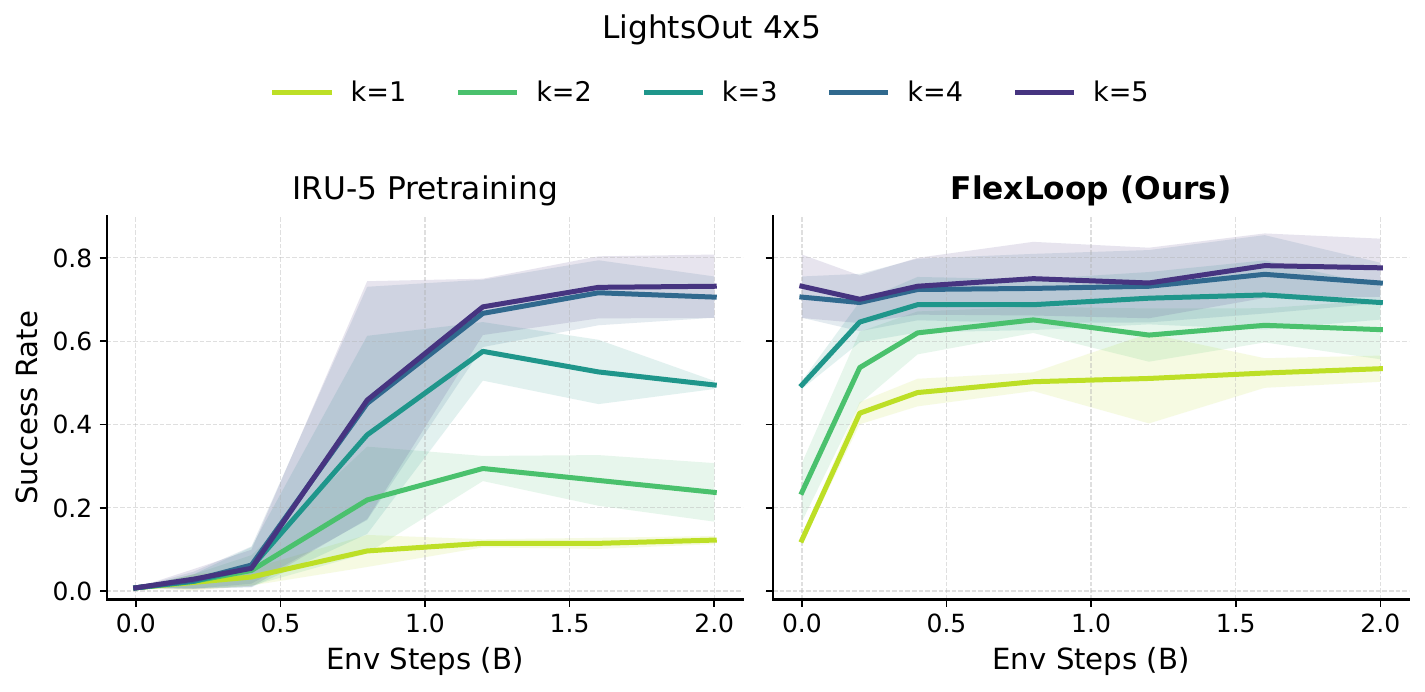}
  \end{subfigure}
  \caption{Depth-wise decision performance during IRU pretraining and FlexLoop post-training. Solid lines denote mean success rates and shading indicates one standard deviation. FlexLoop rapidly improves shallow-depth performance while preserving the full-depth capability of pretrained policies.}
  \label{fig:res_main_full}
\end{figure}

\newpage\clearpage
\subsection{Elasticity Curves}\label{app:elasticity}

We evaluate the depth-wise decision performance of the pretrained IRU-$5$ checkpoint and the corresponding FlexLoop post-trained checkpoint at each recurrent depth, and visualize the resulting depth-performance curves for comparison. Figure~\ref{fig:res_elastic} presents the curves on training tasks, while Figure~\ref{fig:res_elastic_test} presents the corresponding results on unseen tasks. These curves provide a direct characterization of the depth elasticity learned by FlexLoop.
\begin{figure}[!ht]
  \centering
  \begin{subfigure}[t]{0.24\textwidth}
    \centering
    \includegraphics[width=\linewidth]{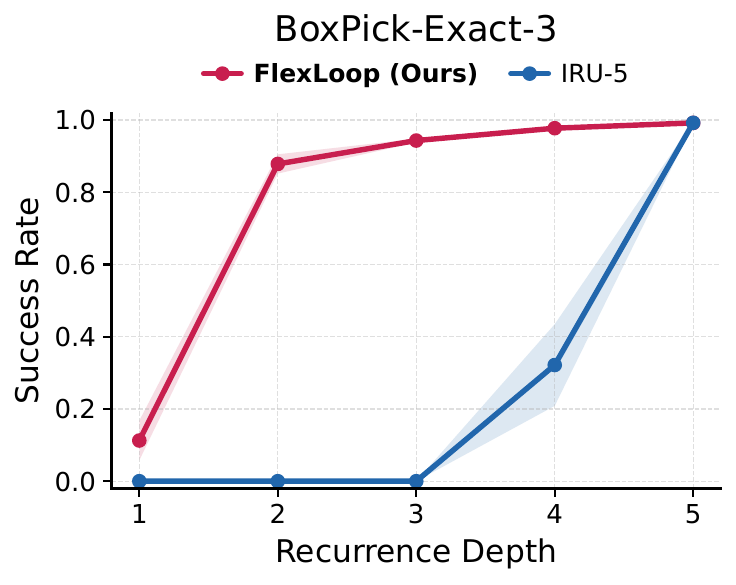}
  \end{subfigure}
  \hfill
  \begin{subfigure}[t]{0.24\textwidth}
    \centering
    \includegraphics[width=\linewidth]{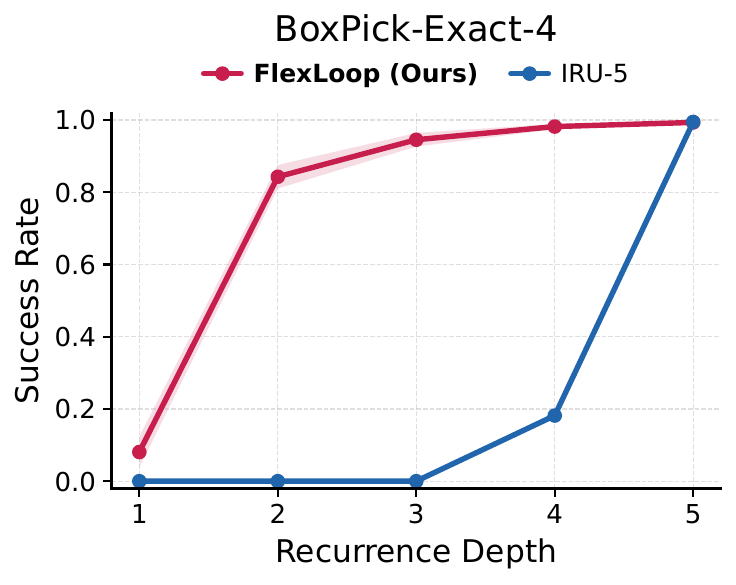}
  \end{subfigure}
  \hfill
  \begin{subfigure}[t]{0.24\textwidth}
    \centering
    \includegraphics[width=\linewidth]{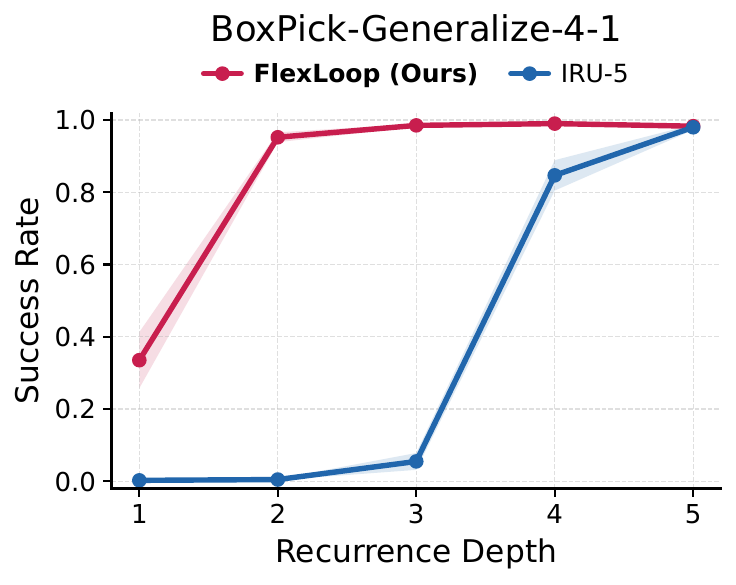}
  \end{subfigure}
  \hfill
  \begin{subfigure}[t]{0.24\textwidth}
    \centering
    \includegraphics[width=\linewidth]{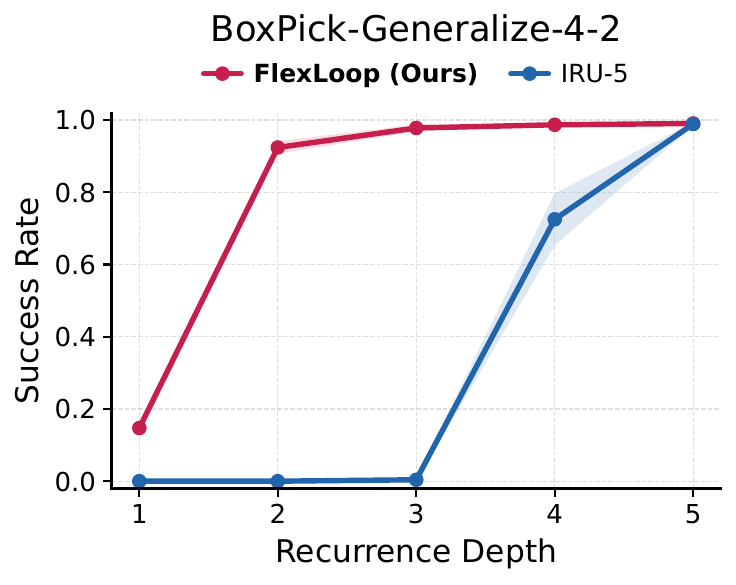}
  \end{subfigure}\\
  \begin{subfigure}[t]{0.24\textwidth}
    \centering
    \includegraphics[width=\linewidth]{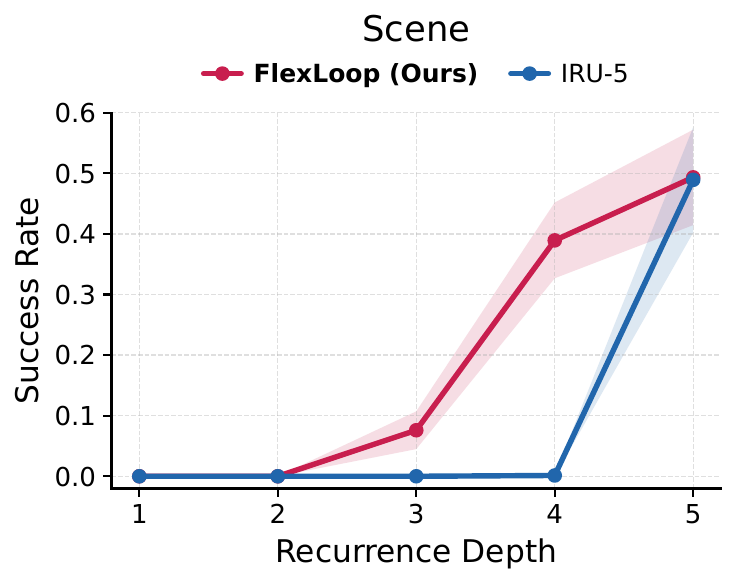}
  \end{subfigure}
  \hfill
  \begin{subfigure}[t]{0.24\textwidth}
    \centering
    \includegraphics[width=\linewidth]{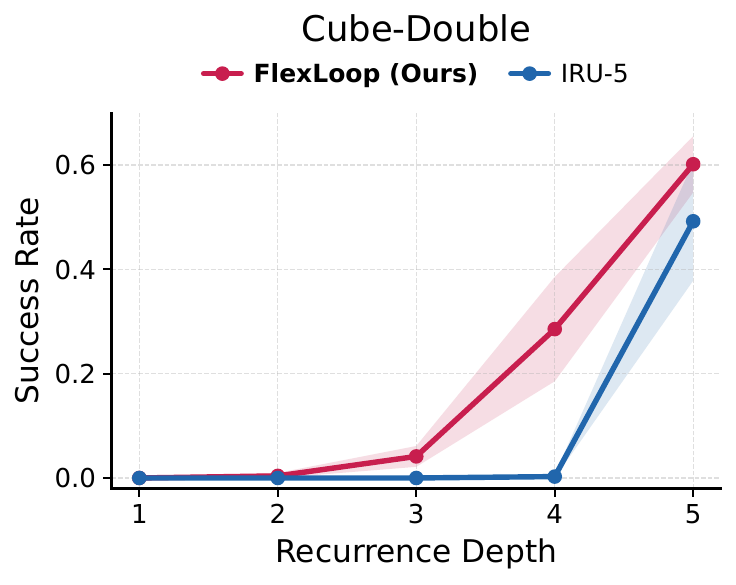}
  \end{subfigure}
  \hfill
  \begin{subfigure}[t]{0.24\textwidth}
    \centering
    \includegraphics[width=\linewidth]{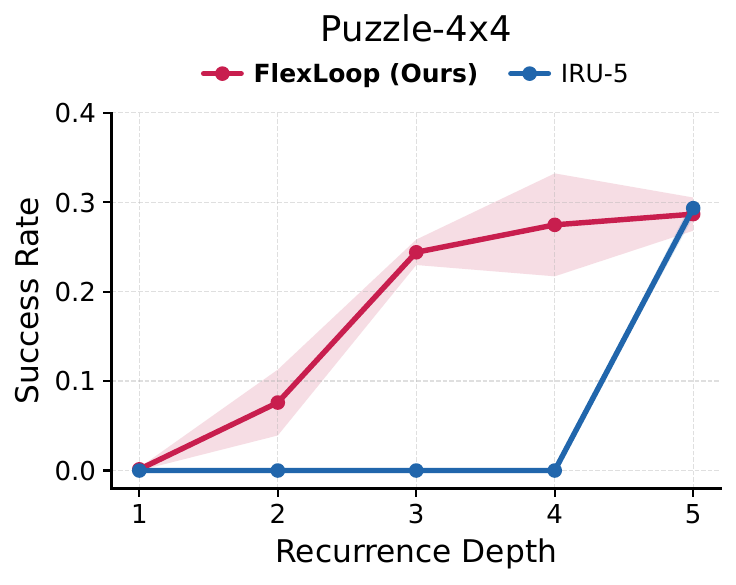}
  \end{subfigure}
  \hfill
  \begin{subfigure}[t]{0.24\textwidth}
    \centering
    \includegraphics[width=\linewidth]{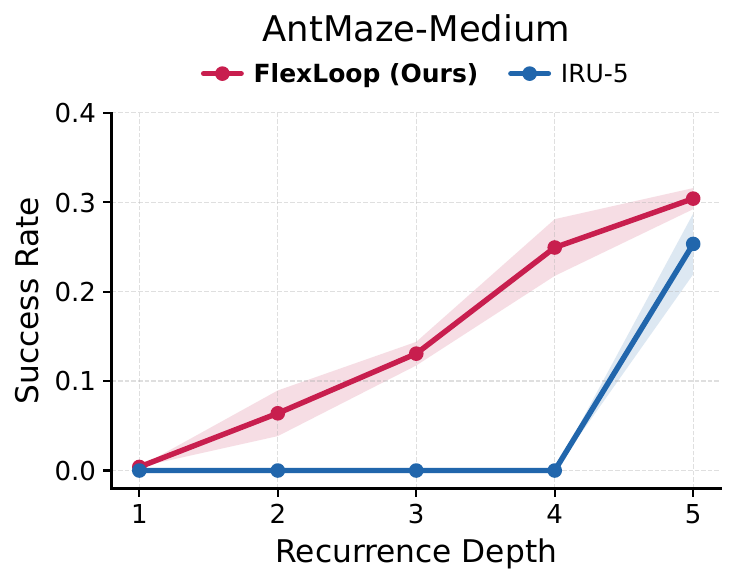}
  \end{subfigure}\\
  \begin{subfigure}[t]{0.24\textwidth}
    \centering
    \includegraphics[width=\linewidth]{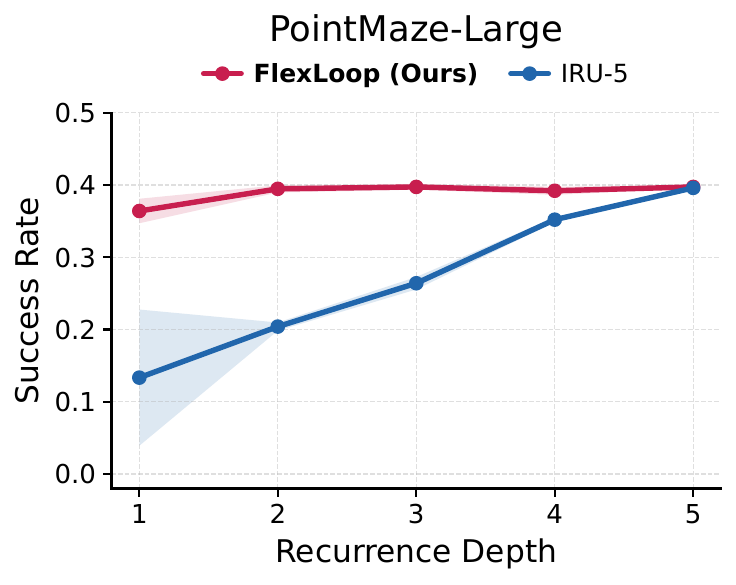}
  \end{subfigure}
  \hspace{0.2\linewidth}
  \begin{subfigure}[t]{0.24\textwidth}
    \centering
    \includegraphics[width=\linewidth]{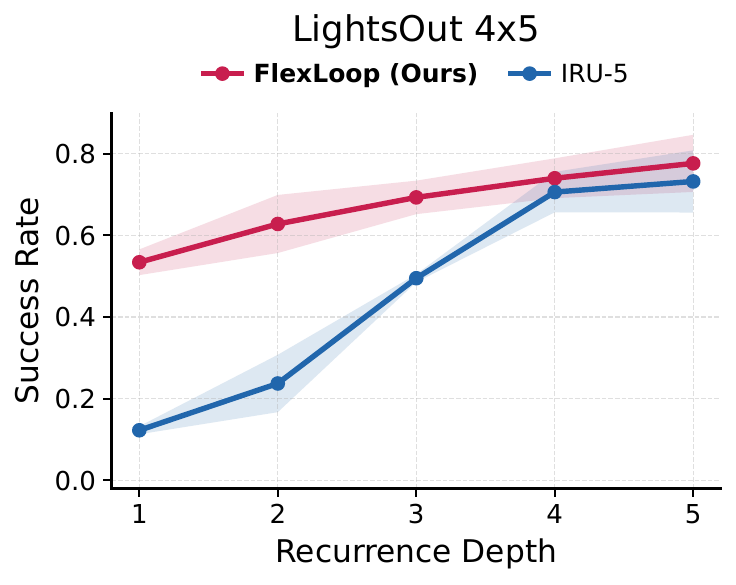}
  \end{subfigure}
  \caption{
    Depth-wise decision performance on training tasks before and after FlexLoop post-training. Markers denote mean success rates and shading indicates one standard deviation.}
  \label{fig:res_elastic}
\end{figure}

\begin{figure}[!ht]
  \centering
  \begin{subfigure}[t]{0.32\textwidth}
    \centering
    \includegraphics[width=\linewidth]{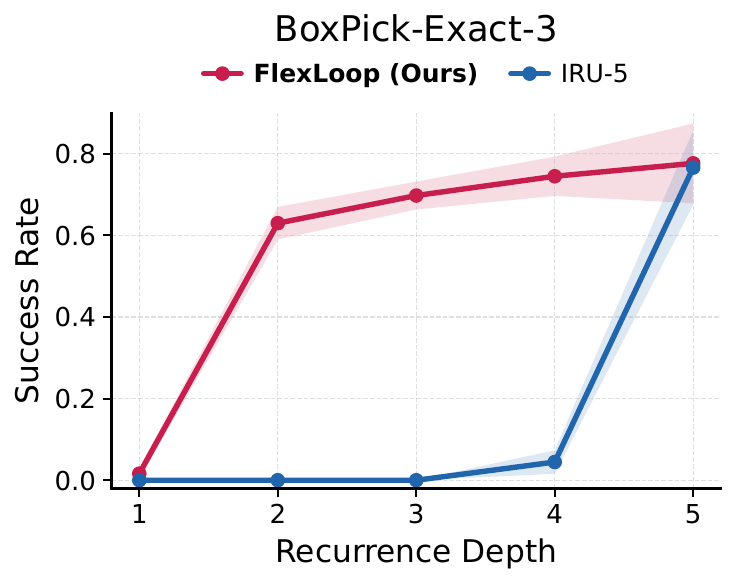}
  \end{subfigure}
  \hfill
  \begin{subfigure}[t]{0.32\textwidth}
    \centering
    \includegraphics[width=\linewidth]{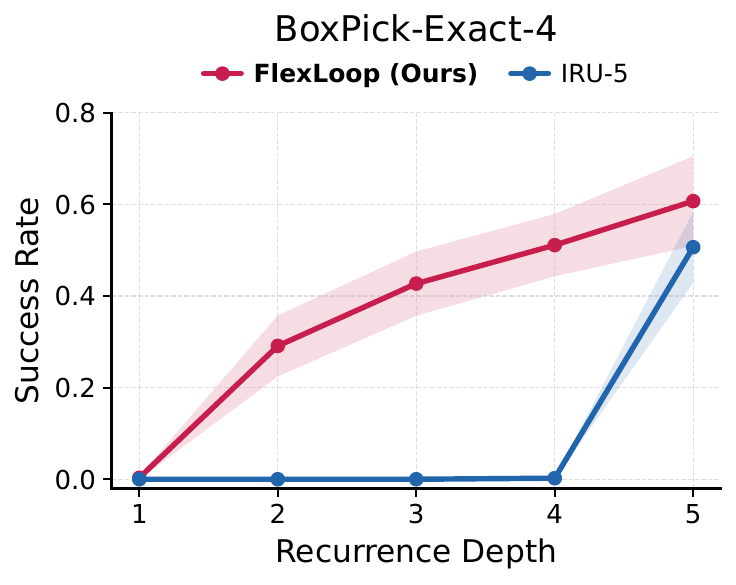}
  \end{subfigure}
  \hfill
  \begin{subfigure}[t]{0.32\textwidth}
    \centering
    \includegraphics[width=\linewidth]{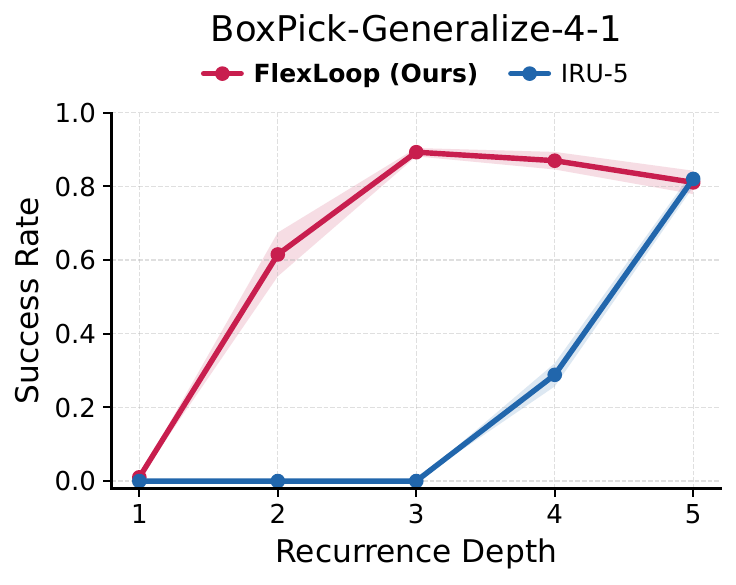}
  \end{subfigure}\\
  \begin{subfigure}[t]{0.32\textwidth}
    \centering
    \includegraphics[width=\linewidth]{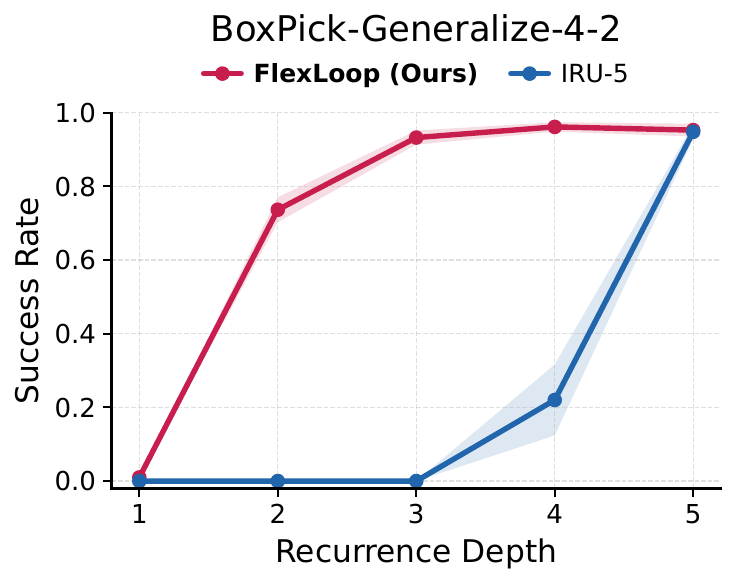}
  \end{subfigure}
  \hspace{.15\linewidth}
  \begin{subfigure}[t]{0.32\textwidth}
    \centering
    \includegraphics[width=\linewidth]{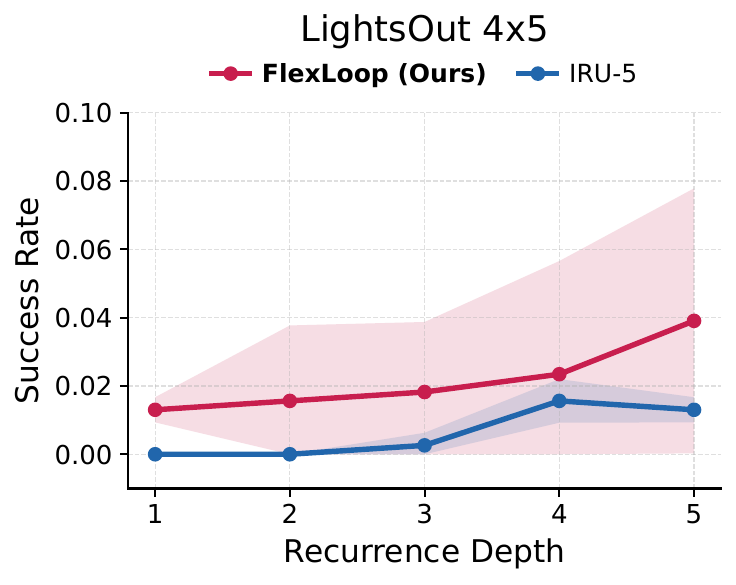}
  \end{subfigure}
  \caption{
    Depth-wise decision performance on unseen tasks before and after FlexLoop post-training. Markers denote mean success rates and shading indicates one standard deviation.}
  \label{fig:res_elastic_test}
\end{figure}

\newpage\clearpage
\subsection{Full results for Table \ref{tab:adaptive_results}}\label{app:adaptive}
\begin{table}[!ht]
\centering
\caption{Full state-wise adaptive inference results on all environments. $\pm$ indicates one standard deviation.}
\label{tab:adaptive_results_full}
\small
\begin{tabular}{lcccc}
\toprule
\multirow[c]{2}{*}{\textbf{Task}} & \multicolumn{2}{c}{\textbf{FlexLoop}} & \textbf{IRU-5}  \\
\cmidrule(lr){2-3}\cmidrule(lr){4-4}
 & Avg. Depth & Succ. & Succ. \\
\midrule
Scene      & 4.66 $\pm$ 0.06 & 0.496 $\pm$ 0.064 & 0.489$\pm$0.087 \\
Cube       & 4.75 $\pm$ 0.03 & 0.569 $\pm$ 0.056 & 0.492$\pm$0.114\\
Puzzle     & 3.45 $\pm$ 0.03 & 0.296 $\pm$ 0.029 & 0.293$\pm$0.01\\
AntMaze    & 4.80 $\pm$ 0.03 & 0.292 $\pm$ 0.059 & 0.253$\pm$0.034 \\
PointMaze  & 3.28 $\pm$ 0.30 & 0.395 $\pm$ 0.004 & 0.396$\pm$0.003 \\
LightsOut-4x5 (Train) & 2.84 $\pm$ 0.08 & 0.747 $\pm$ 0.081 & 0.732 $\pm$ 0.076 \\
LightsOut-4x5 (Unseen) & 2.93 $\pm$ 0.21 & 0.053 $\pm$ 0.031 & 0.013 $\pm$ 0.004 \\
BoxPick-Exact-3(Train)&	4.47 $\pm$ 0.02& 0.992 $\pm$ 0.004& 0.992 $\pm$ 0.003\\
BoxPick-Exact-3(Unseen) &4.45 $\pm$ 0.05& 0.768 $\pm$ 0.101&0.766 $\pm$ 0.091 \\
BoxPick-Exact-4(Train)&4.21 $\pm$ 0.04& 0.993 $\pm$ 0.002&0.994$\pm$0.003\\
BoxPick-Exact-4(Unseen)&4.25 $\pm$ 0.04& 0.567 $\pm$ 0.098&0.506$\pm$0.077\\
BoxPick-Gen-4-1(Train)&4.68 $\pm$ 0.03&0.982 $\pm$ 0.007&0.98 $\pm$ 0.007\\
BoxPick-Gen-4-1(Unseen)&4.57 $\pm$ 0.02& 0.82 $\pm$ 0.033&0.82$\pm$0.022\\
BoxPick-Gen-4-2(Train)&4.66 $\pm$ 0.01& 0.991 $\pm$ 0.004&0.989$\pm$0.004\\
BoxPick-Gen-4-2(Unseen)&4.57 $\pm$ 0.01& 0.956 $\pm$ 0.017&0.948$\pm$0.019\\
\bottomrule
\multicolumn{4}{l}{-Gen means -Generalize}
\end{tabular}
\end{table}
\newpage
\subsection{Pareto Curve}\label{app:pareto}
Each Pareto curve is obtained by sweeping the stopping threshold $\varepsilon$ in \cref{eq:adaptive}, where each threshold corresponds to one computation--performance point. The $\varepsilon$ grids are selected according to the characteristics of different environment families. For BoxPick, we use a 7-point grid $\{0,0.005,0.01,0.02,0.03,0.05,0.1\}$. For LightsOut, we use an 11-point uniformly spaced grid over $[0,0.05]$. For OGBench, we use an 11-point uniformly spaced grid for all environments: manipulation tasks (Scene, Cube, and Puzzle) use a finer grid over $[0, 0.005]$, while navigation tasks (AntMaze and PointMaze) use a grid over $[0, 0.05]$. 
\begin{figure}[!ht]
  \centering
  \begin{subfigure}[t]{\linewidth}
    \centering
    \includegraphics[width=0.19\linewidth]{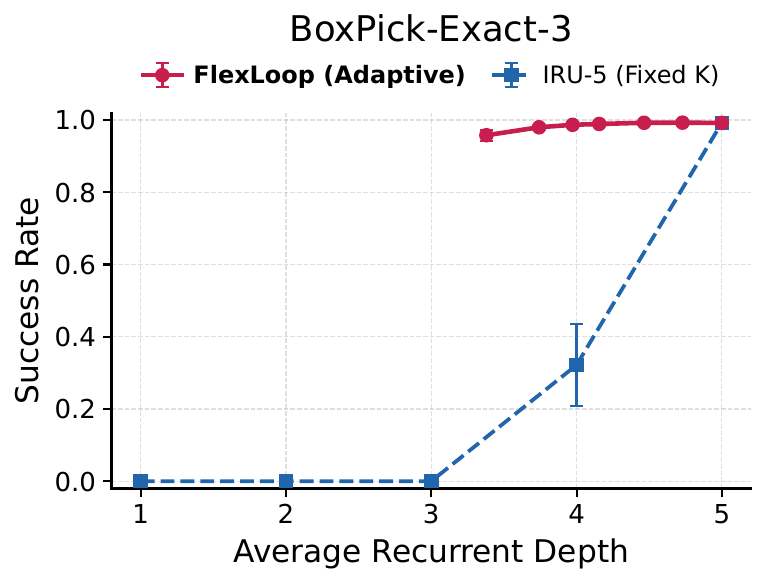}
  \hfill
    \includegraphics[width=0.19\linewidth]{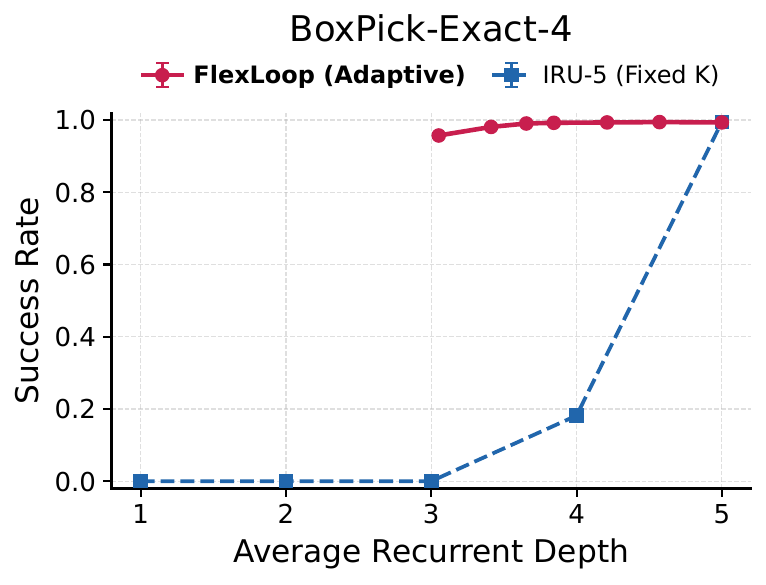}
  \hfill
    \includegraphics[width=0.19\linewidth]{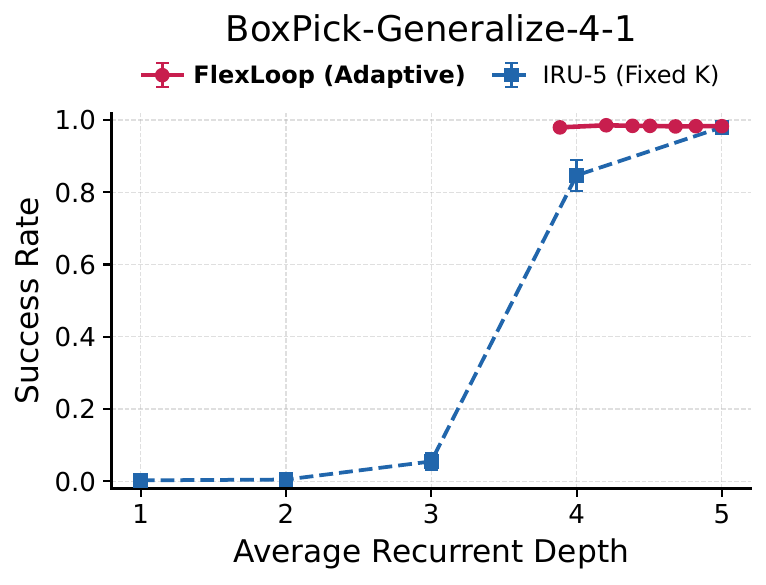}
  \hfill
    \includegraphics[width=0.19\linewidth]{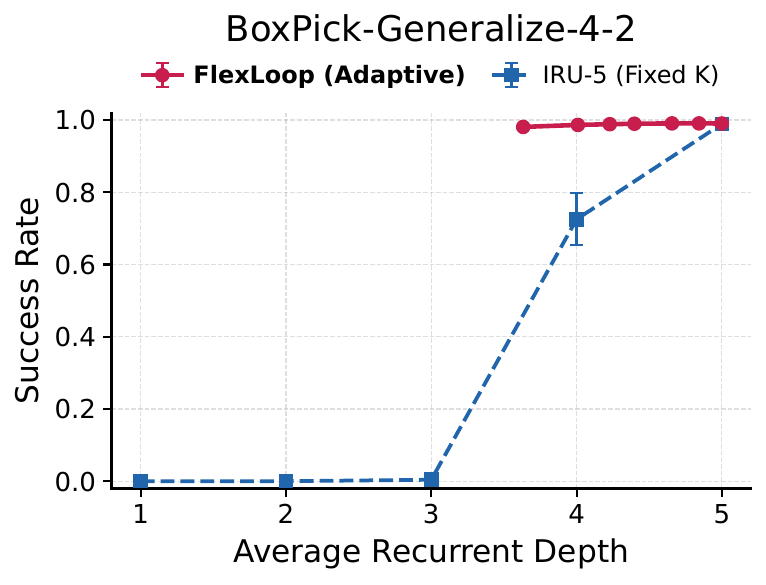}
  \hfill
    \includegraphics[width=0.19\linewidth]{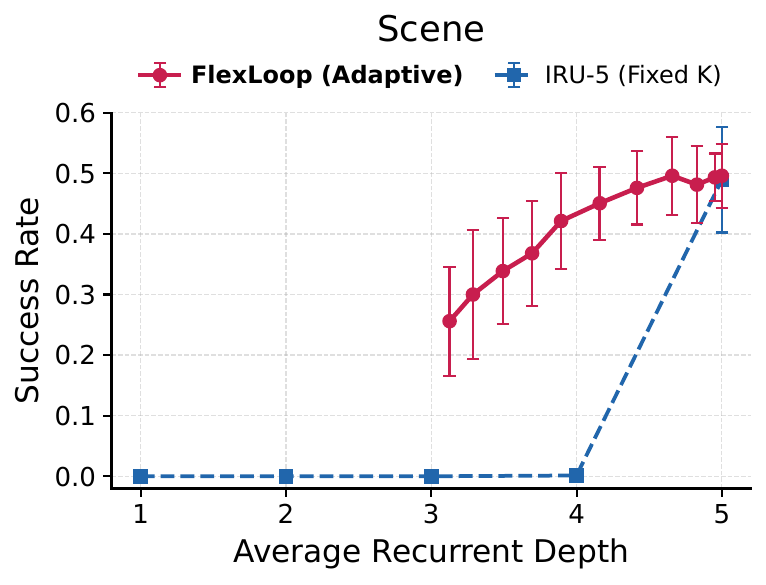}\\
\includegraphics[width=0.19\linewidth]{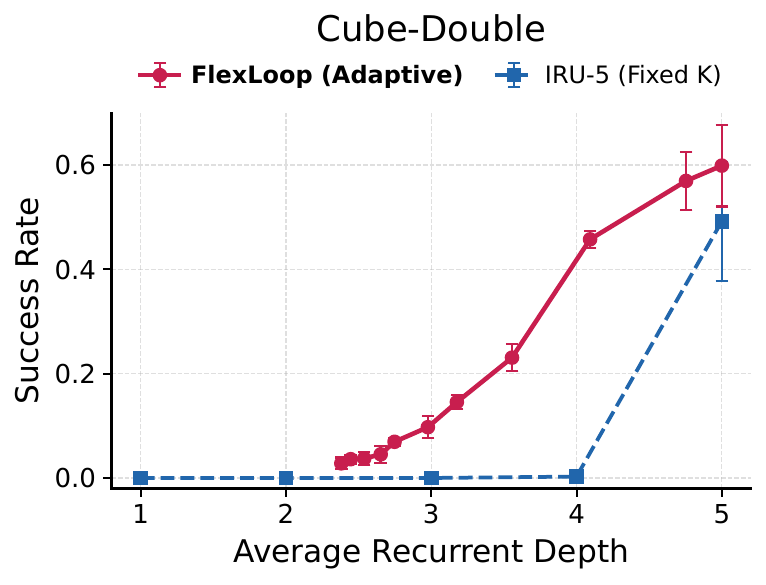}
  \hfill
    \includegraphics[width=0.19\linewidth]{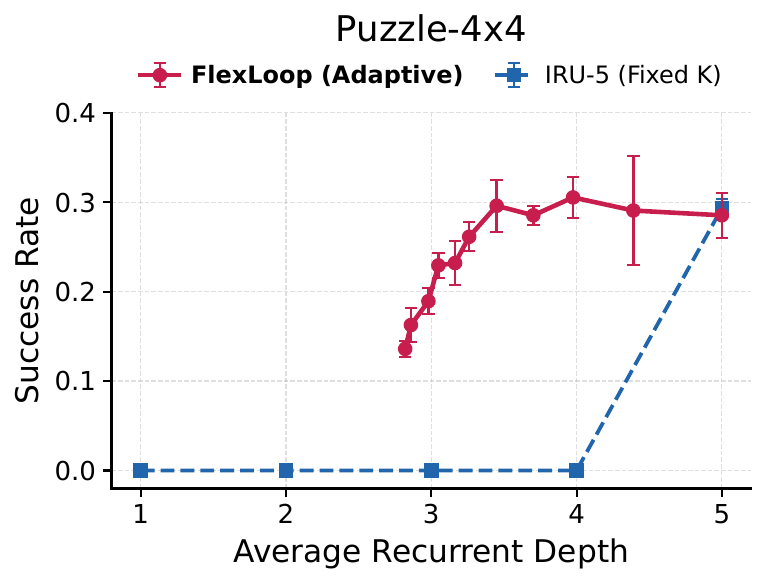}
  \hfill
    \includegraphics[width=0.19\linewidth]{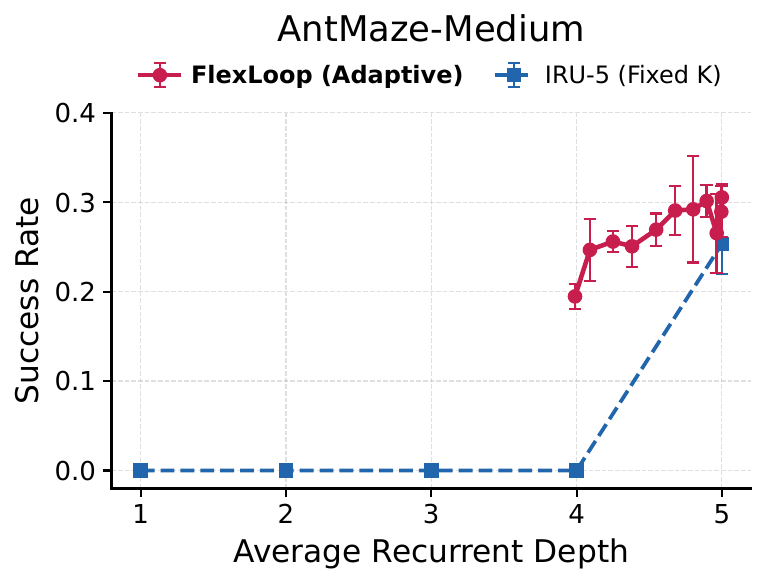}
  \hfill
    \includegraphics[width=0.19\linewidth]{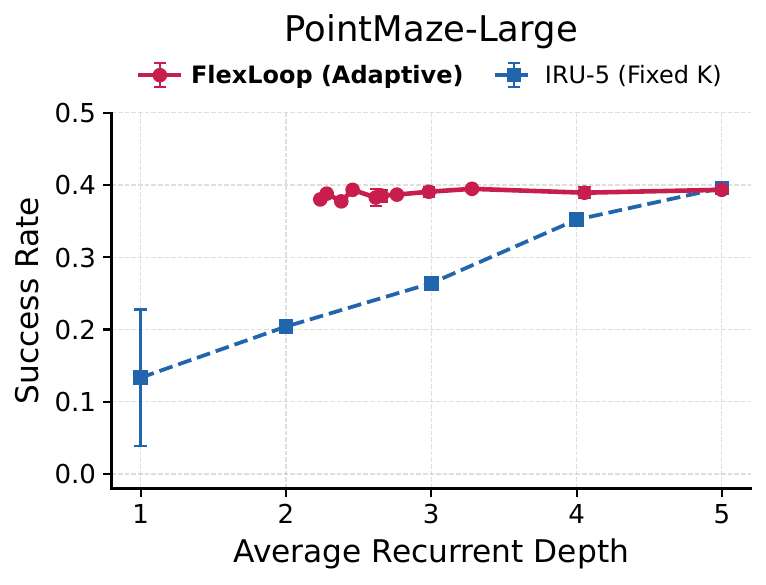}
  \hfill
    \includegraphics[width=0.19\linewidth]{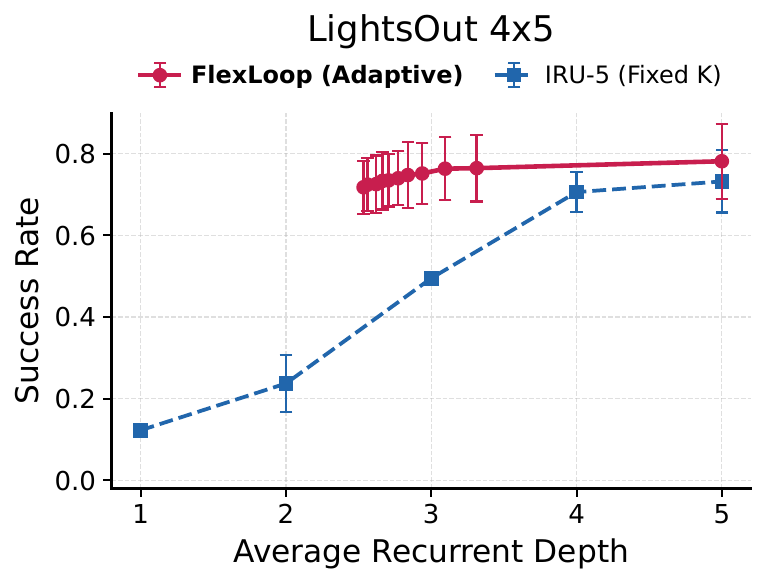}
  \caption{Pareto trade-offs on training tasks.}\label{fig:pareto_train}
  \end{subfigure}
  \\
  \vspace{6pt}
  \begin{subfigure}[t]{\linewidth}
    \includegraphics[width=0.19\linewidth]{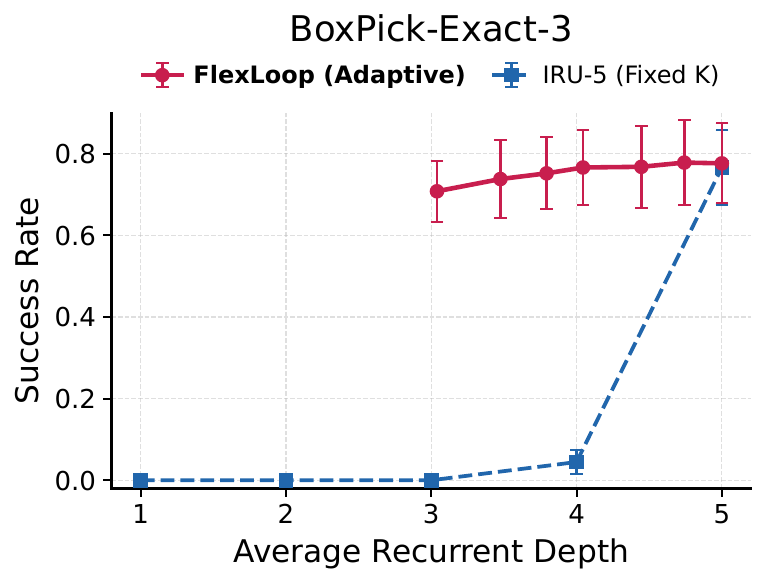}
  \hfill
    \includegraphics[width=0.19\linewidth]{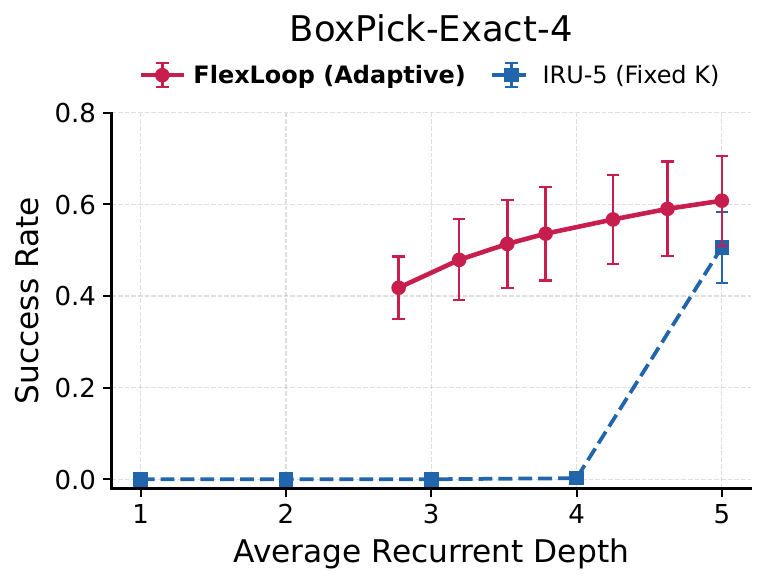}
  \hfill
    \includegraphics[width=0.19\linewidth]{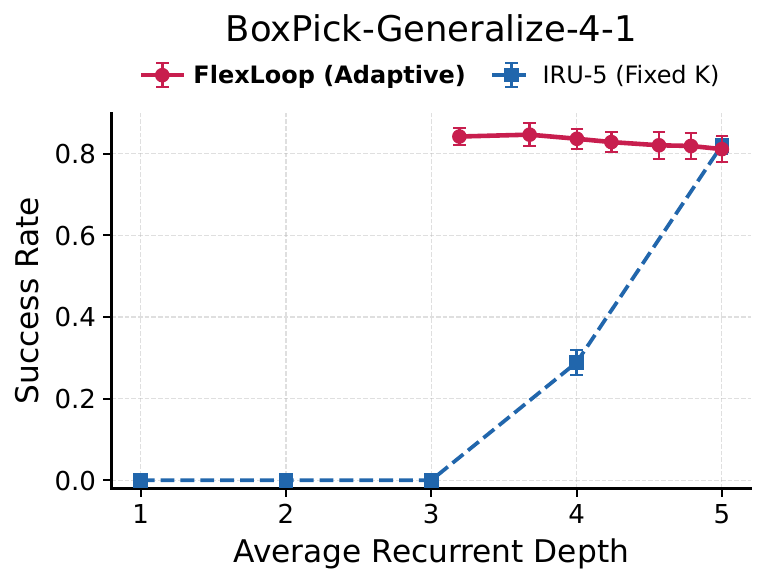}
  \hfill
    \includegraphics[width=0.19\linewidth]{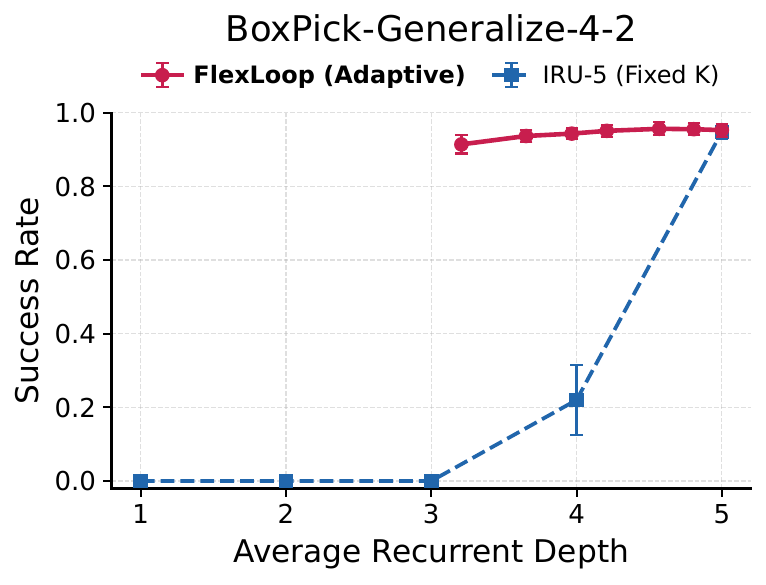}
  \hfill
    \includegraphics[width=0.19\linewidth]{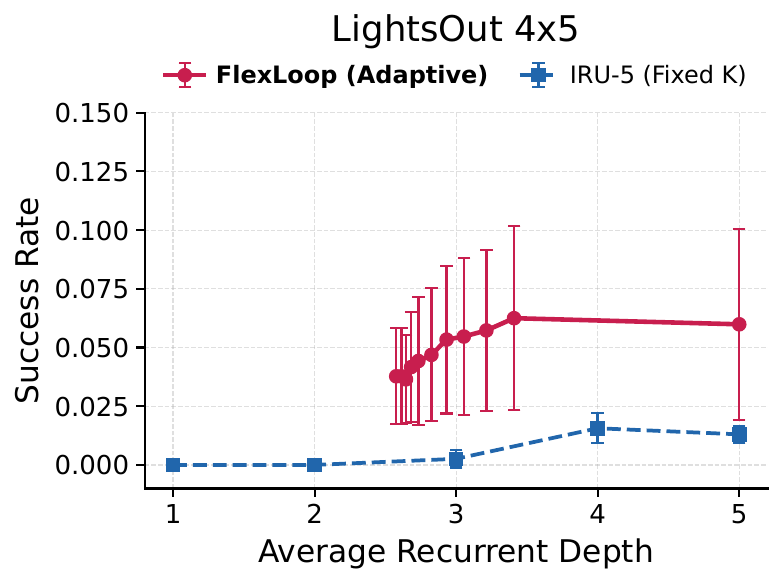}
\caption{Pareto trade-offs on unseen tasks.}\label{fig:pareto_test}
  \end{subfigure}
  
  \caption{
    Pareto trade-offs between task performance and average recurrent depth under different adaptive inference thresholds. Markers denote mean success rates and error bars indicate one standard deviation.}
  \label{fig:res_pareto}
\end{figure}
\newpage
\subsection{Visualized Trajectory Segments of Scene}\label{app:adaptive_vis}

Figure \ref{fig:vis_frame} visualizes selected trajectory segments of FlexLoop in the Scene environment to illustrate its state-wise adaptive inference behavior. FlexLoop uses shallower recurrent depths for relatively simple and monotonic actions, such as lowering the robotic arm to press a button or rotating the gripper orientation, while allocating deeper computation to more critical decision points, such as switching goals after pressing the button and moving toward the drawer.
\begin{figure}[!ht]
  \centering
  \begin{subfigure}[t]{\linewidth}
    \centering
    \includegraphics[width=\linewidth]{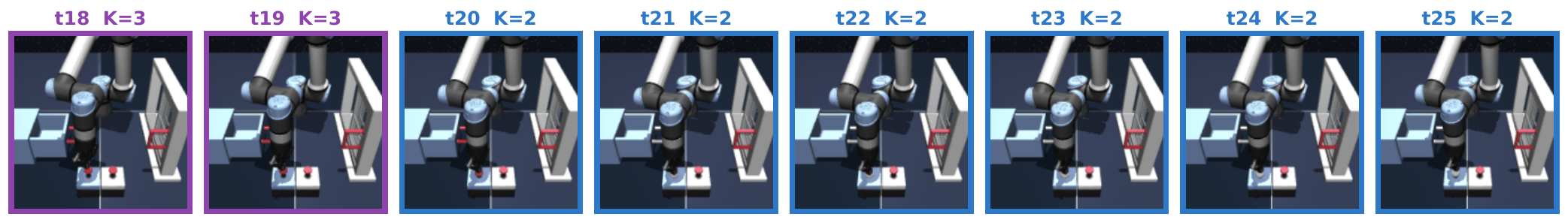}
  \caption{Lowering the arm to press the button.}\label{fig:press}
  \end{subfigure}
  \begin{subfigure}[t]{\linewidth}
    \centering
    \includegraphics[width=\linewidth]{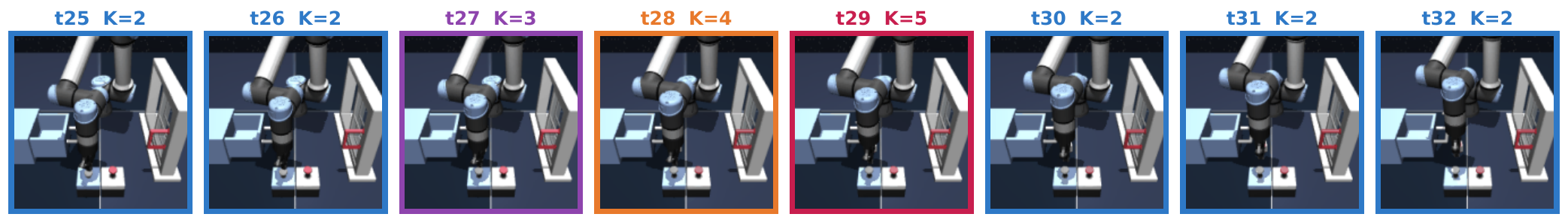}
  \caption{Transitioning from button pressing to moving toward the drawer.}\label{fig:boundary}
  \end{subfigure}
  \begin{subfigure}[t]{\linewidth}
    \centering
    \includegraphics[width=\linewidth]{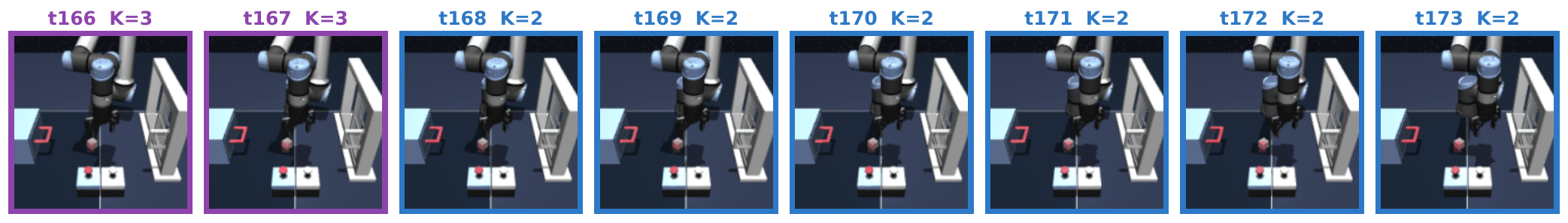}
  \caption{Rotating the gripper orientation.}\label{fig:turn}
  \end{subfigure}
  \caption{
    Selected trajectory segments from a single episode in the Scene environment. FlexLoop adaptively uses fewer recurrent steps for simple, monotonic motions and increases computation at challenging decision points, such as goal switching.}
  \label{fig:vis_frame}
\end{figure}

% \newpage
\section{Additional Ablations}\label{app:ex_ablation}

\subsection{FlexLoop from Scratch}\label{app:scratch}
\begin{figure}[!t]
  \centering
  \begin{subfigure}[t]{.49\linewidth}
    \centering
    \includegraphics[width=\linewidth]{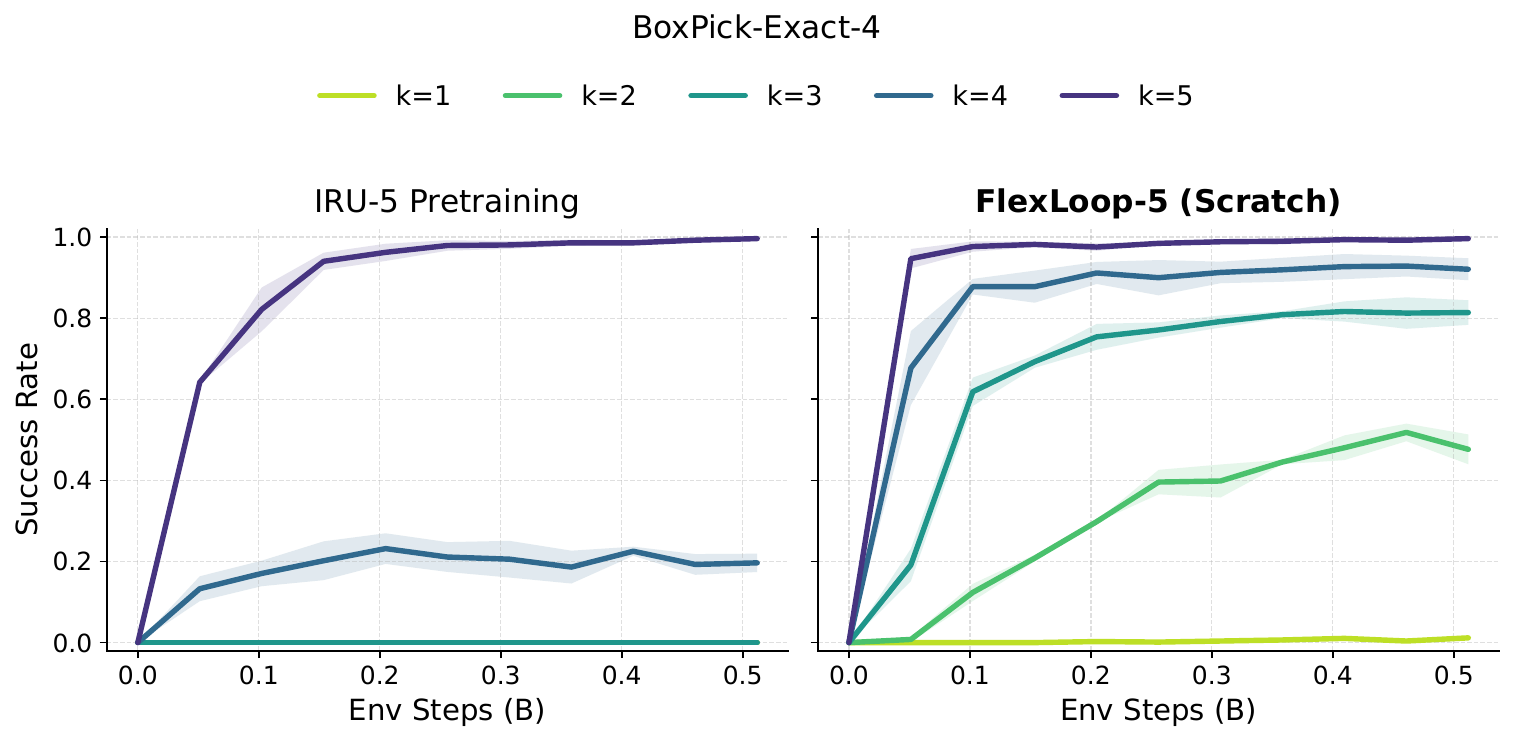}
  \caption{Training Tasks.}\label{fig:scratch_train}
  \end{subfigure}
  \begin{subfigure}[t]{.49\linewidth}
    \centering
    \includegraphics[width=\linewidth]{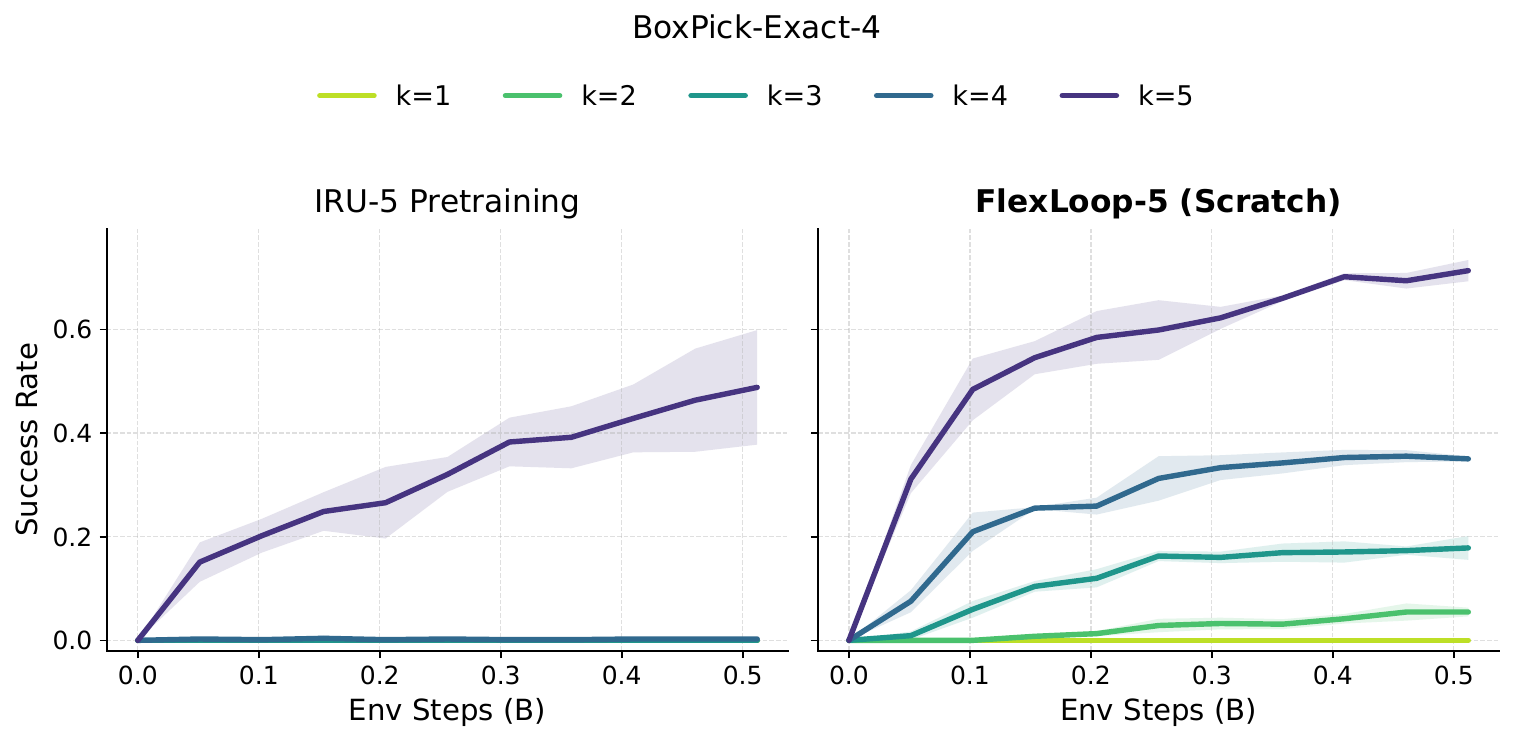}
  \caption{Unseen Tasks.}\label{fig:scratch_ood}
  \end{subfigure}
  \caption{
    Depth-wise performance evolution of pretrained IRU-5 and FlexLoop from scratch on both training and unseen tasks. Solid lines denote mean success rates and shading indicates one standard deviation.}
  \label{fig:scratch}
\end{figure}

In this section, we further consider a more restrictive setting where the pretrained looped policy weights are unavailable and only its outputs can be queried. We ask whether FlexLoop can still acquire depth elasticity from random initialization while recovering the teacher's full-depth capability. 

We instantiate this setting on BoxPick-Exact-4 with GC-DQN. Since the scratch policy does not inherit the pretrained weights, we make two modifications to FlexLoop.
First, at the full depth, we replace the standard TD target with the value estimate queried from the pretrained looped policy:
\begin{equation}
    \mathcal{L}_{\mathrm{TD}}^{T}
    =
    \mathbb{E}_{\mathcal D}
    \left[
        \left(
            Q_K^\theta(s,g,a)
            -
            \operatorname{sg}[y^{T}]
        \right)^2
    \right], \qquad 
    y^{T}
    =
    r+\gamma(1-d)
    \max_{a'}Q^{T}(s',g,a').
\end{equation}
Second, we introduce an inter-depth Bellman refinement objective for shallow recurrent steps. Using the FlexLoop's target network $\bar{\theta}$, we define
\begin{equation}
    y_k
    =
    r+\gamma(1-d)
    \max_{a'}Q_k^{\bar{\theta}}(s',g,a'),
\end{equation}
and optimize
\begin{equation}
    \mathcal{L}_{\mathrm{aux}}^{\mathrm{DQN}}
    =
    \frac{1}{K-2}
    \sum_{k=1}^{K-2}
    \mathbb{E}_{\mathcal D}
    \left[
        \rho\!\left(
            Q_{k+1}^{\theta}(s,g,a)
            -
            \operatorname{sg}[y_k]
        \right)
    \right].
\end{equation}
We set the weight of $\mathcal{L}_{\mathrm{aux}}^{\mathrm{DQN}}$ to $0.1$.

Notably, these modifications are introduced only to compensate for the absence of pretrained initialization. We observe no additional benefit in the teacher-initialized setting.

As shown in Figures~\ref{fig:scratch} and \ref{fig:scratch_more}, FlexLoop from scratch rapidly develops depth elasticity and recovers strong full-depth performance, while still supporting effective state-wise adaptive inference. 
Together, these results broaden the applicability of FlexLoop to settings where only output-level access to a pretrained policy is available.

% Finally, we introduce a shallow Bellman value-iteration objective that encourages each additional recurrent step to perform a locally Bellman-consistent value refinement. Empirically, this auxiliary objective further improves optimization speed and out-of-distribution generalization.
% Using the target network $\bar{\theta}$, we define
% \begin{equation}
%     y_k
%     =
%     r+\gamma(1-d)
%     \max_{a'}Q_k^{\bar{\theta}}(s',g,a'),
% \end{equation}
% and apply
% \begin{equation}
%     \mathcal{L}_{\mathrm{aux}}^{\mathrm{DQN}}
%     =
%     \frac{1}{K-2}
%     \sum_{k=1}^{K-2}
%     \mathbb{E}_{\mathcal D}
%     \left[
%         \rho\!\left(
%             Q_{k+1}^{\theta}(s,g,a)
%             -
%             \operatorname{sg}[y_k]
%         \right)
%     \right].
% \end{equation}
% This auxiliary objective is restricted to shallow recurrent steps, leaving
% the full-depth value $Q_K$ anchored exclusively by the frozen teacher.

\subsection{FlexLoop on Policies with Different Pretraining Depths}\label{app:kvary}
We further apply FlexLoop to IRU policies pretrained with different recurrent depths, $K\in\{4,5,8,10\}$.   Figure~\ref{fig:kvary} shows that, across all pretraining depths, FlexLoop substantially improves shallow-depth performance while preserving full-depth capability, with the same trend observed on unseen tasks. The corresponding
Pareto curves further show that state-wise adaptive inference remains effective across different pretraining depths. We note that deeper pretrained policies generally require smaller stopping thresholds $\varepsilon$, suggesting that the appropriate threshold scale depends on the pretrained recurrent depth.
\begin{figure}[!ht]
  \centering
  \begin{subfigure}[t]{.24\linewidth}
    \centering
    \includegraphics[width=\linewidth]{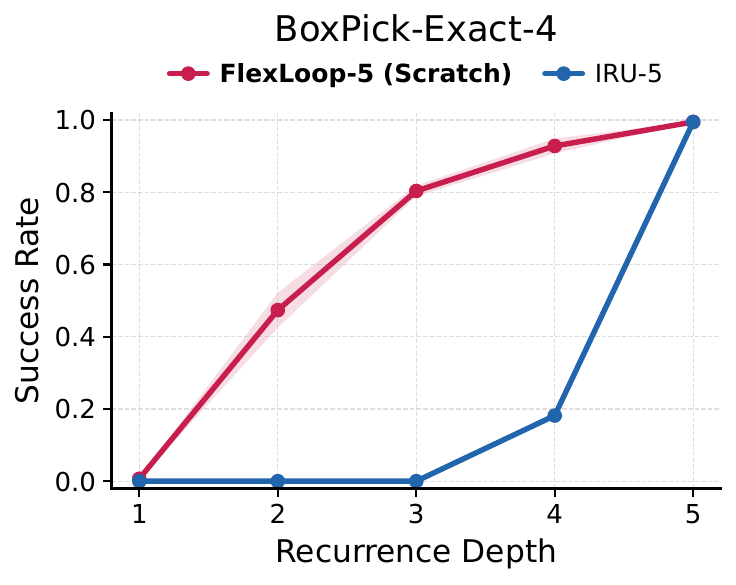}
  \caption{Elasticity (Training).}\label{fig:scratch_elas_train}
  \end{subfigure}
  \begin{subfigure}[t]{.24\linewidth}
    \centering
    \includegraphics[width=\linewidth]{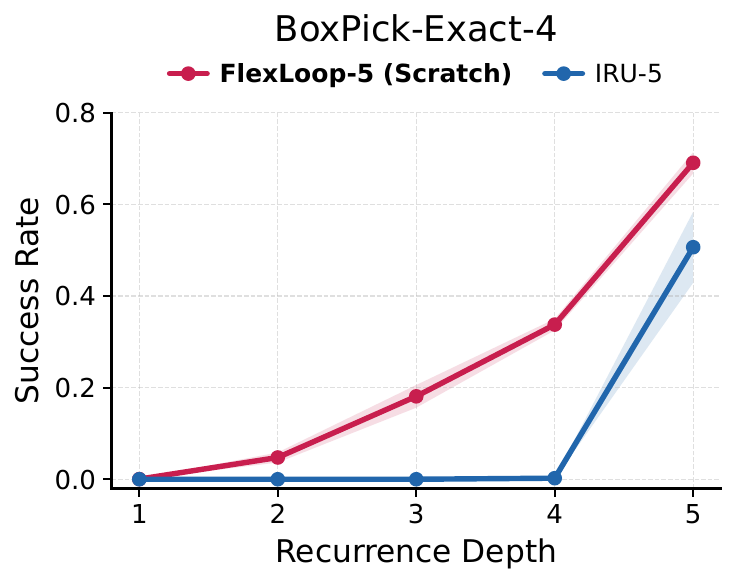}
  \caption{Elasticity (Unseen).}\label{fig:scratch_elas_ood}
  \end{subfigure}
  \begin{subfigure}[t]{.24\linewidth}
    \centering
    \includegraphics[width=\linewidth]{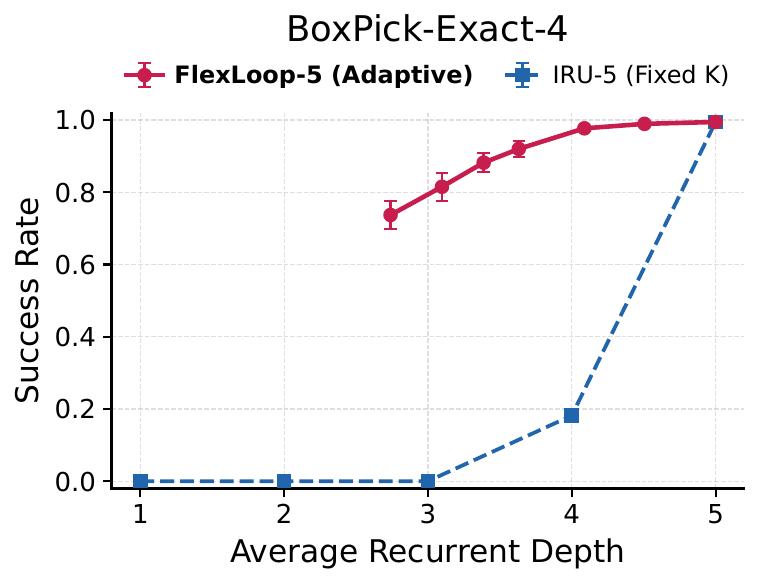}
  \caption{Pareto (Training).}\label{fig:scratch_pareto_train}
  \end{subfigure}
  \begin{subfigure}[t]{.24\linewidth}
    \centering
    \includegraphics[width=\linewidth]{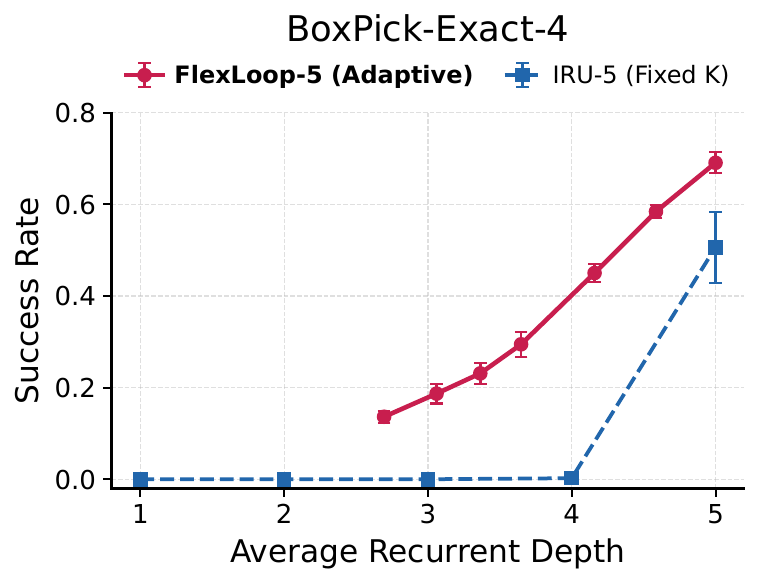}
  \caption{Pareto (Unseen).}\label{fig:scratch_pareto_ood}
  \end{subfigure}
  \caption{
    Elasticity and Pareto curves of pretrained IRU-5 and FlexLoop from scratch on both training and unseen tasks. Markers denote mean success rates and the shading and error bars indicate one standard deviation.}
  \label{fig:scratch_more}
\end{figure}

\begin{figure}[!ht]
  \centering
  \begin{subfigure}[t]{\linewidth}
    \centering
    \includegraphics[width=\linewidth]{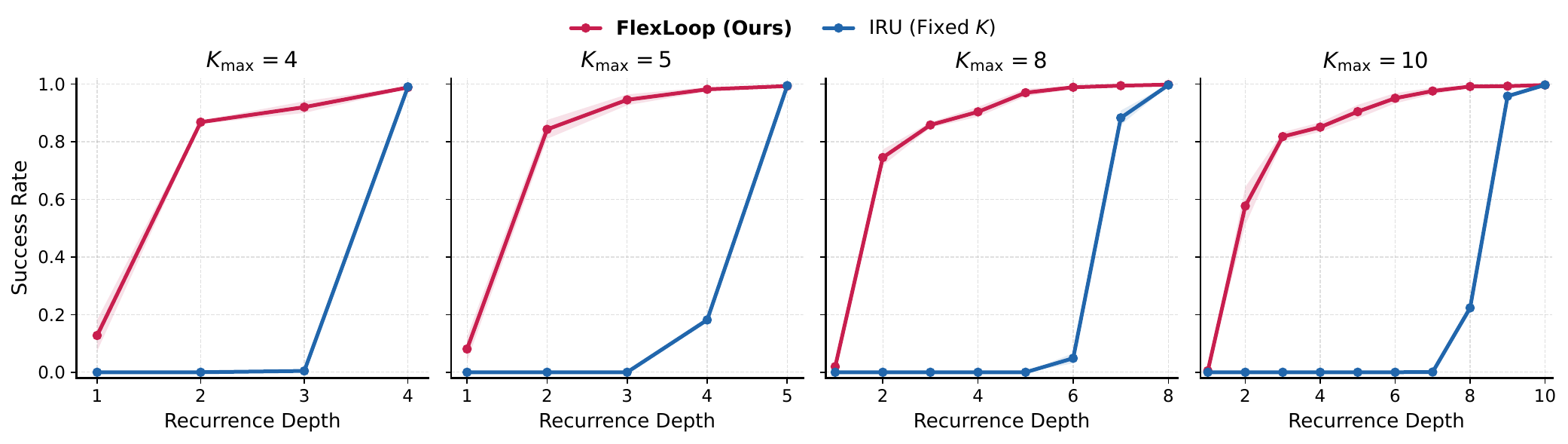}
  \caption{Elasticity on Training Tasks.}\label{fig:kvary_elas_train}
  \end{subfigure}
  \begin{subfigure}[t]{\linewidth}
    \centering
    \includegraphics[width=\linewidth]{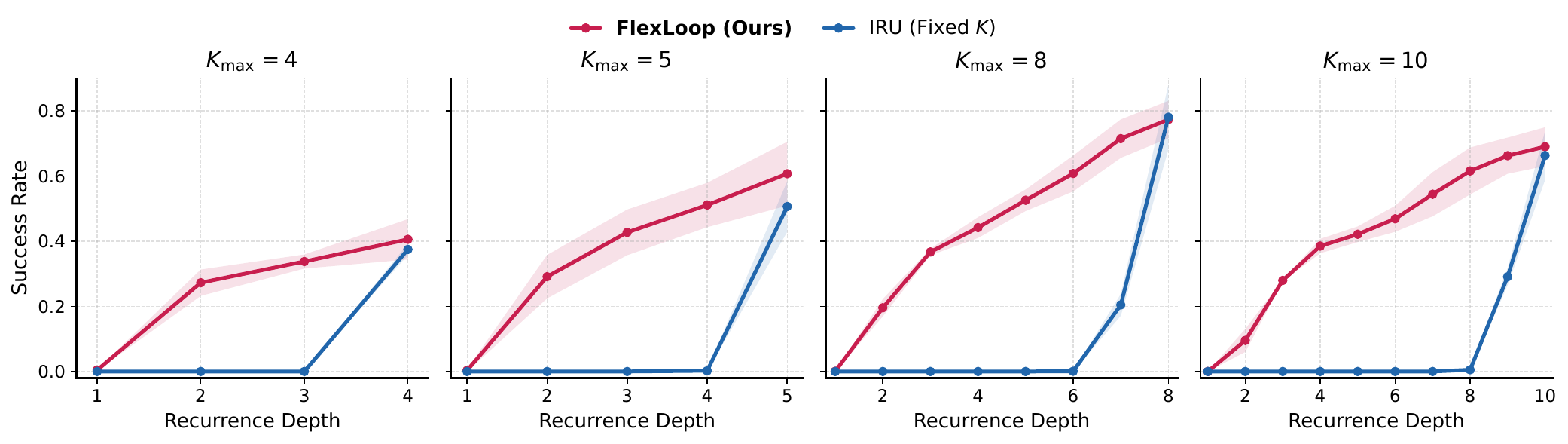}
  \caption{Elasticity on Unseen Tasks.}\label{fig:kvary_elas_ood}
  \end{subfigure}
  \begin{subfigure}[t]{\linewidth}
    \centering
    \includegraphics[width=\linewidth]{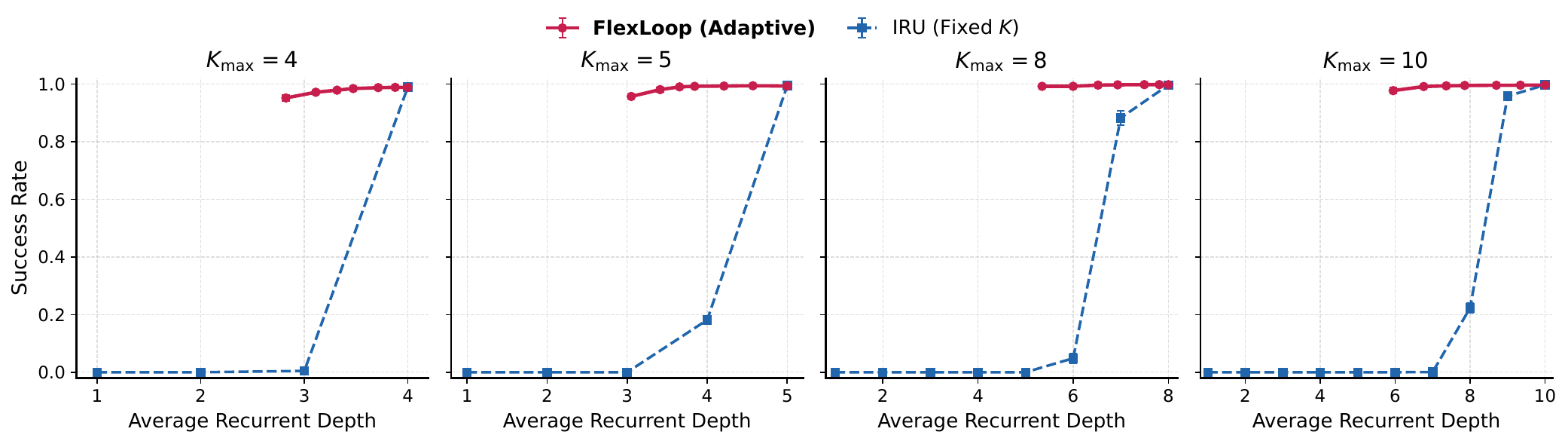}
  \caption{Pareto on Training Tasks.}\label{fig:kvary_pareto_train}
  \end{subfigure}
  \begin{subfigure}[t]{\linewidth}
    \centering
    \includegraphics[width=\linewidth]{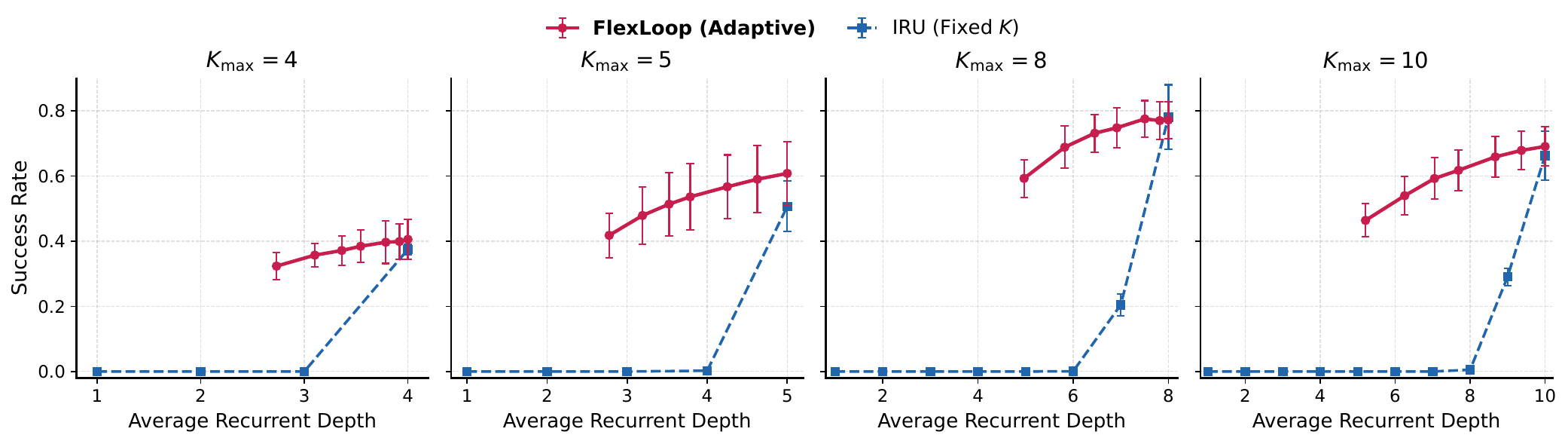}
  \caption{Pareto on Unseen Tasks.}\label{fig:kvary_pareto_ood}
  \end{subfigure}
  \caption{
    Elasticity and Pareto Curves of the policies with different pretraining recurrent depths before and after FlexLoop post-training. Markers denote mean success rates and shading indicates one standard deviation.}
  \label{fig:kvary}
\end{figure}

\end{document}